\providecommand{\introduce}[1]{\textit{#1}}
\newcommand{\xmark}{\ding{55}}
\newcommand{\first}[1]{\mathbf{\textcolor{red}{#1}}}
\newcommand{\second}[1]{\mathbf{\textcolor{blue}{#1}}}
\newcommand{\third}[1]{\mathbf{\textcolor{violet}{#1}}}
\newcolumntype{H}{>{\setbox0=\hbox\bgroup}c<{\egroup}@{}}
\theoremstyle{definition}
\newtheorem{theorem}{Theorem}[section]
\theoremstyle{definition}
\newtheorem{definition}{Definition}[section]
\theoremstyle{definition}
\newtheorem{proposition}{Proposition}[section]
\theoremstyle{remark}
\newtheorem{example}{Example}[section]
\newtheorem*{remark}{Remark}
\newcommand\restr[2]{{
  \left.\kern-\nulldelimiterspace 
  #1 
  \littletaller 
  \right|_{#2} 
  }}
\let\origdoublepage\cleardoublepage
\newcommand{\clearemptydoublepage}{%
  \clearpage
  {\pagestyle{empty}\origdoublepage}%
}
\newcommand{\littletaller}{\mathchoice{\vphantom{\big|}}{}{}{}}
\begin{document}

  \pagenumbering{gobble} 
  \pagestyle{plain}

\thispagestyle{empty}

\begin{center}
  \begin{figure}[h!]
    \centerline{\psfig{file=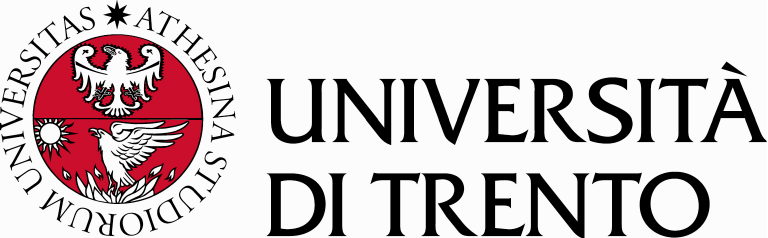,width=0.6\textwidth}}
  \end{figure}

  \vspace{2 cm} 

  \LARGE{Department of Information Engineering and Computer Science\\}

  \vspace{1 cm} 
  \Large{Master's Degree in\\
    Artificial Intelligence Systems
  }

  \vspace{2 cm} 
  \Large\textsc{Final Dissertation\\} 
  \vspace{1 cm} 
  \Huge\textsc{Nonlinear Sheaf Diffusion \\ in Graph Neural Networks\\}
  \Large{\it{}}

  \vspace{2 cm} 
  \begin{tabular*}{\textwidth}{ c @{\extracolsep{\fill}} c }
  \Large{Supervisor} & \Large{Student}\\
  \Large{Prof. Andrea Passerini}& \Large{Olga Zaghen}\\
  \Large{} & \Large{}\\
  \Large{Co-Supervisor} & \Large{}\\
  \Large{Prof. Pietro Liò}& \Large{}
  \end{tabular*}

  \vspace{2 cm} 

  \Large{Academic year 2022/2023}
  
\end{center}

  \clearpage
 
  \clearpage
  \;
  \newpage
  \clearpage
  \pagestyle{plain} 

  \mainmatter


    \tableofcontents
    \clearpage

    \begingroup
      \renewcommand{\cleardoublepage}{} 
      \renewcommand{\clearpage}{} 
      
      \titleformat{\chapter}
        {\normalfont\Huge\bfseries}{\thechapter}{1em}{}
        
      \titlespacing*{\chapter}{0pt}{0.59in}{0.02in}
      \titlespacing*{\section}{0pt}{0.20in}{0.02in}
      \titlespacing*{\subsection}{0pt}{0.10in}{0.02in}
      
      \chapter*{Abstract} 
\label{abtract}

\addcontentsline{toc}{chapter}{Abstract} 
This work focuses on exploring the potential benefits of introducing a nonlinear Laplacian in Sheaf Neural Networks for graph-related tasks. The primary aim is to understand the impact of such nonlinearity on diffusion dynamics, signal propagation, and performance of neural network architectures in discrete-time settings. The study primarily emphasizes experimental analysis, using real-world and synthetic datasets to validate the practical effectiveness of different versions of the model. This approach shifts the focus from an initial theoretical exploration to demonstrating the practical utility of the proposed model, despite its inherent complexity and dimensionality overhead.
The project's foundations are rooted in the pioneering work of Cristian Bodnar et al., known as Neural Sheaf Diffusion \cite{bodnar2022neural}. Their contributions have provided inspiration for this thesis and opened new research directions in the field. The collaboration of topological insights and deep learning techniques promises to enhance our understanding of complex data structures from a topological perspective.

      
      %
      %
      \clearemptydoublepage

\newpage
\;

\newpage

\chapter{Summary}
\label{cha:intro}

\epigraph{There is no branch of mathematics, however abstract, which may not some day be applied to phenomena of the real world.}{\textit{Nikolaj Ivanovič Lobačevskij}}

\section{Motivation and Problem Statement}

\label{sec:motivation}

The advent of \textit{Topological Deep Learning} (TDL) is a response to the increasing availability of diverse and complex data that traditional deep neural networks have excelled at analyzing only on regular Euclidean domains, such as images and text sequences. Many scientific datasets and other types of data possess different structures that defy Euclidean geometry's constraints \cite{hajijtopological}: in order to address this challenge, the more general field of \textit{Geometric Deep Learning} (GDL) has emerged as an extension of deep learning techniques to encompass non-Euclidean domains \cite{bronstein2021geometric,zhou2020graph,wu2020comprehensive}. GDL achieves this goal by incorporating principles of geometric regularity, such as symmetries, invariance, and equivariance, which enable appropriate inductive biases for processing arbitrary data domains. These domains include sets \cite{qi2017pointnet,rempe2020caspr}, grids \cite{boscaini2015learning,masci2015geodesic}, manifolds \cite{boscaini2015learning,masci2015geodesic}, and graphs \cite{scarselli2008graph,bronstein2021geometric,kipf2016semi}. In particular, the modeling and analysis of data with graph structures have been significantly enhanced by the advent of Graph Neural Networks (GNNs) within the framework of GDL \cite{bronstein2021geometric,kipf2016semi}.

While GNNs have been successfully deployed so far, they primarily focus on local abstractions and fail to capture non-local properties and dependencies present in data \cite{hajijtopological}. In order to address this limitation, there is a need to consider the \textit{topology} of data. \textit{Topological data}, which involves interactions of edges (in graphs), triangles (in meshes), or cliques, arises naturally in various applications such as complex physical systems \cite{battiston2021physics}, traffic forecasting \cite{jiang2022graph}, and molecular design \cite{schiff2020characterizing}.
TDL expands beyond graph-based abstractions to encompass extensions like simplicial complexes, cell complexes and hypergraphs, which generalize most data domains encountered in scientific computations. It involves the development of machine learning models capable of learning from data supported on these topological domains. Furthermore, by combining the power of deep learning with the richness of algebraic topology, it enables the analysis and understanding of complex data structures from a topological perspective.

\textit{Topological Deep Learning}  is also the title of Cristian Bodnar's PhD thesis defended at the University of Cambridge. His research has made significant contributions to the field, particularly through the proposal of a novel approach that utilizes sheaves—an abstract object from category theory and algebraic topology—in the context of GNNs. Bodnar et al.'s work on Sheaf Neural Networks (SNNs), known as \textit{Neural Sheaf Diffusion} \cite{bodnar2022neural}, is of particular relevance to this thesis, as our project builds upon it by  exploring new research directions and applications.

Broadly speaking, the aim of this thesis was to investigate the potential benefits of
introducing a nonlinear Laplacian \cite{hansen2020laplacians,hansen2019toward} in SNNs for graph-related tasks. We were initially driven by simple curiosity regarding the impact of a nonlinearity in the Laplacian
on diffusion dynamics, signal propagation in discrete-time settings, and overall neural network performance. Eventually, such curiosity led to the development of a thorough analysis of the phenomena. 

This study primarily focused on experimental analysis rather than theoretical exploration. The goal was to validate the practical effectiveness of different versions of the model through tests on real-world and synthetic datasets. The emphasis on experiments is motivated by the deviation that often takes place in discrete-time settings from the conditions and assumptions made in theoretical research. Furthermore, since the theoretical aspects of sheaf diffusion convergence properties were extensively covered in the previous work by Bodnar et al. \cite{bodnar2022neural}, this study shifted its focus to demonstrating the practical usefulness of the proposed model, despite its complexity and dimensionality overheads.

\;

\section{Outline}
\label{sec:outline}

Chapter \ref{cha:maths} provides an overview of the essential mathematical concepts that form the basis for the subsequent theory discussed in the thesis. Specifically, Section \ref{sec:topology} introduces  concepts in topology theory, while in Sections \ref{sec:sheaves_general} and \ref{sec:sheaves_on_graphs} sheaves are defined, first from a general perspective and then specifically on graphs (cellular sheaves), respectively. By first delving into these mathematical concepts, the theoretical foundations of Sheaf Neural Networks should appear more precise and transparent, enhancing the readers' comprehension of the subject matter.

In Chapter \ref{cha:snn}, an overview of the most common and important Graph Neural Network models is provided (Section \ref{sec:sec_gnn}), as well as a summary of all the research on Sheaf Neural Networks that has been carried out in the last few years (Section \ref{sec:section_snn}). The goal of this chapter is providing the proper background on current state-of-the-art models and techniques that, additionally to representing the starting point for the development of our model, also constitute important benchmarks to which it needs to be compared, in order to validate it in practice. 

Chapter \ref{cha:nonlinear} delves into the core subject of this manuscript, that is the introduction, definition, and analysis of the proposed method. We also address the various phases that led to achieve the final version of the model, during which the design and evaluation of numerous variations of the method were carried out. The chapter is structured by first introducing nonlinear sheaves and bounded confidence in the context of opinion dynamics (Section \ref{opinion_dynamics}) as an example of application study, as well as a nice interpretation to better understand how sheaves work. The nonlinear Laplacian, which constitutes the main object of study, is then formally introduced in Section \ref{sec:nonlin_lap}, after which the models that we propose, as well as the main questions we faced and implementation choices we made, are described in Section \ref{sec:model_def}.

The last Chapter (\ref{cha:experiments}) is devoted to all the experiments we carried out to first of all validate all the features we tried out for the model design, and then test the properties brought in by our model with respect to the others. We evaluate its performance on benchmark datasets, comparing its results with respect to other state-of-the-art architectures. In particular, Section \ref{sec:synthetic} focuses on tests carried out in a synthetic setting, while Section \ref{sec:real_world} describes the experiments executed on real-world benchmark datasets. 



\clearemptydoublepage

\newpage

\chapter{Mathematical Preliminaries}
\label{cha:maths}

This chapter aims at introducing the main mathematical concepts on which all the subsequent theory will be based. The focus is on algebraic topology, topological spaces and category theory. These mathematical preliminaries will hopefully constitute a useful resource that aims at making the intuitions and motivations behind Sheaf Neural Networks more precise and clear.

\section{Fundamentals in Topology}
\label{sec:topology}
The word \textit{topology} refers both to a general field of study and a specific mathematical object. In the first sense, the mathematical branch of topology studies the properties and spatial relations of geometric figures that are unaffected by continuous deformations, such as the change in shape and size. The second meaning is instead explained through the following definitions.

\begin{definition}[Topology]
    Let $\mathcal{P}$ denote the power set. Formally, a \textit{topology}  on a set $X \neq \emptyset$ is a collection $\tau \subseteq \mathcal{P}(X)$  that satisfies the following properties:
    \begin{enumerate}
        \item $X, \emptyset \in \tau$;
        \item $\forall I$ set of indices and $\forall A_{i} \in \tau, i \in I, \bigcup_{i \in I} A_{i} \in \tau$;
        \item $\forall J$ finite set of indices, that is $|J| = n < +\infty$, $\forall A_{j} \in \tau, j \in J, \bigcap_{j = 1}^{n} A_{j} \in \tau$. 
    \end{enumerate}
\end{definition}

\begin{definition}
    The elements of $\tau$ are called \textit{open sets} of the topology, the couple $(X, \tau)$ is a \textit{topological space} and the elements of \textit{X} are generally referred to as \textit{points}. Intuitively, the open sets provide a neighborhood structure for the points of \textit{X}.
\end{definition}

In short and in a less formal way, a \textit{topological space} is simply a set $X$ together with a collection $\tau$ of subsets of $X$, called the \textit{open sets} of $X$, that satisfy some conditions: the empty set and $X$ belong to $\tau$, and any finite intersection and arbitrary union of open sets is an open set.

\begin{figure}[ht]
    \centering
    \includegraphics[width=5.00in]{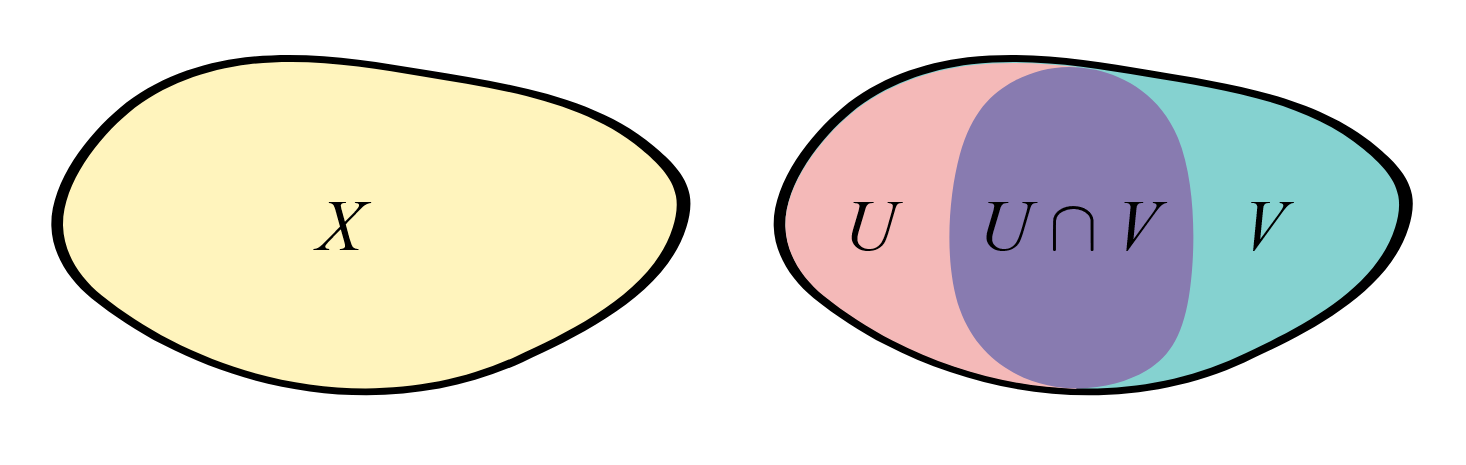}
    \caption{A topological space $(X, \tau)$ in which the topology is defined as $\tau = \{X, U, V, U \cap V, \emptyset\}$.}
    \label{fig:enter-label}
\end{figure}

\begin{example}[Topological spaces]
Additionally to the one shown in Figure ~\ref{fig:enter-label}, other simple examples of topological spaces are:
\begin{itemize}
    \item the \textit{trivial topology}, defined by $X \neq \emptyset, \tau = \{X, \emptyset\}$;
    \item the \textit{discrete topology}, defined by $X \neq \emptyset, \tau = \mathcal{P}(X)$.
\end{itemize}
\end{example}

\begin{definition}[Base of a topology]
    Let $(X, \tau)$ be a topological space. A \textit{base} $\mathcal{B}$ for the topology $\tau$ is a collection of open sets $\mathcal{B} \subseteq \tau$ such that any
    other open set in $\tau$ can be expressed as a union of elements in $\mathcal{B}$. The elements of $\mathcal{B}$
    are called \textit{basic open sets}.
\end{definition}

Another concept that will be useful in the following sections is the one of \textit{open covers} for a topological space.

\begin{definition}[Open cover]
    Let $(X, \tau)$ be a topological space. A \textit{cover} $\mathcal{C}$ of $X$  is a collection of subsets $\{U_{\alpha}\}_{\alpha \in A}$ of $X$ whose union is the whole space $X$. In this case we say that $\mathcal{C}$ covers $X$, or that the sets $\{U_{\alpha}\}$ cover $X$. We say that $\mathcal{C}$ is an \textit{open cover} if each of its members is an open set (i.e. each $\{U_{\alpha}\}$ is contained in $\tau$, $\tau$ being the topology on $X$). 
\end{definition}

\begin{remark}
    By definition, a base $\mathcal{B}$ for a topological space $(X, \tau)$ is an open cover of $X$. 
\end{remark}

\section{Sheaves: a General Definition}
\label{sec:sheaves_general}
In various branches of mathematics, numerous structures defined on a topological space $(X, \tau)$ can naturally undergo localization or restriction to open subsets $U \subseteq X$, for example continuous functions with real or complex values, $n-$times differentiable functions, bounded real-valued functions, vector fields, and sections of vector bundles on the space. The capability to confine data to smaller open subsets leads to the notion of \textit{presheaves}. Starting from this, essentially \textit{sheaves} are presheaves where local data can be glued to global data. Parts of the contents of this section were inspired by the presentation on  \textit{Topological Deep Learning} held at the Geometric Deep Learning Summer School in Pescara (July 2022) by Cristian Bodnar\footnote{https://www.youtube.com/watch?v=wACDSoDNTfE}\textsuperscript{,}\footnote{https://www.youtube.com/watch?v=90MbHphnPUU}.

\begin{definition}[Presheaf]
    Given a topological space $(X, \tau)$, a \textit{presheaf of sets} $\mathcal{F}$ on \textit{X} consists of:
    \begin{enumerate}
        \item For each open set $U \in \tau$, a set $\mathcal{F}(U)$. The elements of this set are also called the \textit{sections} of $\mathcal{F}$ over $U$, and the sections of $\mathcal{F}$ over $X$ are called the \textit{global sections} of $\mathcal{F}$.
        \item For each inclusion of open sets $U \subseteq V$ with $U, V \in \tau$, a function $\mathcal{F}_{U, V}\colon \mathcal{F}(U) \rightarrow \mathcal{F}(V)$; these functions are called \textit{restriction morphisms}. By analogy with restriction functions, given $s \in \mathcal{F}(U)$, its restriction $\mathcal{F}_{U, V}(s)$ is also denoted $\restr{s}{V}$.
    \end{enumerate}

    In turn, the restriction morphisms are required to satisfy the two following properties:
    \begin{enumerate}
        \item For every open set $U \in \tau$, the restriction morphism $\mathcal{F}_{U, U}\colon \mathcal{F}(U) \rightarrow \mathcal{F}(U)$ is the identity morphism on $ \mathcal{F}(U)$.
        \item Given three open sets $U, V, W \in \tau$ such that $W \subseteq V \subseteq U$, then $\mathcal{F}_{V, W} \circ \mathcal{F}_{U, V} =\mathcal{F}_{U, W}$.
    \end{enumerate}
    
    Informally, \textit{presheaves} are an assignment of some data to the open sets of a space $X$, as shown in Figure \ref{presheaf1}. For each open set $U$, we denote the data attached to it by $\mathcal{F}(U)$. Whenever $U \subseteq W$, with $W$ possibly being equal to $X$, we can follow an arrow $\mathcal{F}(W) \rightarrow \mathcal{F}(U)$ to ”restrict” the data of $W$ to a smaller region $U$. An explanatory representation is in Figure \ref{presheaf2}.
\end{definition}

\begin{figure}[ht]
\begin{subfigure}{0.48\textwidth}
    \includegraphics[width=3.20in]{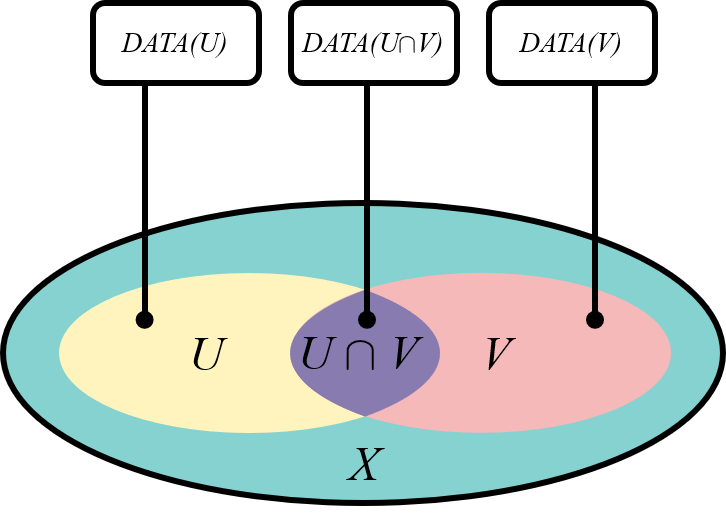}
    \caption{A presheaf is an assignment of some data to each open set of a topological space $(X, \tau)$. \newline In this case $\tau = \{X, U, V, U \cap V, \emptyset\}$.}
    \label{presheaf1}
\end{subfigure}
\hfill
\begin{subfigure}{0.49\textwidth}
    \includegraphics[width=3.20in]{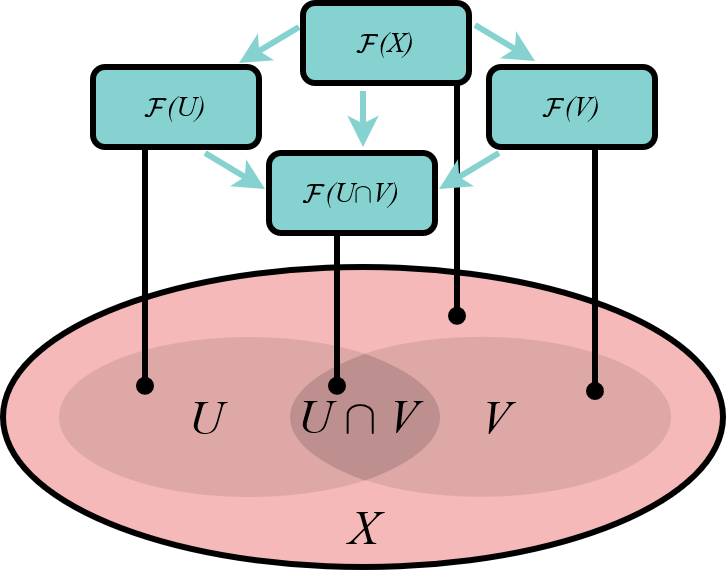}
    \caption{Given $U, W \in \tau$, whenever $U \subseteq W$, a restriction morphism $\mathcal{F}(W) \rightarrow \mathcal{F}(U)$ allows to ”restrict” the data of $W$ to a smaller region $U$.}
    \label{presheaf2}
\end{subfigure}
\caption{Visual representation of a simple presheaf example.}
\label{fig:image2}
\end{figure}

\begin{example}[Presheaf]\label{ex_presheaf}
The continuous functions over $\mathbb{R}$ constitute a presheaf with 
\begin{itemize}
    \item $X = \mathbb{R}$,
    \item $\mathcal{F}(U) = \{f : U \rightarrow \mathbb{R} | f$ is continuous$\}$,
\end{itemize} and $V \subseteq U, \mathcal{F}_{U, V} : \mathcal{F}(U) \rightarrow \mathcal{F}(V)$
being the restriction map sending $f \rightarrow \restr{f}{V}$.
\end{example}

Given a presheaf, a natural question is to what extent its sections over an open set $U$ are specified by their restrictions to smaller open sets $U_{i}$ of an open cover $\mathcal{U} = \{U_{i}\}_{i \in I}$ of $U$.

\begin{definition}[Sheaf]
    A \textit{sheaf} is a presheaf that also satisfies the \textit{locality} and \textit{glueing} conditions.
    \begin{enumerate}
        \item (\textit{Locality}) Suppose $U$ is an open set, $\mathcal{U} = \{U_{i}\}_{i \in I}$ is an open cover of $U$, and $s, t \in \mathcal{F}(U)$ are sections. If $\restr{s}{U_i} = \restr{t}{U_i}$ for all ${i \in I}$, then $s = t$.
        \item (\textit{Glueing}) Suppose $U$ is an open set, $\mathcal{U} = \{U_{i}\}_{i \in I}$ is an open cover of $U$ and $\{s_{i} \in \mathcal{F}(U_i)\}_{i \in I}$ is a family of sections.  If all pairs of sections agree on the overlap of their domains, that is, if $\restr{s_{i}}{U_i \cap U_j} = \restr{s_{j}}{U_i \cap U_j}$ for all $i, j \in I$,  then there exists a section $s \in \mathcal{F}(U)$ such that $\restr{s}{U_i} = s_i$ for all $i \in I$.
    \end{enumerate}
\end{definition}

\begin{example}[Sheaves]
\
\begin{enumerate}
    \item The presheaf consisting of continuous functions mentioned in \ref{ex_presheaf} is a sheaf. This assertion reduces to checking that, given continuous functions $f_{i}:U_{i}\to \mathbb{R}$ which agree on the intersections $U_{i}\cap U_{j}$, there is a unique continuous function $ f:U\to \mathbb{R}$ whose restriction equals $f_{i}$. 
    \item Another example is the sheaf of vector fields
    over a smooth manifold $M$, for which $\mathcal{F}(U) = \{f : U \rightarrow TU | f$ is a vector field$\}$. The restriction maps are simply restrictions
    of the vector field.
\end{enumerate}
\end{example}

The take-home idea about sheaves is that they allow creating bigger data from smaller data. An interesting fact that will be useful in the following sections is that also $\mathcal{B}-sheaves$ (and $\mathcal{B}-presheaves$) can be defined, by taking into account only basic open sets of the considered topological space. More formal definitions follow.

\begin{definition}[Base presheaf]
    Given a topological space $(X, \tau)$ and a base $\mathcal{B}$ for the topology $\tau$, a $\mathcal{B}-presheaf$ is a presheaf that consists in:
    \begin{enumerate}
        \item For each open set $U \in \mathcal{B}$, a set $\mathcal{F}(U)$.
        \item For each pair $V \subseteq U$ of members of $\mathcal{B}$, a function $\mathcal{F}_{U,V} : \mathcal{F}(U) \rightarrow \mathcal{F}(V)$ with the
usual properties.
    \end{enumerate}
\end{definition}

It is now straightforward to also define what a  $\mathcal{B}-sheaf$ is.

\begin{definition}[Base sheaf]
    A $\mathcal{B}-sheaf$ is a $\mathcal{B}$-presheaf that also satisfies the following conditions: 
    \begin{enumerate}
        \item (\textit{Locality}) Let $U \in \mathcal{B}$ and $s, t \in \mathcal{F}(U)$. If $U$ is covered by $ \{U_{i}\}_{i \in I} \subseteq \mathcal{B}$ such that $\restr{s}{U_i} = \restr{t}{U_i}$ for all ${i \in I}$, then $s = t$. 
        \item (\textit{Glueing}) Suppose $U \in \mathcal{B}$ and $U$ is covered by $\{U_{i}\}_{i \in I} \in \mathcal{B}$ with local sections $\{s_{i} \in \mathcal{F}(U_i)\}_{i \in I}$ such that for all $i, j \in I$ all sections agree on the overlap of their domains, that is, if $\restr{s_{i}}{U_i \cap U_j} = \restr{s_{j}}{U_i \cap U_j}$ for all $i, j \in I$. Then there exists a section $s \in \mathcal{F}(U)$ such that $\restr{s}{U_i} = s_i$ for all $i \in I$.
    \end{enumerate}
\end{definition}

\section{Sheaves on Graphs}
\label{sec:sheaves_on_graphs}
\subsection{Graphs}
First of all, it is necessary to properly define the notion of \textit{graph}. Also this section was in part inspired by the presentation on  \textit{Topological Deep Learning} held at the Geometric Deep Learning Summer School in Pescara (July 2022) by Cristian Bodnar.

\begin{definition}[Graph]
    A graph $G$ is defined as $G = (V, E)$, where $V$ represents the set of \textit{nodes} (also called \textit{vertices} or \textit{points}), and $E$ represents the set of edges connecting the nodes. Edges can be directed (in which case  ${ E\subseteq \{(x,y)\mid x,y\in V\}}$ or undirected (${ E\subseteq \{\{x,y\}\mid x,y \in V\}}$), and may carry weights or labels, capturing the relationships or attributes between nodes.
\end{definition}

 Graphs can be categorized into various types, including directed graphs, undirected graphs, labeled graphs, and attributed graphs, each presenting unique challenges and opportunities for analysis. A representation for a small undirected graph is in \ref{undir_graph}.

 If we suppose $|V| = n$ and we associate to each node $v$ an $f$-dimensional feature vector $\textbf{x}_v$, we can think of grouping all feature vectors in a unique $n \times f$ matrix \textbf{X}. The edge-level information can be also expressed in a compact way through the \textit{adjacency matrix} \textbf{A}.

\begin{definition}[Adjacency matrix]
     Given a graph with set of nodes described by $V = \{v_1, ..., v_n\}$, the adjacency matrix is a square $n \times n$ matrix \textbf{A} such that its element $A_{i,j}$ is 1 when there is an edge from vertex $v_i$ to vertex $v_j$, and 0 when such edge does not exist.
\end{definition}

\begin{figure}[t]
\begin{subfigure}{0.5\textwidth}
    \includegraphics[width=2.50in]{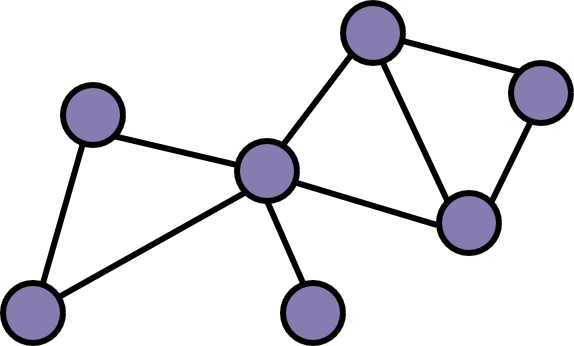}
    \caption{Undirected graph.}
    \label{undir_graph}
\end{subfigure}
\hfil
\begin{subfigure}{0.5\textwidth}
    \includegraphics[width=2.00in]{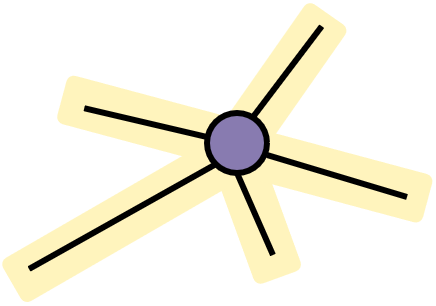}
    \caption{A \textit{star}, that is a basic open set in the star topology for graphs.}
    \label{star}
\end{subfigure}

    \centering
\begin{subfigure}{0.7\textwidth}
    \includegraphics[width=5.00in]{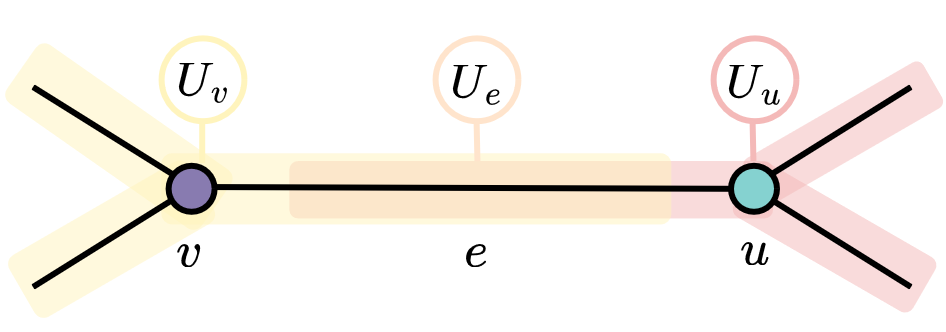}
    \caption{The \textit{stars} intersect exactly in correspondence of the edges.}
    \label{sheaf_star}
\end{subfigure}

\caption{A graph and the definition of a sheaf on it through the star topology.}
\label{fig:graphtopology}
\end{figure}

\textbf{Graphs as topological spaces}
A topology that can be straightforwardly defined on graphs is the \textit{star topology}, that is generated by the basis of all \textit{open stars} and their intersections. Given a node \textit{v}, an open star centered in \textit{v} consists in the union of such node and all its incidence edges, as highlighted in Figure \ref{star}.
As Figure \ref{sheaf_star} depicts, the basic open sets represented by stars intersect in correspondence to the edges, to which other open sets are associated.

At this point, it is easy to define a $\mathcal{B}$-presheaf on a graph $G$, by considering  $\mathcal{B}$ as the set of open stars in the graph (Figure \ref{presheaf_star}). 

\begin{figure}[ht]
    \centering
    \includegraphics[width=5.00in]{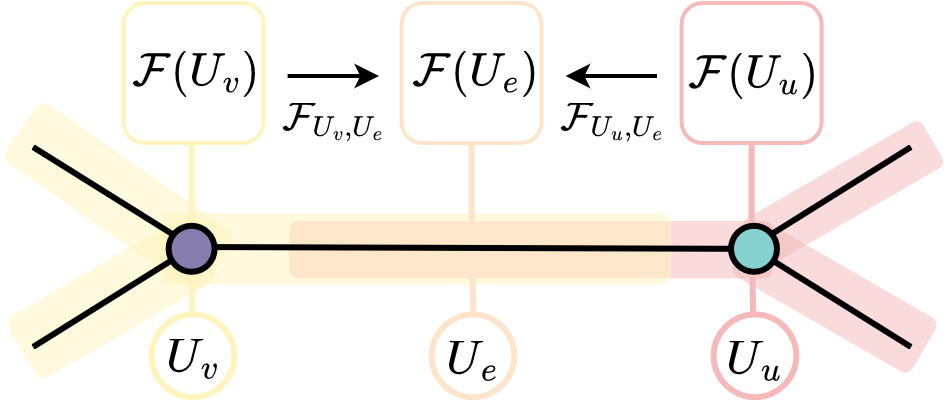}
    \caption{The star topology allows to define a $\mathcal{B}$-presheaf on graphs.}
    \label{presheaf_star}
\end{figure}

The good news is that this construction holds good properties in terms of \textit{locality} and \textit{glueing}:

\begin{theorem}
    Any $\mathcal{B}$-presheaf on a graph with the topology generated by the open stars is a $\mathcal{B}$-sheaf.
\end{theorem}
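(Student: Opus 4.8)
The plan is to exploit the rigidity of the star topology: it is so coarse that a basic open set admits no nontrivial cover by basic open sets, which makes the \textit{locality} and \textit{glueing} conditions almost vacuous.

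First I would make the basic open sets explicit. The points of the underlying space $X$ are the vertices and edges of $G$, and $\mathcal{B}$ consists of the open stars $\mathrm{St}(\sigma)$ of the cells $\sigma \in V \cup E$: explicitly $\mathrm{St}(v) = \{v\} \cup \{e \in E : v \in e\}$ for a vertex $v$, and $\mathrm{St}(e) = \{e\}$ for an edge $e$. One checks by inspection that the intersection of two basic open sets is again in $\mathcal{B}$ (it is $\emptyset$, a full star, or a single edge $\mathrm{St}(u) \cap \mathrm{St}(v) = \{uv\}$), so $\mathcal{B}$ really is a base and all the restriction morphisms appearing in the base-sheaf axioms are defined. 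The crucial observation is the following: \emph{every $U \in \mathcal{B}$ contains a point $p$ such that the only basic open set contained in $U$ and containing $p$ is $U$ itself} --- take $p = v$ when $U = \mathrm{St}(v)$ (the sole basic open set containing a vertex $v$ is $\mathrm{St}(v)$), and $p = e$ when $U = \{e\}$ (the only nonempty subset of $\{e\}$ is $\{e\}$). Consequently, if $\{U_i\}_{i \in I} \subseteq \mathcal{B}$ covers $U$, i.e.\ $\bigcup_{i \in I} U_i = U$ --- which in particular forces each $U_i \subseteq U$ --- then some $U_{i_0}$ contains $p$, hence $U_{i_0} = U$.

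Granting this, both axioms follow immediately. For \textit{locality}: if $s, t \in \mathcal{F}(U)$ satisfy $\restr{s}{U_i} = \restr{t}{U_i}$ for all $i$, choose $i_0$ with $U_{i_0} = U$; since $\mathcal{F}_{U,U}$ is the identity, $s = \restr{s}{U_{i_0}} = \restr{t}{U_{i_0}} = t$. For \textit{glueing}: given a compatible family $\{s_i \in \mathcal{F}(U_i)\}_{i \in I}$, put $s := s_{i_0} \in \mathcal{F}(U)$ with $U_{i_0} = U$; for each $i$ we have $U_i \subseteq U_{i_0}$, so $U_i \cap U_{i_0} = U_i$ and the compatibility hypothesis reads $\restr{s_i}{U_i} = \restr{s_{i_0}}{U_i}$, whence $\restr{s}{U_i} = \restr{s_{i_0}}{U_i} = s_i$, as required. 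I do not expect a genuine obstacle here; the only points demanding care are the bookkeeping around the definition of ``cover'' (observing that $\bigcup_i U_i = U$ already entails $U_i \subseteq U$) and the verification that intersections of basic open sets stay in $\mathcal{B}$, so that the axioms are even well-posed. The one conceptual step --- and the mildest of ``obstacles'' --- is simply to recognize that on a graph the star topology leaves no room for any nontrivial gluing.
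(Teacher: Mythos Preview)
Your proof is correct and follows essentially the same approach as the paper's: both hinge on the observation that any cover of a basic open set $U$ by basic open sets must contain $U$ itself (because only $\mathrm{St}(v)$ contains the vertex $v$, and $\{e\}$ has only itself as a nonempty subset), which renders both sheaf axioms immediate. Your write-up is somewhat more careful in spelling out why intersections stay in $\mathcal{B}$ and why a cover forces $U_i \subseteq U$, but the argument is the same.
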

\begin{proof}
    \begin{enumerate}
        \item \textit{Locality}: There are only two types of open sets in $\mathcal{B}$, that are the $U_v$ (stars) and $U_e$ (intersection of stars) shown in \ref{presheaf_star}. For the type of $U_e$ the only cover is $\{U_e\}$ itself, thus $s = \restr{s}{U_e} = \restr{t}{U_e} = t$. For $U_v$, $U_v \in (U_i )_{i\in I}$ because the vertex $v$ cannot be covered by other open sets in $\mathcal{B}$. Then
again we have $s = \restr{s}{U_v} = \restr{t}{U_v} = t$.
        \item \textit{Glueing}: As for the case above, for open sets as $U_e$, the proof is trivial. For open sets of type $U_v$ we
        exploit again that $U_v$ = $U_k$ for some $k \in I$. Let $s_k \in \mathcal{F}(U_k) = \mathcal{F}(U_v)$. We have that
        $\restr{s_k}{U_i} = \restr{s_k}{U_k \cap U_i} = \restr{s_i}{U_k \cap U_i} = \restr{s_i}{U_v \cap U_i} = \restr{s_i}{U_i} = s_i$.
    \end{enumerate}
\end{proof}

Once given a basis $\mathcal{B}$, a natural question is how we construct a sheaf from this. Since sheaves behave similarly to linear operators, it is basically sufficient to specify how they behave on a basis to fully describe their behavior. The following result is stated without proof:

\begin{theorem}
    A $\mathcal{B}$-sheaf $\mathcal{F}$ on a space $X$ uniquely induces a sheaf $\mathcal{F}^{+}$ on $X$ such that $\mathcal{F}$ and $\mathcal{F}^{+}$ are
canonically isomorphic.
\end{theorem}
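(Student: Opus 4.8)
The plan is the classical \emph{extension-from-a-basis} argument: build $\mathcal{F}^{+}$ by a limit over basic open sets, check that it is a sheaf, check that it restricts to $\mathcal{F}$ on $\mathcal{B}$ via a natural isomorphism, and then use the sheaf axioms to see that no other extension is possible. Concretely, for an arbitrary open set $U \in \tau$ I would set
\[
\mathcal{F}^{+}(U) \;=\; \Bigl\{\, (s_B)_{B \in \mathcal{B},\, B \subseteq U} \ :\ s_B \in \mathcal{F}(B),\ \ \restr{s_B}{B'} = s_{B'}\ \text{ for all } B' \in \mathcal{B},\ B' \subseteq B \subseteq U \,\Bigr\},
\]
that is, $\mathcal{F}^{+}(U) = \varprojlim_{B \in \mathcal{B},\, B \subseteq U} \mathcal{F}(B)$, the set of coherent families of $\mathcal{B}$-sections lying below $U$; for $V \subseteq U$ the restriction map $\mathcal{F}^{+}(U) \to \mathcal{F}^{+}(V)$ simply discards the components indexed by basic opens not contained in $V$, and the presheaf axioms (identity, composition) are then formal. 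For $B \in \mathcal{B}$ there is the tautological map $\eta_B \colon \mathcal{F}(B) \to \mathcal{F}^{+}(B)$ sending $s \mapsto (\restr{s}{B'})_{B' \subseteq B}$, and I would show it is a bijection natural in $B$: injectivity is exactly the \emph{locality} axiom of the $\mathcal{B}$-sheaf $\mathcal{F}$ (applied to the cover of $B$ by all basic $B' \subseteq B$), surjectivity is the \emph{gluing} axiom of $\mathcal{F}$ (a coherent family below $B$ is a compatible family of local sections on a basic cover of $B$, hence glues to an element of $\mathcal{F}(B)$), and naturality — commuting with restriction along $B' \subseteq B$ in $\mathcal{B}$ — is immediate. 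This furnishes the canonical isomorphism $\restr{\mathcal{F}^{+}}{\mathcal{B}} \cong \mathcal{F}$.

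Next I would verify that $\mathcal{F}^{+}$ is itself a sheaf. Fix an open cover $U = \bigcup_i U_i$. For locality, if $\sigma, \sigma' \in \mathcal{F}^{+}(U)$ agree on each $U_i$, then for a basic $B \subseteq U$ and each $x \in B$ I pick $B_x \in \mathcal{B}$ with $x \in B_x \subseteq B \cap U_{i(x)}$ (possible since $\mathcal{B}$ is a base); the $B_x$-components of $\sigma$ and $\sigma'$ coincide, so locality of $\mathcal{F}$ on the basic cover $\{B_x\}_{x \in B}$ of $B$ forces the $B$-components to coincide, whence $\sigma = \sigma'$. For gluing, given a compatible family $\sigma^{(i)} \in \mathcal{F}^{+}(U_i)$, I construct for each basic $B \subseteq U$ a section $s_B \in \mathcal{F}(B)$: choose a basic cover $B = \bigcup_\alpha B_\alpha$ with each $B_\alpha \subseteq U_{i(\alpha)}$, let $t_\alpha$ be the $B_\alpha$-component of $\sigma^{(i(\alpha))}$, observe (using compatibility of the $\sigma^{(i)}$ plus locality of $\mathcal{F}$) that $t_\alpha$ and $t_\beta$ restrict to the same section on every basic open contained in $B_\alpha \cap B_\beta$, and glue the $t_\alpha$ by the gluing axiom of $\mathcal{F}$. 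Coherence of $(s_B)_{B \subseteq U}$ and the identity $\restr{(s_B)_{B\subseteq U}}{U_i} = \sigma^{(i)}$ then follow from locality of $\mathcal{F}$ once more, so $\mathcal{F}^{+}$ is a sheaf.

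For uniqueness, suppose $\mathcal{G}$ is any sheaf on $X$ equipped with an isomorphism $\phi \colon \mathcal{F} \xrightarrow{\sim} \restr{\mathcal{G}}{\mathcal{B}}$ of $\mathcal{B}$-sheaves. For any open $U$, the basic opens $B \subseteq U$ form an open cover of $U$, so the sheaf axioms for $\mathcal{G}$ identify $\mathcal{G}(U)$ with the coherent families $(g_B)_{B \subseteq U}$, i.e.\ with $\varprojlim_{B \subseteq U}\mathcal{G}(B)$; transporting along $\phi$ gives $\mathcal{G}(U) \cong \varprojlim_{B \subseteq U}\mathcal{F}(B) = \mathcal{F}^{+}(U)$, and naturality of $\phi$ makes these bijections commute with restriction, so $\mathcal{G} \cong \mathcal{F}^{+}$ as sheaves; specializing to $\mathcal{G} = \mathcal{F}^{+}$ with $\phi = \eta$ yields the identity, which is what pins down the word \emph{canonical}.

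I expect the genuine obstacle to be the gluing step for $\mathcal{F}^{+}$, and specifically the recurring need to move between a non-basic open set — most notably an overlap $B_\alpha \cap B_\beta$, which in general does not belong to $\mathcal{B}$ — and the basic opens refining it. The $\mathcal{B}$-sheaf axioms only ever mention basic opens, so every comparison of sections on a non-basic set must be routed through a basic cover and then descended using locality of $\mathcal{F}$; keeping the indexing straight and invoking the correct axiom at each refinement is essentially all the content of the proof, since the limit description, the presheaf axioms, and the naturality of $\eta$ and $\phi$ are purely formal.
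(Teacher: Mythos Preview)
The paper explicitly introduces this theorem with the words ``The following result is stated without proof,'' so there is no argument in the paper to compare your proposal against. Your construction is the classical extension-from-a-basis argument via the inverse limit $\mathcal{F}^{+}(U)=\varprojlim_{B\in\mathcal{B},\,B\subseteq U}\mathcal{F}(B)$, and it is correct; the only cosmetic remark is that the isomorphism $\eta_{B}\colon\mathcal{F}(B)\to\mathcal{F}^{+}(B)$ is in fact automatic, since $B$ itself is initial in the indexing diagram for $\mathcal{F}^{+}(B)$ and hence the limit is tautologically $\mathcal{F}(B)$ --- no appeal to the $\mathcal{B}$-sheaf axioms is needed at that step, though they are of course essential for the sheaf verification of $\mathcal{F}^{+}$, exactly where you use them.
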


Hopefully, this section provided an idea for the formal definition of general type of sheaves on graphs. In practice, it makes sense to choose sets $\{\mathcal{F}(U)\}_{U}$ with a meaningful structure: one example could be vector spaces, or even richer Hilbert spaces. 

\subsection{Cellular Sheaves}
\label{sec:cellular_sheaves}

\begin{definition}[Cellular sheaf]\label{cellular_sheaf}
    Let $G = (V, E)$ be an undirected graph. A \textit{cellular sheaf} \cite{curry2014sheaves,shepard1985cellular} $(G, \mathcal{F})$ of vector spaces is composed by:
    
\begin{enumerate}
    \item an assignment of a vector space $\mathcal{F}(v)$ for each $v \in V$,
    \item an assignment of a vector space $\mathcal{F}(e)$ for each $e \in E$,
    \item a linear map $\mathcal{F}_{v \unlhd e}$ : $\mathcal{F}(v)\to \mathcal{F}(e)$ whenever $v$ is adjacent to the edge $e$. 
\end{enumerate}
\end{definition}

The vector spaces $\mathcal{F}(v)$ and $\mathcal{F}(e)$ are referred to as $stalks$, while the linear maps $\mathcal{F}_{v \unlhd e}$ are the \textit{restriction maps}. For the use case that is explored within this project, elements of a vertex stalk $\mathcal{F}(v)$ correspond to node-wise feature
vectors $\textbf{x}_v$, while the edge stalks $\mathcal{F}(e)$ only serve as auxiliary spaces for mixing node features.

Given a sheaf $(G,\mathcal{F})$, one can define the space
of 0-cochains $C^{0}(G,\mathcal{F})$ as the direct sum over the vertex
stalks $C^{0}(G,\mathcal{F}):=\varoplus_{v \in V}\mathcal{F}(v)$. The space of
1-cochains $C^{1}(G,\mathcal{F})$ is instead the direct sum over the edge stalks $C^{1}(G,\mathcal{F}) := \varoplus_{e \in E}\mathcal{F}(e)$. These can be thought of gathering all the stalks into a vector space; the space 0-cochains roughly consists of all possible collections of feature vectors $\textbf{x} = (\textbf{x}_v)_{v\in V}$. An intuition is given in Figure \ref{cochain}.

Hansen and Ghrist \cite{hansen2021opinion} have built a convenient mental model for these objects based on opinion dynamics, that will be extensively described in section \ref{opinion_dynamics}. In this
setting, $\textbf{x}_v$ is node $v$'s private opinion, while $\mathcal{F}_{v \unlhd e}$ expresses how that opinion manifests
publicly in a \textit{discourse space} constituted by $\mathcal{F}_{e}$. A particularly meaningful subspace of $C^{0}(G,\mathcal{F})$  is
the space of \textit{global sections}.

\begin{definition}
    Given the space of 0-cochains $C^{0}(G,\mathcal{F})$, its  \textit{global sections} correspond to  a subspace defined as $H^{0}(G,\mathcal{F}) := \{\textbf{x} \in C^{0}(G,\mathcal{F}) : \mathcal{F}_{v \unlhd e}\textbf{x}_v = \mathcal{F}_{u \unlhd e}\textbf{x}_u, \ \forall \ e = (v, u) \in E\}$, that contains exactly the
    private opinions $\textbf{x} \in C^{0}(G,\mathcal{F})$  for which all neighboring nodes agree with each other in the discourse space.
\end{definition}

\begin{figure}[ht]
    \centering
    \includegraphics[width=5.00in]{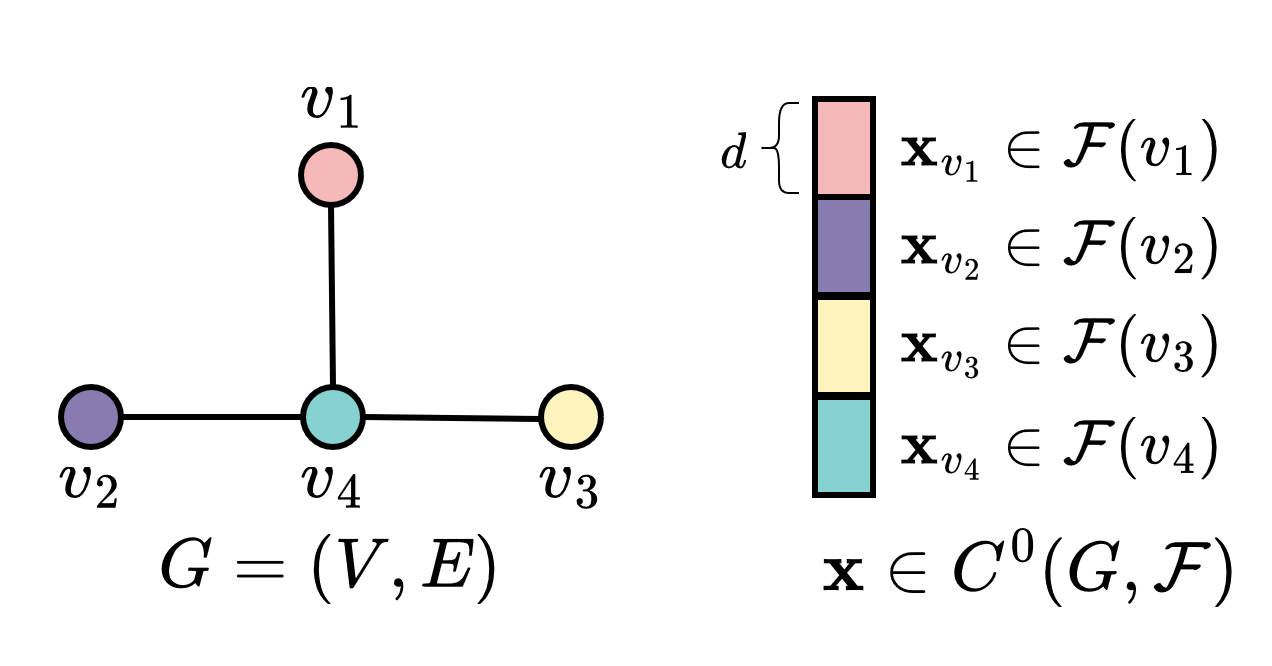}
    \caption{A graph $G =(V,E)$ and a representation of an element $\textbf{x}$ sampled from its corresponding space of 0-cochains $C^{0}(G,\mathcal{F})$.}
    \label{cochain}
\end{figure}

\begin{definition}[Coboundary]
    Once defined a specific orientation for each edge $e = v \rightarrow u \in E$ of the graph $G$, the \textit{linear coboundary  map} $\delta: C^{0}(G,\mathcal{F}) \rightarrow C^{1}(G,\mathcal{F}) $ is defined as $\delta(\textbf{x})_e := F_{u \unlhd e}\textbf{x}_{u} - F_{v \unlhd e}\textbf{x}_{v}$ (Figure \ref{fig:coboundary}, \ref{fig:coboundary_def}).
\end{definition}

\begin{figure}[ht]
    \centering
    \includegraphics[width=5.00in]{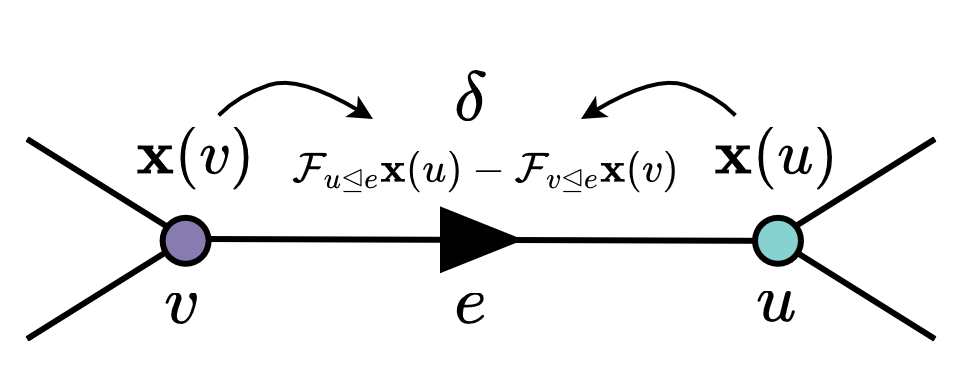}
    \caption{The sheaf coboundary operator acts on the space of 0-cochain by "projecting" the information of the node stalks into the edge stalks.}
    \label{fig:coboundary}
\end{figure}

\begin{figure}[ht]
    \centering\includegraphics[width=6.80in]{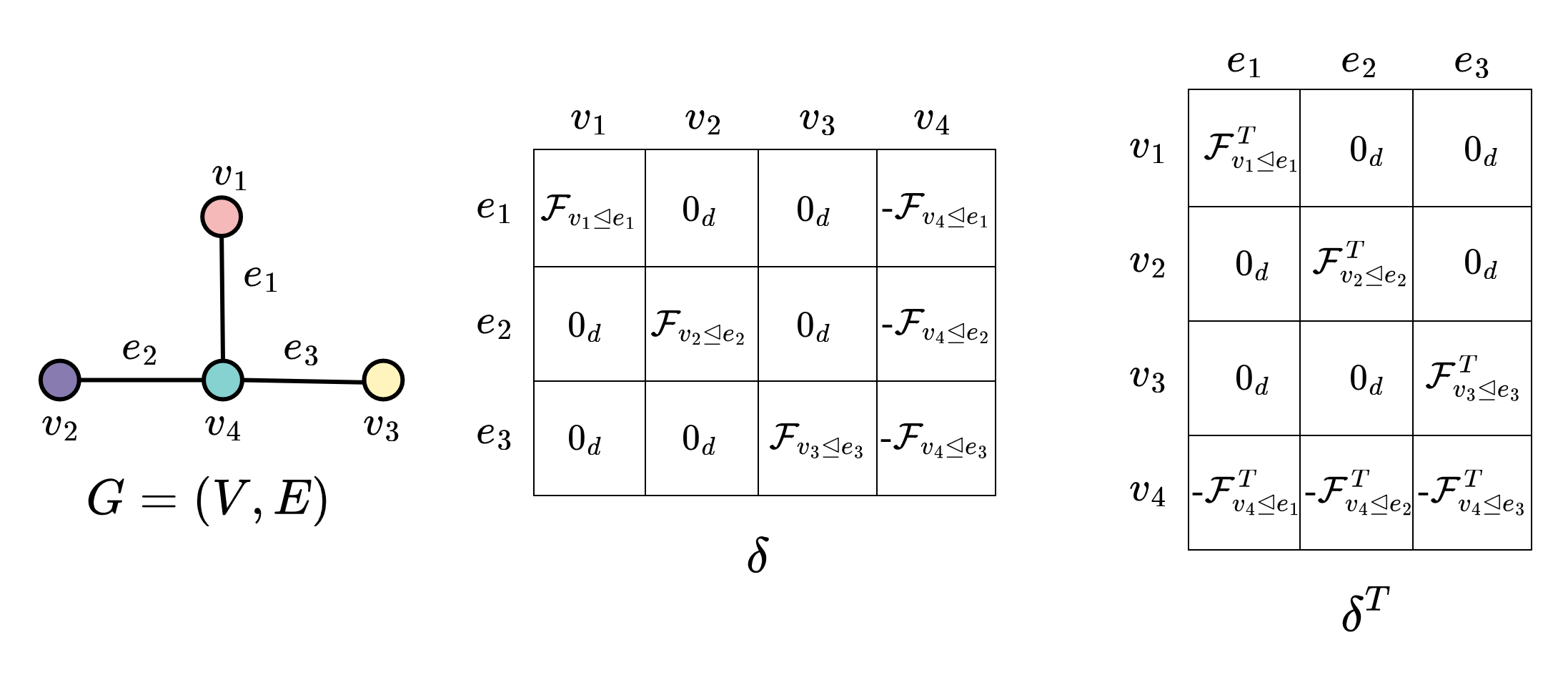}
    \caption{Given a graph $G = (V, E)$ with a sheaf structure on it as in Definition \ref{cellular_sheaf} and setting the dimensionality of all node stalks equal to $d$, the coboundary operator $\delta$ and its transpose $\delta^T$ have exactly this structure, in which $\textbf{0}_d$ stands for the $d$-dimensional null square matrix.}
    \label{fig:coboundary_def}
\end{figure}

\begin{definition}[Sheaf Laplacian]\label{laplac}
    The linear \textit{sheaf Laplacian}\cite{hansen2019toward} is a linear map $L_{\mathcal{F}}: C^{0}(G,\mathcal{F}) \to C^{0}(G,\mathcal{F})$ defined as $L_{\mathcal{F}} := \delta^{T} \circ \delta$, that acts node-wise in the following fashion:  
    \begin{equation}\label{eq:linear_laplacian}
        L_{\mathcal{F}}(\textbf{x})_{v} = \sum_{u, v \unlhd e} \mathcal{F}_{v \unlhd e}^{T}(\mathcal{F}_{v \unlhd e}\textbf{x}_{v} - \mathcal{F}_{u \unlhd e}\textbf{x}_{u})
    \end{equation}
    The (linear) sheaf Laplacian is a positive semidefinite block matrix, of which the diagonal blocks are $L_{\mathcal{F}_{v,v}} = \sum_{v \unlhd e} \mathcal{F}_{v \unlhd e}^{T}\mathcal{F}_{v \unlhd e}$ while the off-diagonal blocks are $L_{\mathcal{F}_{v,u}} = - \mathcal{F}_{v \unlhd e}^{T}\mathcal{F}_{u \unlhd e}$.
\end{definition}

\begin{remark}
    In what follows, when talking about the sheaf Laplacian without specifying whether it's linear or not, we will be referring to the linear one, which is commonly studied and used in spectral graph theory and among the Graph Neural Networks community.
\end{remark}

\begin{definition}[Normalized sheaf Laplacian]
Given a sheaf Laplacian as in  \ref{laplac}, the corresponding \textit{normalized sheaf Laplacian} $\Delta_{\mathcal{F}}$ is defined as $\Delta_{\mathcal{F}} = D^{-\frac{1}{2}}L_{\mathcal{F}}D^{-\frac{1}{2}}$ where $D$ is the block-diagonal of $L_{\mathcal{F}}$.
\end{definition}

For simplicity, we set the dimension of all node and edge stalks to \textit{d}: each restriction map will have dimensionality $d \times d$, and for the sheaf Laplacian matrix it will be $nd \times nd$. With this assumption, an intuition of its structure is provided in Figure \ref{laplacian_def}.

The sheaf Laplacian can be visualized as a  generalization of the well-known graph Laplacian on $G$: if we define a trivial sheaf where each stalk is isomorphic
to $\mathbb{R}$ ($d = 1$) and the restriction maps are the identity map over $\mathbb{R}$, we recover
the standard and well known $n \times n$ graph Laplacian from the sheaf
Laplacian.

\begin{figure}[h]
    \centering
    \includegraphics[width=3.80in]{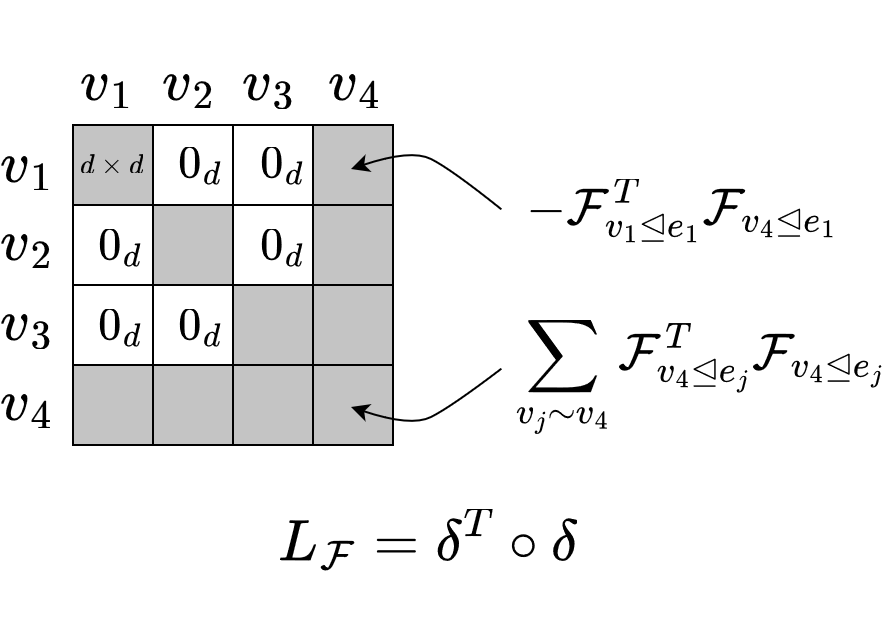}
    \caption{Given a graph $G = (V, E)$ with a sheaf structure and coboundary operators $\delta$ on it as they were defined in Figure \ref{fig:coboundary_def}, the sheaf Laplacian $L_{\mathcal{F}}$ has exactly this form.}
    \label{laplacian_def}
\end{figure}

A signal \textbf{x} is said to be \textit{harmonic} if it expresses a perfect \textit{agreement of opinions} with respect to the structure of the sheaf, that is $L_{\mathcal{F}}\textbf{x} = 0$. The central theorem of Hodge theory \cite{hansen2021opinion} formalizes this intuition, and it proves that harmonic signals and global sections of the sheaf coincide:

\begin{theorem}
    The vector spaces of harmonic signals $\text{ker}(L_{\mathcal{F}})$ and global sections $H^{0}(G,\mathcal{F})$ of a sheaf $\mathcal{F}$ are isomorphic.
\end{theorem}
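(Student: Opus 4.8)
The plan is to exploit the factorization $L_{\mathcal{F}} = \delta^{T}\circ\delta$ from Definition \ref{laplac} together with the inner-product structure on the cochain spaces, so that the claimed isomorphism actually becomes an equality of subspaces of $C^{0}(G,\mathcal{F})$, realized by the identity map. Since every stalk is a finite-dimensional real inner-product space (we take $\mathcal{F}(v)\cong\mathcal{F}(e)\cong\mathbb{R}^{d}$ with the standard inner product), the direct sums $C^{0}(G,\mathcal{F})$ and $C^{1}(G,\mathcal{F})$ inherit canonical inner products, and $\delta^{T}$ is genuinely the adjoint of $\delta$ with respect to them.

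The first step is to show $\ker(L_{\mathcal{F}}) = \ker(\delta)$. The inclusion $\ker(\delta)\subseteq\ker(\delta^{T}\delta)$ is immediate. For the reverse, suppose $L_{\mathcal{F}}\textbf{x} = \delta^{T}\delta\textbf{x} = 0$; then
\begin{equation}
    0 = \langle \textbf{x}, \delta^{T}\delta\textbf{x}\rangle_{C^{0}} = \langle \delta\textbf{x}, \delta\textbf{x}\rangle_{C^{1}} = \lVert \delta\textbf{x}\rVert^{2},
\end{equation}
so $\delta\textbf{x} = 0$, i.e. $\textbf{x}\in\ker(\delta)$. This is the standard Hodge-theoretic observation that a positive semidefinite operator of the form $\delta^{T}\delta$ has the same kernel as $\delta$; it is also the only place where positive semidefiniteness of $L_{\mathcal{F}}$ (already noted in Definition \ref{laplac}) is really used.

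The second step is to identify $\ker(\delta)$ with $H^{0}(G,\mathcal{F})$. By the definition of the coboundary map, $\delta\textbf{x} = 0$ holds if and only if $\delta(\textbf{x})_{e} = \mathcal{F}_{u \unlhd e}\textbf{x}_{u} - \mathcal{F}_{v \unlhd e}\textbf{x}_{v} = 0$ for every oriented edge $e = v\rightarrow u$, which is precisely the condition $\mathcal{F}_{v \unlhd e}\textbf{x}_{v} = \mathcal{F}_{u \unlhd e}\textbf{x}_{u}$ appearing in the definition of $H^{0}(G,\mathcal{F})$. Hence $\ker(\delta) = H^{0}(G,\mathcal{F})$ as subspaces of $C^{0}(G,\mathcal{F})$, and combining with the first step gives $\ker(L_{\mathcal{F}}) = H^{0}(G,\mathcal{F})$; in particular the two spaces are (canonically) isomorphic, with the identity serving as the isomorphism. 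One should also remark that this is independent of the chosen edge orientations, since reversing an edge only changes the sign of the corresponding component of $\delta\textbf{x}$ and therefore leaves its kernel unchanged.

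I do not expect a genuine obstacle here; the only point requiring a little care is making the inner-product setup explicit enough that "$\delta^{T}$ is the adjoint of $\delta$" is a statement rather than a notational coincidence, and noting that finite-dimensionality guarantees $\lVert\delta\textbf{x}\rVert^{2}=0$ forces $\delta\textbf{x}=0$.
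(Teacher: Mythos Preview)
Your argument is correct and is precisely the standard Hodge-theoretic proof: first $\ker(\delta^{T}\delta)=\ker(\delta)$ via the inner-product computation, then $\ker(\delta)=H^{0}(G,\mathcal{F})$ directly from the definitions. The paper itself does not supply a proof of this theorem; it merely states the result and attributes it to Hansen and Ghrist \cite{hansen2021opinion} as ``the central theorem of Hodge theory'', so there is nothing to compare against beyond noting that your reasoning is exactly the argument that citation points to. One small remark: finite-dimensionality is not actually needed for the implication $\lVert\delta\textbf{x}\rVert^{2}=0\Rightarrow\delta\textbf{x}=0$---that follows from positive-definiteness of the inner product alone---though it is implicitly used to guarantee that $\delta^{T}$ exists as a genuine adjoint.
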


An interesting geometric interpretation for sheaves, as well as a meaningful role in the study that follows, is given by sheaves with orthogonal maps, such that $\mathcal{F}_{v \unlhd e} \in O(d)$ (the Lie group of $d \times d$ orthogonal matrices).  These sheaves provide a geometric interpretation and serve as a discrete analogy to vector bundles in differential geometry \cite{loring2011introduction}. Discrete $O(d)$-bundles describe the attachment of vector spaces to points in a graph, similar to how vector bundles describe attachment to points in a manifold. The sheaf Laplacian on these bundles, referred to as the \textit{connection Laplacian} \cite{singer2012vector}, describes how elements of a vector space are transported through rotations in neighboring vector spaces. This analogy establishes a connection to parallel transport of tangent vectors across a manifold, as it is shown in Figure \ref{connection}.

\begin{figure}[h]
    \centering
    \includegraphics[width=3.80in]{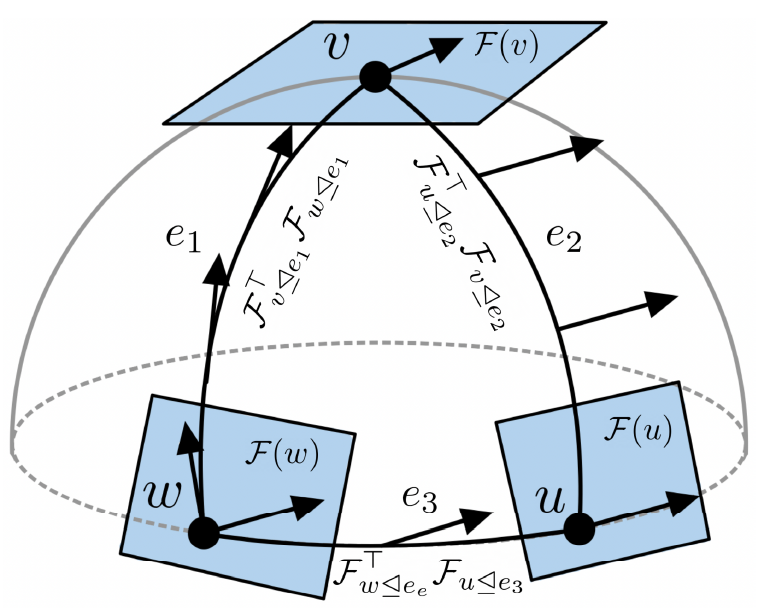}
    \caption{Analogy between parallel transport on a sphere and transport on a discrete vector bundle: a tangent vector is moved from $\mathcal{F}(w)$ to $ \mathcal{F}(v)$ to $ \mathcal{F}(u)$ and then back to $\mathcal{F}(w)$ (image from \cite{bodnar2022neural}).}
    \label{connection}
\end{figure}

\newpage

\chapter{Background}
\label{cha:snn}

\section{Graph Neural Networks}\label{sec:sec_gnn}

Graph Neural Networks have emerged as a powerful tool for analyzing and learning from data represented in the form of graphs \cite{gori2005new,scarselli2008graph,kipf2016semi}. With the increasing availability of graph-structured data in various domains, such as social science \cite{monti2019fake}, molecular chemistry \cite{gilmer2017neural}, recommendation systems \cite{monti2017geometric, ying2018graph}, and knowledge graphs \cite{schlichtkrull2018modeling,chami2020low}, GNNs have gained significant attention due to their ability to capture complex dependencies and relationships within graph data. 
This chapter aims at providing an overview of the fundamental concepts and techniques related to GNNs, including their historical context, basic components, and key advancements.

\subsection{Fundamental Concepts}

In what follows, some important notions for the GNN framework are introduced.

\paragraph{Neural Networks} Neural networks (and deep learning models in general) have revolutionized various domains, ranging from computer vision to natural language processing. These models consist of interconnected layers of artificial neurons, also known as perceptrons \cite{rosenblatt1958perceptron}, which perform weighted computations and nonlinear transformations on input data. Neural networks excel at learning hierarchical representations, extracting meaningful features and making accurate predictions.

\paragraph{Basics of Graph Neural Networks}
GNNs extend the neural network paradigm to graph-structured data. They aim to generalize deep learning techniques to effectively capture and exploit the rich structural information present in graphs. If we define a graph as a tuple $G = (V, E)$, $V$ being its set of nodes and $E$ being its set of edges, and we suppose $|V| = n$ and we associate to each node $v$ an $f$-dimensional feature vector $\textbf{x}_v$, we can think of grouping all feature vectors in an $n \times f$ matrix \textbf{X}. The edge-level information can be also expressed in a compact way through the adjacency matrix \textbf{A}. Each GNN layer (there may be more than one in multi-layer GNNs) processes these matrices to produce, for each node, a new set of updated feature vectors: 

\[
 \textbf{X}^{(t)} = f(\textbf{X}^{(t-1)}, \textbf{A})   \label{eq:general_gnn}
\]
where $t$ stands for the layer index, and the first layer takes as input $\textbf{X}^{(0)} = \textbf{X}$ (the matrix stacking all input features).

This information-propagating process is often carried out through graph convolutions \cite{kipf2016semi}, which adapt convolutional operations from image analysis to graphs.

\paragraph{Components of Graph Neural Networks}
Several key components contribute to the functioning of GNNs. These components include:

\begin{enumerate}
    \item \textit{Node Representations:} Each node in a graph is associated with a representation that captures its features and attributes. Node representations can be initialized randomly \cite{abboud2020surprising} or through pre-training techniques. GNNs update these representations by aggregating information from neighboring nodes and their own attributes, as intuitively shown in Figure \ref{fig:message_passing}.
    \item \textit{Message Passing: } Message passing is a fundamental operation in GNNs that has been formalized in \cite{gilmer2017neural}. It involves passing information from one node to its neighbors and updating their representations accordingly. This process enables the diffusion of information across the graph and enables nodes to incorporate knowledge from their local surroundings.

    The two general update functions performed in this framework are:

    \begin{equation}
        \textbf{m}_{v}^{(t)} := \text{AGGREGATE}(\{\textbf{x}_{u}^{(t-1)} | u \in \mathcal{N}(v)\}) \; \; \;
        \textbf{x}_{v}^{(t)} := \text{COMBINE}(\textbf{x}_{v}^{(t-1)}, \textbf{m}_{v}^{(t)}).
    \end{equation} 

    \begin{figure}[ht]
        \centering
        \includegraphics[width=3.50in]{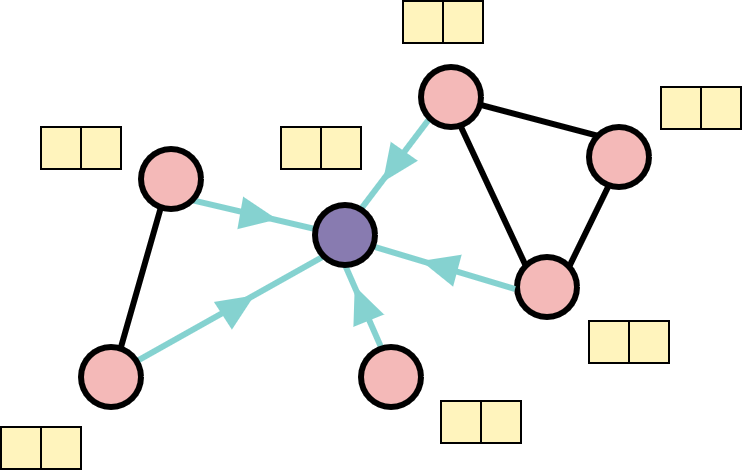}
        \caption{GNNs update node representations through the information from neighboring nodes and their own attributes, that comes in the form of "messages" and goes under some form of aggregation function.}
        \label{fig:message_passing}
    \end{figure}

    \item  \textit{Aggregation Functions: } Aggregation functions define how messages from neighboring nodes are combined to form a summary representation. Popular aggregation functions include summation, mean, max pooling, and attention-based and gating mechanisms \cite{velivckovic2017graph, li2015gated, bresson2017residual}. The choice of the aggregation function influences the expressiveness and effectiveness of the GNN model.

\end{enumerate}

\paragraph{Graph Neural Networks tasks}
Some of the most popular tasks in Graph Neural Networks include node classification, graph classification, and link prediction.

Node classification (Figure \ref{nodeclass}) involves predicting the labels or attributes of individual nodes in a graph. GNNs learn to propagate information across the graph, aggregating neighborhood information to make predictions about each node. 

Graph classification (Figure \ref{graphclass}), on the other hand, aims to classify entire graphs based on their structural properties. The input is a set of graphs, and the goal is to assign a label or class to each graph. GNNs learn to extract relevant features from the graph structure and capture global dependencies to make accurate predictions. 

\begin{figure}[t]
\begin{subfigure}{0.48\textwidth}
    \includegraphics[width=3.2in]{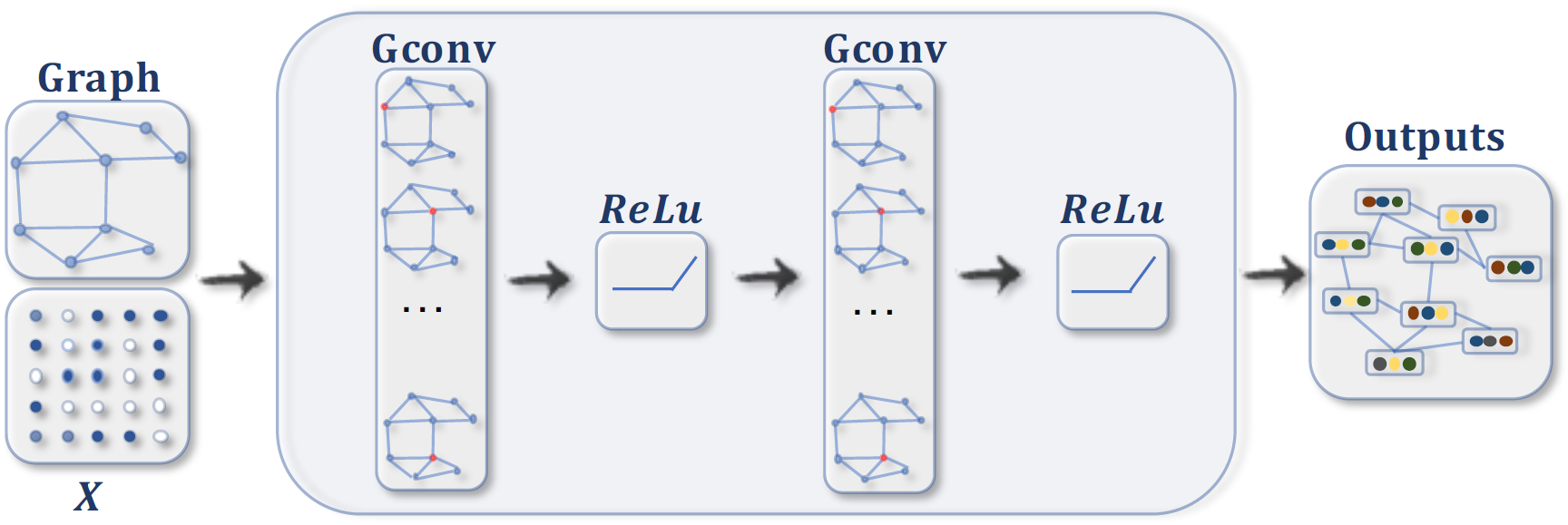}
    \caption{A ConvGNN with multiple convolutional layers that may be used for node classification tasks by adding an appropriate output layer on top of each obtained node embedding.}
    \label{nodeclass}
\end{subfigure}
\hfill
\begin{subfigure}{0.49\textwidth}
    \includegraphics[width=3.2in]{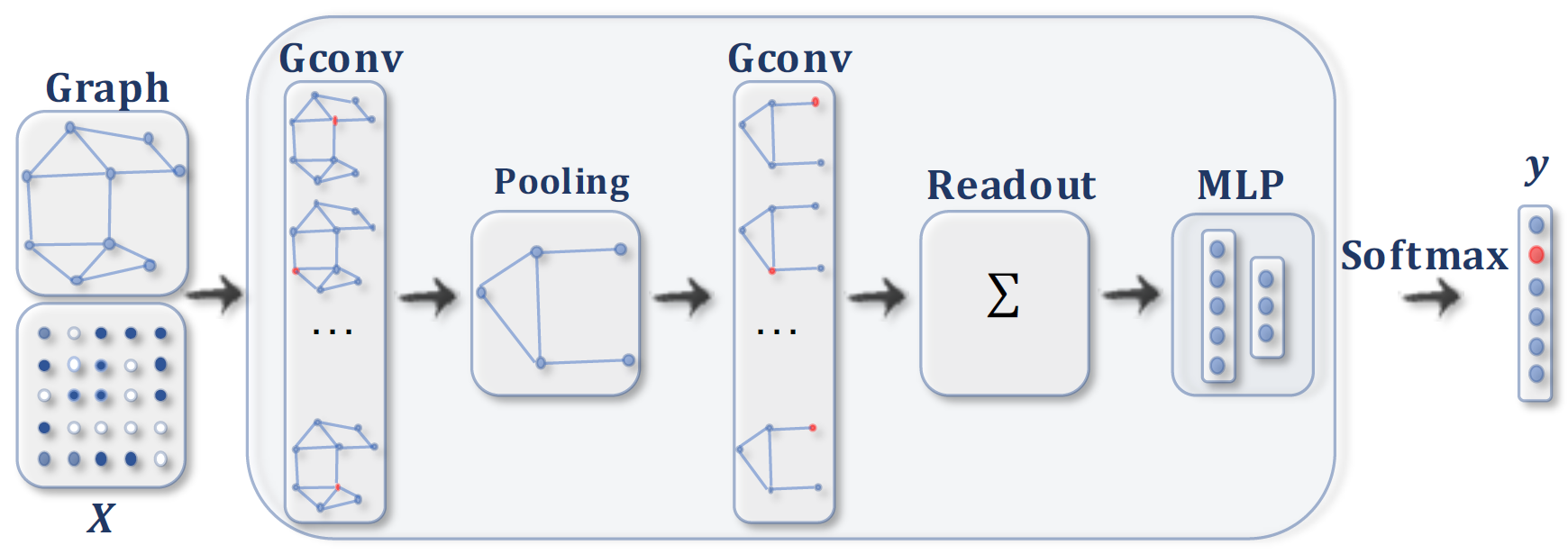}
    \caption{A ConvGNN with pooling and readout layers for graph classification. Pooling layers coarsen a graph into sub-graphs and progressively reduce the number of nodes. Readout layers, instead, pools all nodes into a single representation.}
    \label{graphclass}
\end{subfigure}
\caption{Examples of two different ConvGNN (Convolutional Graph Neural Network) architectures (images from \cite{wu2020comprehensive}).}
\label{fig:graphtasks}
\end{figure}

Link prediction focuses on predicting missing or future connections between nodes in a graph. GNNs learn the underlying patterns and relationships in the graph to infer the likelihood of a connection between two nodes that are not directly linked.

\subsection{Historical Context}
The roots of graph theory, the mathematical foundation of graph-based models, can be traced back to the 18th century with the pioneering work of Leonhard Euler. Euler's work on the Seven Bridges of Königsberg problem laid the foundation for analyzing the structural properties of networks and paved the way for the development of GNNs.

However, it was not until recently that GNNs gained substantial interest in the machine learning community. One of the earliest influential works in this area is the graph Laplacian framework proposed by Belkin and Niyogi in 2001 \cite{belkin2001laplacian}. They introduced the concept of spectral graph theory and demonstrated how Laplacian eigenvectors can be used for dimensionality reduction and semi-supervised learning on graphs.

In 2005, Scarselli et al. proposed a seminal work called Graph Neural Networks \cite{gori2005new}. They introduced a general framework for neural networks on graphs, where each node in the graph has an associated state vector, and information is propagated through the network via a recursive update rule. This work laid the foundation for the development of modern GNN architectures. A representation of a generic GNN layer is in Figure \ref{fig:generic_gnn}.

\begin{figure}[h]
    \centering
    \includegraphics[width=3.50in]{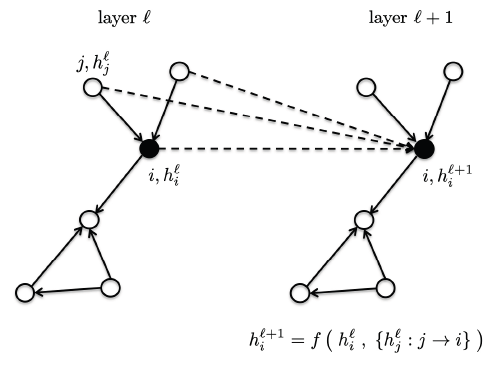}
    \caption{A generic Graph Neural Network Layer (image from \cite{dwivedi2020benchmarking}).}
    \label{fig:generic_gnn}
\end{figure}

\subsection{Key Advancements in GNNs}

Since the introduction of GNNs, several significant advancements have propelled the field forward. Some notable contributions include the ones that follow. The expressions for some of the layer-wise equations were taken from \cite{dwivedi2020benchmarking}.

\hfill

\paragraph{Graph Convolutional Networks (GCNs)} In 2016, Kipf and Welling proposed Graph Convolutional Networks (GCNs) \cite{kipf2016semi}, which revolutionized GNN research. They introduced a simplified version of spectral graph convolutions, leveraging localized first-order approximations of spectral filters. 
Graph convolutional layers \cite{kipf2016semi} enable GNNs to capture local graph structures and patterns. Inspired by convolutional operations in image analysis, graph convolutions learn filters that extract features by aggregating and transforming information from a node's neighbors (Figure \ref{fig:convolution}). 

\begin{figure}[h]
    \centering
    \includegraphics[width=4.00in]{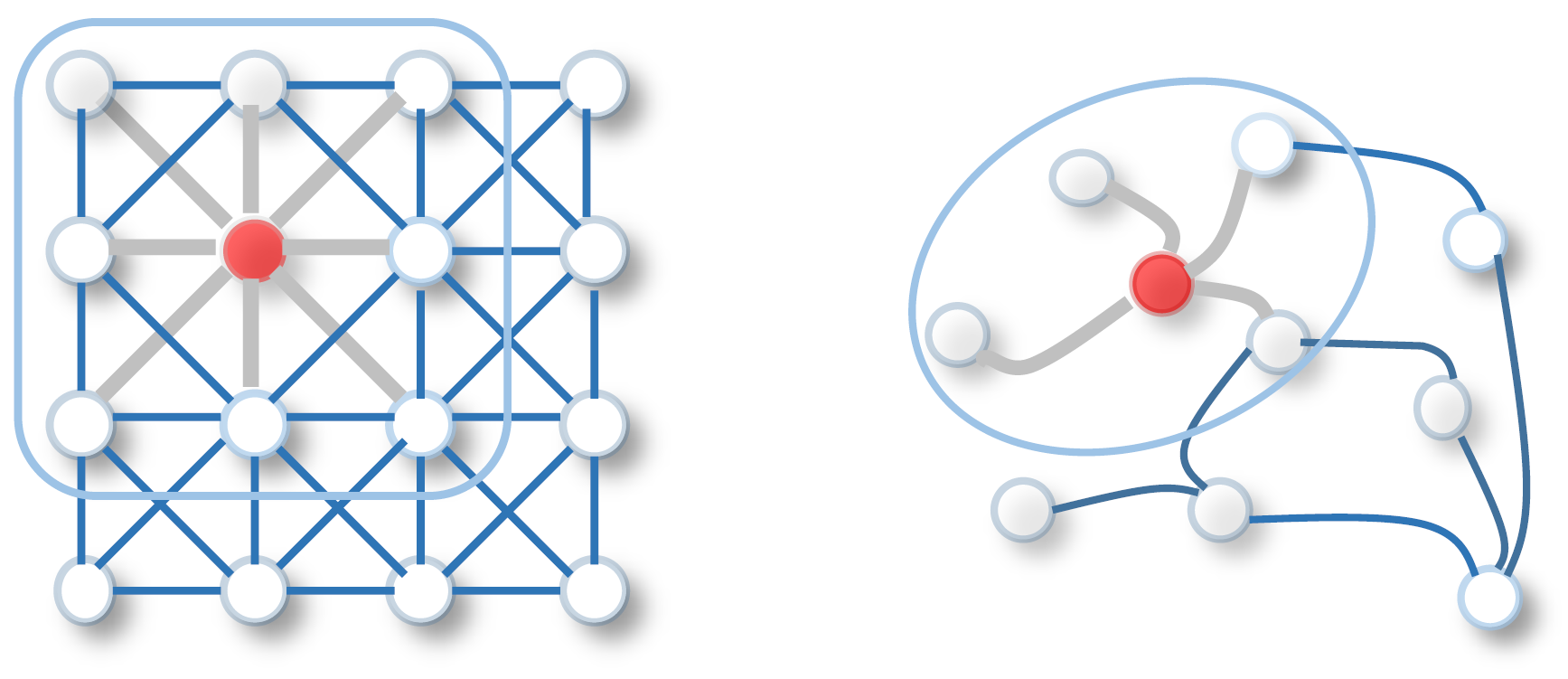}
    \caption{2D Convolution vs. Graph Convolution (images from \cite{wu2020comprehensive}).}
    \label{fig:convolution}
\end{figure}

GCNs provided a scalable and efficient approach for learning node representations by aggregating information from a node's immediate neighbors. This work significantly contributed to the popularity and practicality of GNNs, and a simple representation is in Figure \ref{fig:gcn}. 
One layer of the GCN proposed by Kipf and Welling \cite{kipf2016semi} adapts \eqref{eq:general_gnn} as:
\[
 \textbf{X}^{(t)} = \sigma(\textbf{\^D}^{-\frac{1}{2}}\textbf{\^A} \textbf{\^D}^{-\frac{1}{2}}\textbf{X}^{(t-1)}\textbf{W}^{(t)}). \label{eq:2}
\]
In this equation, $\sigma$ is a non-linear activation function, $\textbf{\^A} = \textbf{A} + \textbf{I}$, \textbf{\^D} is the diagonal node degree matrix of \textbf{\^A}, and $\textbf{W}^{(l)}$ is the $l$-th layer weight matrix, learnt from data through back-propagation. Due to the presence of the adjacency matrix, this kind of update process is local: the update of a node's feature vector at each step depends only on its neighbors. The update equation for a single node $i$ in layer $l$, setting $\sigma = \text{ReLU}$ as they do in the paper, can be expressed in the following way:

\begin{equation}
    \textbf{x}_{i}^{(t)} = \text{ReLU}(\textbf{W}^{(t)}\frac{1}{\sqrt{\text{deg}_{i}}\sqrt{\text{deg}_{j}}}\sum_{j \in \mathcal{N}_i}{\textbf{x}_{j}^{(t)}})
\end{equation}

\begin{figure}[t]
    \centering
    \includegraphics[width=4.00in]{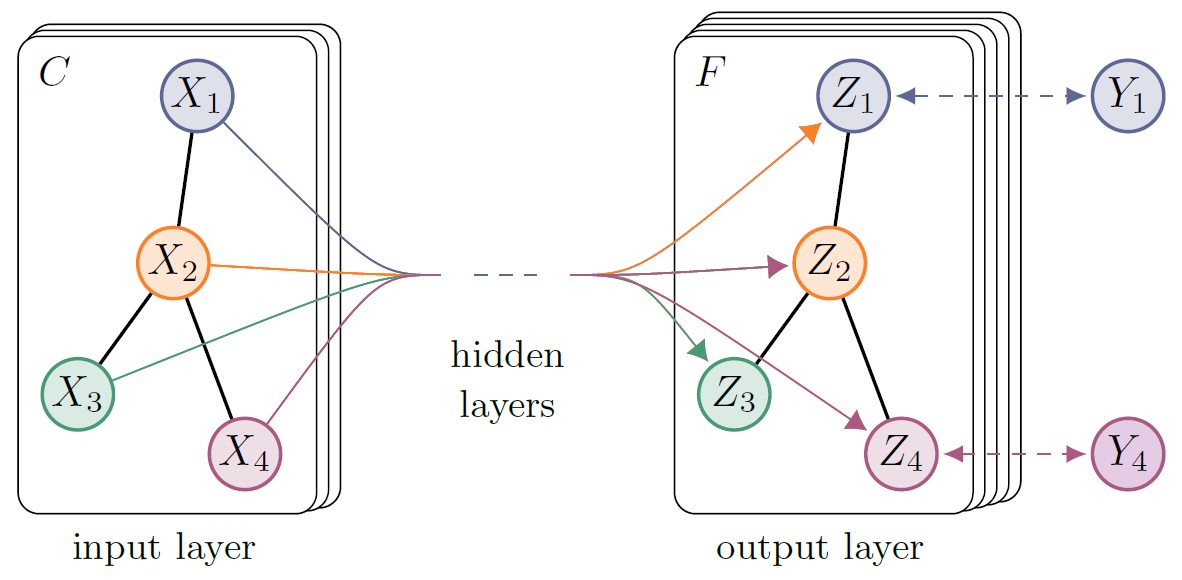}
    \caption{Graph Convolutional Network (image from \cite{kipf2016semi}).}
    \label{fig:gcn}
\end{figure}

\hfill

\paragraph{GraphSAGE}
Hamilton et al. introduced GraphSAGE (Graph Sample and Aggregated) in 2017 \cite{hamilton2017inductive}. GraphSAGE addressed the limitation of GCNs, which require the entire graph structure to be present during training. It proposed a scalable inductive learning framework that can generalize to unseen nodes by sampling and aggregating features from a node's local neighborhood. GraphSAGE achieved state-of-the-art performance on various graph-related tasks and opened doors for applying GNNs to large-scale graphs.
GraphSAGE explicitly incorporates each node's features from the previous layer in the update, in a different way with respect to its the neighborhood features:

\begin{equation}
    \hat{\textbf{x}}_{i}^{(t)} = \text{ReLU}(\textbf{W}^{(t)} \text{Concat} (\textbf{x}_{i}^{(t)},\text{Mean}_{j \in \mathcal{N}_{i}h_{j}^{(t)}})), \; \; \;
    \textbf{x}_{i}^{(t+1)} = \frac{\hat{\textbf{x}}_{i}^{(t+1)}}{\|\hat{\textbf{x}}_{i}^{(t+1)}\|_{2}}
\end{equation}

\hfill

\paragraph{Graph Attention Networks (GAT)}
GAT \cite{velivckovic2017graph}, proposed by Veličković et al. in 2018, introduced an attention mechanism for GNNs. Inspired by the success of attention mechanisms in natural language processing, GAT allows nodes to selectively attend to their neighbors during information aggregation. It learns a mean over each node’s neighborhood
features sparsely weighted by the importance of each neighbor. By assigning attention weights to different neighbors, GAT enabled more expressive and adaptive modeling of graph data, leading to improved performance in various tasks.
The node update equation is expressed by:

\begin{equation}
    {\textbf{x}}_{i}^{(t+1)} = \text{Concat}_{k=1}^{K}(\text{ELU}(\sum_{j \in \mathcal{N}_{i}}{e_{ij}^{k, (t)}\mathbf{W}^{k, (t)}\textbf{x}_{j}^{(t)}})),
\end{equation}
where
\begin{equation}
    e_{ij}^{k, (t)} = \frac{\text{exp}(\hat{e}_{ij}^{k,(t)})}{\sum_{j' \in \mathcal{N}_{i}}\text{exp}(\hat{e}_{ij'}^{k,(t)})},
\end{equation}
\begin{equation}
    \hat{e}_{ij}^{k,(t)} = \text{LeakyReLU}(V^{k,(t)}\text{Concat}(\textbf{W}^{k,(t)}\textbf{x}_{i}^{(t)}, \textbf{W}^{k,(t)}\textbf{x}_{j}^{(t)})).
\end{equation}

\begin{figure}[h]
    \centering
    \includegraphics[width=5.50in]{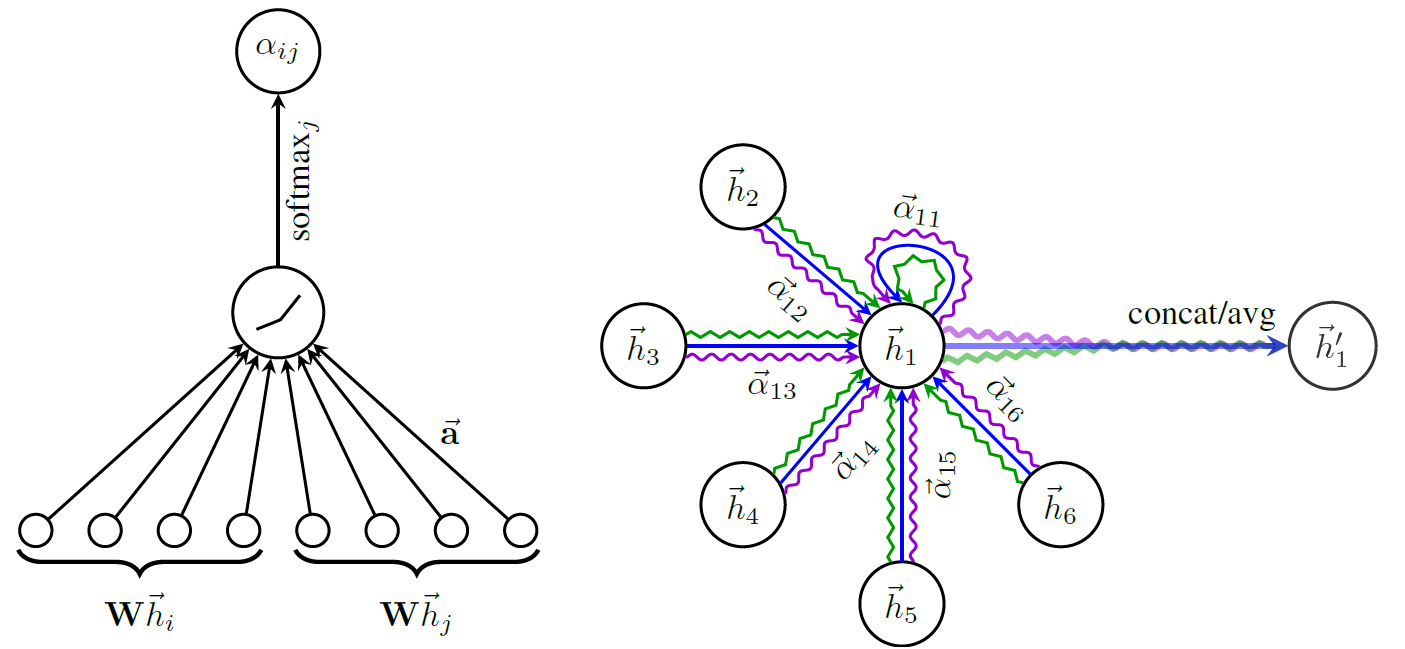}
    \caption{\textbf{Left}: The attention mechanism employed by Graph Attention Network \cite{velivckovic2017graph}. \textbf{Right}: An illustration of multihead
    attention (with 3 heads) by node 1 on its neighborhood, in which different arrow styles and
    colors imply independent attention computations. Average or concatenation is then performed to aggregate the features and obtain $\overrightarrow{h}_{1}'$
    (both images from \cite{velivckovic2017graph}).}
    \label{fig:gat}
\end{figure}

\hfill

\paragraph{Graph Isomorphism Networks (GIN)}
Xu et al. proposed Graph Isomorphism Networks (GIN) \cite{xu2018powerful} in 2019, which introduced a powerful message-passing scheme for GNNs. GIN employs a flexible aggregation function that is based on the Weisfeiler-Lehman Isomorphism Test \cite{weisfeiler1968reduction}, making it more expressive and capable of capturing structural information. GIN demonstrated superior performance on graph classification tasks and further expanded the repertoire of GNN architectures.

The node-wise update is performed as:

\begin{equation}
   \textbf{x}_{i}^{(t+1)} = \text{ReLU}(\textbf{W}_{1}^{(t)}(\text{ReLU}(\text{BN}(\textbf{W}_{2}^{(t)}\hat{\textbf{x}}_{i}^{(t+1)})))), 
\end{equation}
\begin{equation}
    \hat{\textbf{x}}_{i}^{(t+1)} = (1 + \epsilon)\textbf{x}_{i}^{(t)} + \sum_{j \in \mathcal{N}_{i}}{\textbf{x}_{j}^{(t)}},
\end{equation}
where $\text{BN}$ denotes Batch Normalization \cite{ioffe2015batch} and $\epsilon$ is a learnable constant.

\hfill

\paragraph{Gated Graph ConvNet (GatedGCN)}
Li et al. introduced Gated Graph Sequence Neural Networks (GatedGCN) \cite{li2015gated} as a powerful message-passing scheme for processing sequential graph-structured data. GatedGCNs combine the strengths of recurrent neural networks (RNNs \cite{elman1990finding}) and graph neural networks to effectively capture both temporal dependencies and structural information within graphs.

In GatedGCN, each node in the graph is associated with a hidden state vector that evolves over time. The model employs gated recurrent units (GRUs \cite{cho2014learning}) to update the hidden states based on information from neighboring nodes and previous time steps. The gating mechanism allows the network to selectively integrate and update node representations, facilitating the capturing of long-range dependencies and dynamic temporal patterns within the graph.

The node-wise update can be expressed as follows, in which $e_{ji}$ expresses the feature associated to edge $(j, i)$:

\begin{equation}
    \hat{\textbf{x}}_{i}^{(t)} = \sum_{j \in \mathcal{N}_{i}} {e_{ji}\Theta \textbf{x}_{j}^{(t-1)}},
\end{equation}
\begin{equation}
    \textbf{x}_{i}^{(t)} = \text{GRU}(\hat{\textbf{x}}_{i}^{(t)}, \textbf{x}_{i}^{(t-1)}).
\end{equation}

The initialization of the node embeddings is performed as

\begin{equation}
    \textbf{x}_{i}^{(0)} = \textbf{x}_{i} \| 0
\end{equation}

in which $\textbf{x}_{i}$ refers to the $i$-th node's input features.

\hfill

\paragraph{Residual Gated Graph ConvNet (ResGatedGCN)}
Bresson et al. introduced Residual Gated Graph ConvNets (ResGatedGCNs) \cite{bresson2017residual} as a powerful architecture for graph neural networks. ResGatedGCN incorporates residual connections, batch normalization and edge gates mechanisms into the convolutional layers, enabling the network to selectively propagate information from neighboring nodes. This gating mechanism enhances the model's ability to capture long-range dependencies and selectively fuse information from different nodes, resulting in improved expressiveness and effectiveness in capturing complex graph structures. The GatedGCN model explicitly maintains edge features at each layer, and they also go through update operations along with node features:

\begin{equation}
    \textbf{x}_{i}^{(t+1)} = \textbf{x}_{i}^{(t)} + \text{ReLU}(\text{BN}(\textbf{W}_{1}^{(t)}\textbf{x}_{i}^{(t)} + \sum_{j \in \mathcal{N}_{i}}{e_{ij}^{(t)}} \odot \textbf{W}_{2}^{(t)}\textbf{x}_{j}^{(t)})),
\end{equation}
where $\odot$ is the Hadamard product, and the edge gates $e_{ij}^{(t)}$ are defined as:
\begin{equation}\label{eq:gates}
    e_{ij}^{(t)} = \frac{\sigma(\hat{e}_{ij}^{(t)})}{\sum_{j' \in \mathcal{N}_{i}}{\sigma(\hat{e}_{ij'}^{(t)}) + \epsilon}},
\end{equation}
\begin{equation}
    \hat{e}_{ij}^{(t)} = \hat{e}_{ij}^{(t-1)} + \text{ReLU}(\text{BN}(\textbf{W}_{3}^{(t)}\textbf{x}_{i}^{(t-1)} + \textbf{W}_{4}^{(t)}\textbf{x}_{j}^{(t-1)} +
    \textbf{W}_{5}^{(t)}\hat{e}_{ij}^{(t-1)})),
\end{equation}

where $\sigma$ is the sigmoid function and $\epsilon$ is a small fixed constant for numerical stability. The edge gates that are put in place in Equation \ref{eq:gates} can be thought as a soft attention process.

\hfill




\subsection{Current Research Directions}
The field of GNNs continues to advance rapidly, with ongoing research exploring various directions. Some of the current research areas include:

\paragraph{Graph Attention Mechanisms}
Further developments in attention mechanisms for GNNs aim to enhance the model's ability to capture important features and relationships within graphs \cite{yun2019graph,velivckovic2017graph,barbero2022sheaf}. Attention mechanisms can be extended to capture long-range dependencies and enable more fine-grained interactions between nodes.

\paragraph{Incorporating Temporal Dynamics}
Many real-world graphs exhibit temporal dynamics, such as evolving social networks or dynamic molecular systems. Current research focuses on incorporating temporal information into GNNs to model and predict dynamic behavior. Temporal GNNs, such as ST-GCN \cite{yan2018spatial} and EvolveGCN \cite{pareja2020evolvegcn}, have shown promising results in capturing temporal dependencies \cite{longa2023graph}.


\paragraph{Graph Neural Networks for Graph Generation}
 In recent years, GNNs have also been applied to the task of graph generation. Notable works in this area include GraphRNN \cite{you2018graphrnn}, GraphVAE \cite{simonovsky2018graphvae}, and Junction Tree Variational Autoencoder (JT-VAE) \cite{jin2018junction}. These models leverage GNNs to learn the generative process of graphs, enabling the synthesis of new graph structures with desired properties. Graph generation has applications in drug discovery \cite{you2018graph}, molecule design \cite{fan2023generative,vignac2022digress}, and social network analysis \cite{bojchevski2018netgan}, among others.

\paragraph{Scalability and Efficiency}
As graph sizes continue to grow, there is a need for scalable and efficient GNN models. Research focuses on developing techniques to handle large-scale graphs, such as sampling-based methods \cite{chen2018fastgcn}, parallelization strategies \cite{chiang2019cluster}, and graph sparsification algorithms \cite{zeng2019graphsaint,rong2019dropedge}.

\subsection{Heterophily and Oversmoothing in GNNs}
Despite their effectiveness, GNNs suffer from several challenges that can impact their performance. Two prominent challenges are oversmoothing and heterophily, which can lead to the loss of discriminative power and poor generalization.

\paragraph{Oversmoothing} 
Oversmoothing refers to a phenomenon where GNNs tend to produce similar node representations regardless of their structural characteristics or individual properties. This occurs due to the repeated aggregation and transformation steps performed by GNNs, which can cause the diffusion of information throughout the graph. As the number of layers increases, the representations of nodes become increasingly similar, ultimately resulting in a loss of discriminative power.

The oversmoothing problem arises from the nature of message passing in GNNs. At each layer, nodes aggregate information from their neighbors and update their own representations based on the aggregated information. While this iterative process helps capture local dependencies, it can also cause the information to propagate too widely, leading to an oversmoothed representation. Oversmoothing is particularly prevalent in deep GNN architectures, where the repeated message passing exacerbates the problem. In general, the higher the number of layers the more oversmoothing affects node features for the considered graph.

\paragraph{Heterophily} Heterophily refers to the phenomenon where nodes in a graph have diverse structural or attribute properties, making it challenging for GNNs to effectively capture and model such heterogeneity. In many real-world networks, nodes may exhibit significant structural differences or have distinct attribute values, which can hinder the learning process of GNNs. This concept is strictly connected (as it is its opposite) to the \textit{homophily} level of a graph, that is generally defined as the rate of intra-class edges with respect to the whole set of edges. Intuitively, in homophilic graphs nodes tend to connect to other similar nodes, where similarity is to be intended as sharing the same node labels. 

\begin{figure}[ht]
\begin{subfigure}{0.48\textwidth}
    \includegraphics[width=3.7in]{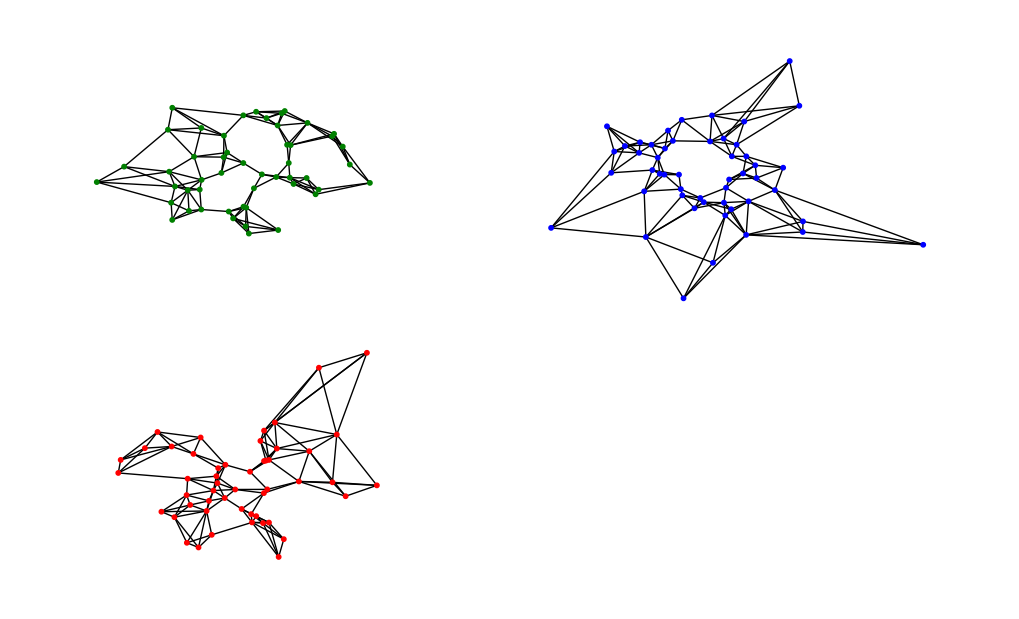}
    \caption{Fully homophilic graph, in which all edges connect nodes of the same class (homophily level 1).}
    \label{homophily}
\end{subfigure}
\hfill
\begin{subfigure}{0.49\textwidth}
    \includegraphics[width=3.7in]{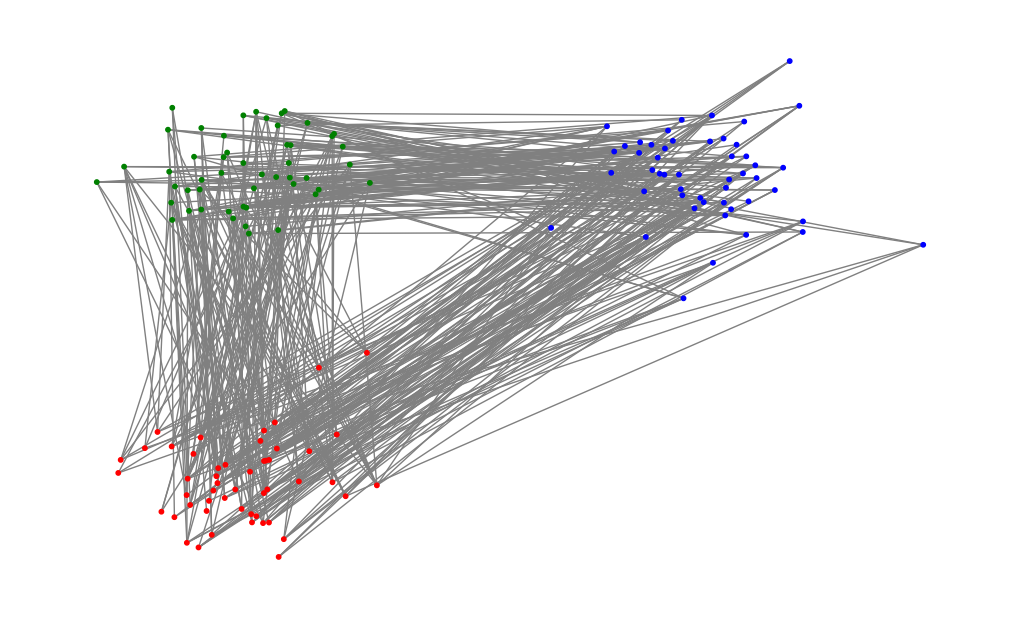}
    \caption{Fully heterophilic graph, in which all edges connect nodes belonging to different classes (homophily level 0).}
    \label{heterophily}
\end{subfigure}
\caption{Starting from the same set of nodes belonging to three classes red, green, and blue, it is possible to  generate a graph with highest or lowest possible homophily levels with just a different configuration of edges.}
\label{fig:homo_hetero}
\end{figure}

The heterophily problem arises due to the limitations of the neighborhood aggregation mechanism employed by GNNs. GNNs typically aggregate information from neighboring nodes, treating all neighbors equally without considering their heterogeneity and relying on the strong (and often wrong) assumption of homophily. This approach may result in the loss of important information or the propagation of irrelevant information, leading to suboptimal representations and predictions. Heterophily becomes more pronounced in graphs with high structural diversity or when the attributes of nodes exhibit significant variations.

\paragraph{Impact on GNN performance} Both oversmoothing and heterophily can have detrimental effects on the performance of GNNs, affecting their ability to learn meaningful representations and make accurate predictions. Oversmoothing leads to the convergence of node representations, making it difficult to distinguish between nodes with different characteristics. This can result in poor discriminative power and limit the network's ability to capture fine-grained patterns in the data.

Heterophily, on the other hand, introduces challenges in capturing the structural and attribute diversity present in the graph. GNNs may struggle to differentiate between nodes with distinct properties or fail to effectively leverage the available information. As a consequence, the predictive performance of GNNs can be compromised, especially in scenarios where heterogeneity plays a crucial role.

\paragraph{Addressing Oversmoothing and Heterophily} Addressing the problems of oversmoothing and heterophily is an active area of research in the GNN community. Various techniques have been proposed to mitigate these challenges and enhance the performance of GNNs.

To combat oversmoothing, researchers have proposed depth-aware architectures that control the diffusion of information by selectively aggregating neighbors' representations \cite{zeng2019graphsaint,rong2019dropedge,li2018deeper}. Additionally, graph coarsening techniques have been employed to reduce the number of layers in deep GNNs, preventing excessive information propagation \cite{gao2019graph,wu2019simplifying}.
To tackle heterophily, instead, a popular solution involves attention and gating mechanisms being integrated into GNNs to selectively attend to relevant neighbors based on their importance \cite{velivckovic2017graph, li2015gated, bresson2017residual, barbero2022sheaf}.

Along with well-performing designs that tackle the described problems separately, also works that theoretically connect the two have been proposed lately \cite{yan2022two,bodnar2022neural}. Their analysis is very different in terms of method and assumptions, and while Yan et al. in  \cite{yan2022two} have a probabilistic approach, Bodnar et al. in the Neural Sheaf Diffusion paper \cite{bodnar2022neural} focus on diffusion PDEs and bring into play new mathematical tools from cellular sheaf theory to analyze the problem.

\section{Sheaf Neural Networks}\label{sec:section_snn}

Sheaf Neural Networks were introduced in recent years \cite{hansen2020sheaf,bodnar2022neural,barbero2022sheaf,suk2022surfing} within the framework of Graph Neural Networks with the goal of bringing in additional useful properties with respect to standard GNNs in specific settings, first of all to address the oversmoothing issue and handle heterophilic datasets in graph classification. 
The basic idea consists in equipping the graph with a richer geometrical structure (a cellular sheaf) and taking advantage of its expressiveness in feature space, its properties in the diffusion equation and the characteristics of the convolutional models obtained by discretizing such equation.

For a formal definition of the terms used in this section, regarding cellular sheaves, their Laplacian and related concepts, please refer to Chapter 2 (Mathematical Preliminaries).
For reference, a schema of the (linear) sheaf Laplacian is reported in Figure \ref{fig:sheaf_lap}.

\begin{figure}[ht]
    \centering
    \includegraphics[width=4.50in]{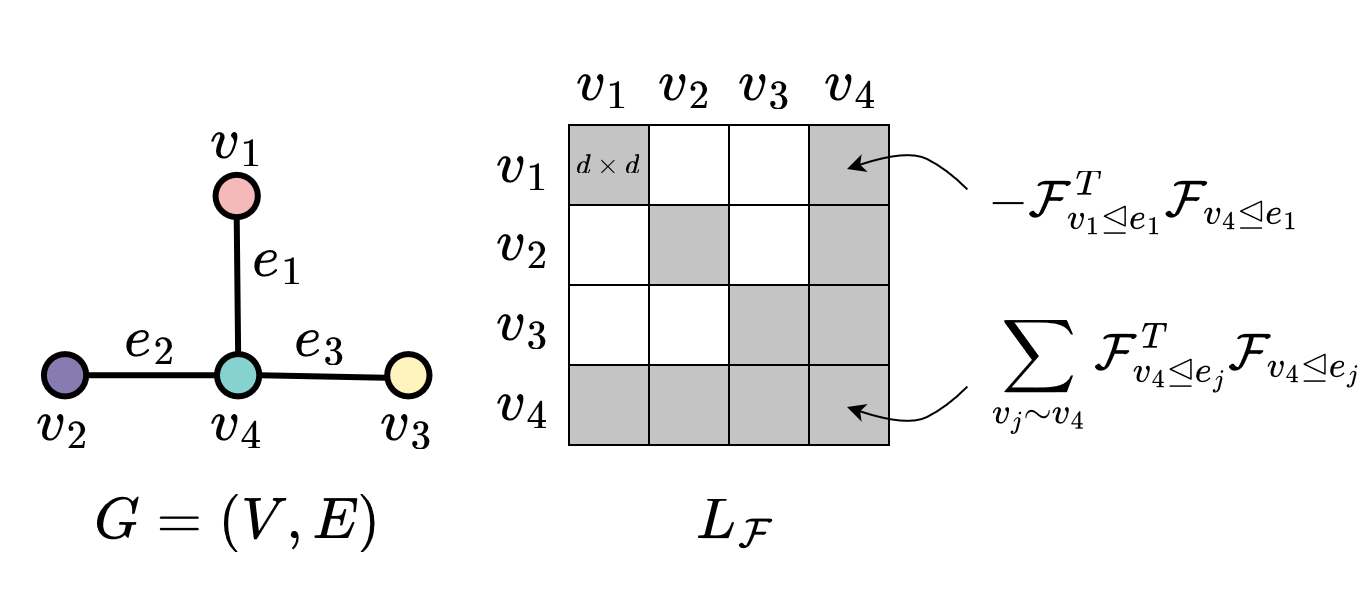}
    \caption{Given a graph $G = (V, E)$ with a sheaf structure on it, the sheaf Laplacian $L_{\mathcal{F}}$ has exactly this form.}
    \label{fig:sheaf_lap}
\end{figure}

As already stated, in the following we will always consider cellular sheaves with stalks being $d$-dimensional vector spaces on $\mathbb{R}$.

\subsection{Motivation: the Separation Power} \label{motivation}

Let $(G, \mathcal{F})$ be a cellular sheaf on an undirected graph $G = (V, E)$, with $d$-dimensional node feature vectors $\textbf{x}_v \in \mathcal{F}_v$. All the individual node features can be stacked as a single vector $\textbf{x} \in C^{0}(G, \mathcal{F})$. Allowing also for $f$ feature channels, everything is represented by a single matrix $\textbf{X} \in \mathbb{R}^{(nd) \times f}$, with columns being vectors in $C^{0}(G, \mathcal{F})$.

\textit{Sheaf diffusion} is a process on $(G, \mathcal{F})$ governed by the following partial
differential equation:
\begin{equation}
 \textbf{X}(0) = \textbf{X}, \, \, \, \,     \dot{\mathbf{X}}(t) = - \Delta_{\mathcal{F}}\textbf{X}(t). \label{eq:sheaf_diff_contin}
\end{equation}

It has been shown in \cite{hansen2019toward} that, in the time limit, each feature channel gets projected into the vector space of harmonic signals $ker(\Delta_{\mathcal{F}})$. As discussed in Chapter 2, this space is isomorphic to the space of global sections and contains the signals that agree with the restriction maps of the sheaf along all the edges. Hence, the process of sheaf diffusion can be interpreted as a \textit{synchronization} mechanism. Subsequently, we investigate the capacity of specific categories of sheaves to linearly distinguish the features in relation to the diffusion processes they generate, and how it can prevent excessive smoothing. The following results are taken from \cite{bodnar2022neural}, and their proofs and further details can be found in the appendix of such paper.

\begin{definition}[\cite{bodnar2022neural}]
A hypothesis class of sheaves with $d$-dimensional stalks $\mathcal{H}^d$ has \textit{linear separation power} over a family of graphs $\mathcal{G}$ if for any labelled graph $G = (V, E) \in \mathcal{G}$, there is a sheaf $(G, \mathcal{F}) \in \mathcal{H}^d$ that can linearly separate the classes of $G$ in the time limit of Equation \ref{eq:sheaf_diff_contin} for 
almost all initial conditions. 
\end{definition}

\begin{remark}
    The restriction to a set of initial conditions that is dense in the ambient space is necessary
because diffusion behaves in the limit like a projection in the harmonic space and there will always
be degenerate initial conditions that will yield a zero projection.
\end{remark} 

The choice of the sheaf impacts the behavior of the diffusion process, which leads to different separation capabilities in classification tasks. To show this, we now define a set of increasingly general classes of sheaves and analyze their properties:

\begin{enumerate}
    \item \textbf{Symmetric invertible}: $\mathcal{H}_{\text{sym}}^d := \{ (G, \mathcal{F}) : \mathcal{F}_{v \unlhd e} = \mathcal{F}_{u \unlhd e},\ \text{det}(\mathcal{F}_{v \unlhd e}) \neq 0 \}$.

    For
    $d$ = 1, the sheaf Laplacians induced by this class of sheaves coincide with the set of the 
    weighted graph Laplacians with strictly positive weights, hence this hypothesis class is of particular interest since it
    includes those graph Laplacians typically used by graph convolutional models such as GCN \cite{kipf2016semi}. These sheaf Laplacians can linearly separate the classes in
    binary classification settings under some homophily assumptions.
    On the contrary, under certain heterophilic conditions, this hypothesis class is not powerful enough to
    linearly separate the two classes no matter what the initial conditions are \cite{bodnar2022neural}.
    
    \item \textbf{Non-symmetric invertible}: $\mathcal{H}_{\text{non-sym}}^d := \{ (G, \mathcal{F}) :  \text{det}(\mathcal{F}_{v \unlhd e}) \neq 0 \}$.

    This larger hypothesis class
is able to address the limitations described for $\mathcal{H}_{\text{sym}}^d$ by allowing non-symmetric relations, and the intuition is that it would allow for negative transport maps $\mathcal{F}^\top_{v \unlhd e}\mathcal{F}_{u \unlhd e} = -1$ for inter-class edges and $+1$ for intra-class edges.

So far we have only studied the effects of changing the type of sheaves in the case $d=1$, for which the following holds:

    \begin{proposition}[\cite{bodnar2022neural}]\label{prop:impossible_separation}
        Let $G$ be a connected graph with $C \geq 3$ classes. Then, $\mathcal{H}^1$ cannot linearly separate the classes of $G$ for any initial conditions $\textbf{X} \in \mathbb{R}^{n \times f}$. 
    \end{proposition}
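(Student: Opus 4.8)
The plan is to show that for any $1$-dimensional sheaf the diffusion of Equation~\ref{eq:sheaf_diff_contin} drives every feature channel of $\mathbf{X}$ into the (at most one‑dimensional) kernel of $\Delta_{\mathcal F}$, so that the limiting node features are collinear and pass through the origin of $\mathbb{R}^f$; and then to observe that $C\geq 3$ classes lying on a single line through the origin cannot be linearly separated.

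First I would identify the time limit with a projection. Since $\Delta_{\mathcal F}=D^{-1/2}L_{\mathcal F}D^{-1/2}$ is symmetric positive semidefinite, $\mathbf{X}(t)=e^{-t\Delta_{\mathcal F}}\mathbf{X}\to P\,\mathbf{X}$ as $t\to\infty$, where $P$ is the orthogonal projection onto $\ker\Delta_{\mathcal F}$. For $d=1$ every restriction map is a nonzero scalar $a_{v\unlhd e}$ and $D$ is the positive diagonal matrix with $D_{vv}=\sum_{v\unlhd e}a_{v\unlhd e}^{2}>0$ (a connected graph on $\ge 3$ vertices has no isolated vertex), so $\ker\Delta_{\mathcal F}=D^{1/2}\ker L_{\mathcal F}=D^{1/2}H^{0}(G,\mathcal F)$; in particular $\dim\ker\Delta_{\mathcal F}=\dim H^{0}(G,\mathcal F)$.

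Next I would bound this dimension. A global section $\mathbf g\in H^{0}(G,\mathcal F)$ satisfies $a_{v\unlhd e}g_v=a_{u\unlhd e}g_u$ on every edge $e=(v,u)$; propagating this relation from a fixed root along a spanning tree of the connected graph $G$ shows that $\mathbf g$ is determined by the single value $g_{v_0}$, and, because every $a_{v\unlhd e}\neq 0$, $g_{v_0}=0$ forces $\mathbf g=\mathbf 0$. Hence $\dim H^{0}(G,\mathcal F)\leq 1$. If it is $0$, then $\mathbf X(\infty)=\mathbf 0$ and all node features coincide at the origin, so no classifier distinguishes the $C\geq 3$ classes. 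If it is $1$, fix a spanning vector $\mathbf h$ of $\ker\Delta_{\mathcal F}$ (with all entries nonzero, by the propagation argument); then $\mathbf X(\infty)=\|\mathbf h\|^{-2}\mathbf h\mathbf h^{\top}\mathbf X$, so the final feature of node $v$ equals $h_v\,\mathbf c$ with $\mathbf c=\|\mathbf h\|^{-2}\mathbf h^{\top}\mathbf X\in\mathbb{R}^{1\times f}$: every final node feature lies on the line $\mathbb{R}\mathbf c$ through the origin, at the scalar coordinate $t=h_v$. A bias‑free linear classifier $\mathbf W$ restricted to this line is $t\mapsto t\,(\mathbf W\mathbf c)$, whose prediction $\operatorname{argmax}_k t(\mathbf W\mathbf c)_k$ depends on $t$ only through $\operatorname{sign}(t)$ and hence takes at most two values; equivalently, two classes can be cut apart by a homogeneous hyperplane only if they lie on opposite sides of the origin of this line, which is impossible for three. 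Either way at most two of the $C\geq 3$ classes are labelled correctly, for every $(G,\mathcal F)\in\mathcal H^{1}$ and every $\mathbf X$, which is the Proposition.

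\textbf{Where the difficulty lies.} Two points deserve care. One is the reading of $\mathcal H^{1}$: the argument needs \emph{invertible} restriction maps, since otherwise a sheaf could effectively delete edges, disconnect $G$, and push $\dim H^{0}(G,\mathcal F)$ above $1$, destroying the ``single line'' conclusion. The other, and the real crux, is that ``linearly separate'' must mean separation by \emph{bias‑free} linear maps: on a line through the origin an affine classifier can separate arbitrarily many classes as long as they occur in the right order along the line, and such a favourable order can in fact be realized by a suitable $1$-dimensional sheaf (the signs of the harmonic signal $\mathbf h$, and largely its magnitudes, are free), so the homogeneous convention is exactly what makes the statement true.
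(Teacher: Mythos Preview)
The thesis does not prove this proposition: it quotes the result from \cite{bodnar2022neural} and explicitly defers the proof to that paper's appendix. So there is no in-paper argument to compare against here.

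That said, your argument is correct and is essentially the standard one. For a connected graph with $d=1$ and invertible (nonzero scalar) restriction maps, propagating the section constraint along a spanning tree gives $\dim H^{0}(G,\mathcal F)\le 1$, hence $\dim\ker\Delta_{\mathcal F}\le 1$; the diffusion limit $e^{-t\Delta_{\mathcal F}}\mathbf X\to P\mathbf X$ then forces every row of $\mathbf X(\infty)$ onto a single line through the origin of $\mathbb{R}^f$, and a homogeneous linear classifier restricted to that line depends only on $\operatorname{sign}(t)$, so at most two labels are ever produced. This is exactly the mechanism behind the original result.

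Your two caveats are both on point and worth keeping. First, invertibility of the restriction maps is genuinely needed and is consistent with the paper's context: every hypothesis class listed just before the proposition carries the condition $\det(\mathcal F_{v\unlhd e})\neq 0$, so $\mathcal H^{1}$ here is the class of $1$-dimensional sheaves with nonzero scalar maps. Second, the bias-free reading of ``linearly separate'' is indeed the convention in \cite{bodnar2022neural}; your remark that affine classifiers could separate arbitrarily many classes ordered along a line (via the upper envelope of affine functions) is exactly why that convention is essential for the statement to be true as written.
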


    This means that in the infinite depth setting, sufficiently high stalk dimension $d$ (which is not related to $f$) is needed in order to solve tasks involving more than two classes. 

    \item \textbf{Diagonal invertible}: $\mathcal{H}_{\text{diag}}^d := \{ (G, \mathcal{F}) :  \text{diagonal} \mathcal{F}_{v \unlhd e}, \text{det}(\mathcal{F}_{v \unlhd e}) \neq 0 \}$.

The sheaves in this class can be seen as $d$ independent sheaves from $\mathcal{H}^1$ encoded in the $d$-dimensional diagonals of their restriction maps. The following holds:

    \begin{proposition}[\cite{bodnar2022neural}]
        Let $\mathcal{G}$ be the set of connected graphs with nodes belonging to $C \geq 3$ classes. Then for $d \geq C$, $\mathcal{H}_{\text{diag}}^d$ has linear separation power over $\mathcal{G}$.  
    \end{proposition}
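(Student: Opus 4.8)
The plan is to exploit the coordinate-wise structure of diagonal sheaves. A sheaf $(G,\mathcal{F}) \in \mathcal{H}_{\text{diag}}^d$ with diagonal restriction maps decouples the $d$-dimensional stalks into $d$ independent scalar channels, and on channel $c$ one obtains an ordinary $d=1$ sheaf $\mathcal{F}^{(c)}$ with scalar restriction maps $(\mathcal{F}_{v \unlhd e})_{cc}$. Hence $\Delta_{\mathcal{F}}$ is block-diagonal across channels and $\ker(\Delta_{\mathcal{F}}) = \bigoplus_{c=1}^{d} \ker(\Delta_{\mathcal{F}^{(c)}})$, so it suffices to design, for each class $c \in \{1,\dots,C\}$, a scalar sheaf on channel $c$ whose (essentially unique) harmonic signal encodes ``class $c$ versus the rest.''

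First I would construct these scalar sheaves. Fix $c$ and let $h^{(c)} \in \mathbb{R}^{V}$ be the $\pm 1$ indicator with $h^{(c)}_v = +1$ if $v$ has label $c$ and $h^{(c)}_v = -1$ otherwise. Set $(\mathcal{F}_{v\unlhd e})_{cc} := h^{(c)}_v$ for every incident pair $v \unlhd e$; padding the remaining channels (when $d>C$) with $1$'s, each full restriction map $\mathrm{diag}(h^{(1)}_v,\dots,h^{(C)}_v,1,\dots,1)$ is invertible, so $(G,\mathcal{F}) \in \mathcal{H}_{\text{diag}}^d$. A direct check shows $h^{(c)}$ is harmonic for $\mathcal{F}^{(c)}$: along any edge $e = v\to u$ one has $(\mathcal{F}_{u\unlhd e})_{cc}h^{(c)}_u - (\mathcal{F}_{v\unlhd e})_{cc}h^{(c)}_v = (h^{(c)}_u)^2 - (h^{(c)}_v)^2 = 0$. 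Moreover, on a connected graph a nonzero scalar harmonic signal is nowhere zero (the relation $r_{u,e}x_u = r_{v,e}x_v$ with invertible $r$'s propagates a zero along every edge), and any two such signals are proportional; therefore $\ker(\Delta_{\mathcal{F}^{(c)}})$ is exactly one-dimensional, spanned by the (positive) degree-rescaled version of $h^{(c)}$ dictated by the normalization $\Delta_{\mathcal{F}^{(c)}} = D^{-1/2}L_{\mathcal{F}^{(c)}}D^{-1/2}$. Writing $g^{(c)}$ for the unit vector spanning $\ker(\Delta_{\mathcal{F}^{(c)}})$ inside $C^{0}(G,\mathcal{F})$, the harmonic space is $\mathrm{span}\{g^{(1)},\dots,g^{(d)}\}$, with $g^{(c)}$ supported on channel $c$ and carrying the sign pattern $+$ on class $c$, $-$ elsewhere.

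Next I would read off the time limit of Equation \ref{eq:sheaf_diff_contin}. Since $\Delta_{\mathcal{F}}$ is symmetric positive semidefinite, $\mathbf{X}(t) \to \Pi\,\mathbf{X}(0)$ as $t\to\infty$, where $\Pi$ is the orthogonal projection onto $\ker(\Delta_{\mathcal{F}})$; applied column-wise, $\Pi\mathbf{x} = \sum_{c}\langle \mathbf{x}, g^{(c)}\rangle g^{(c)}$. Thus at a vertex $v$ of label $c_0$ the limiting feature in channel $c$ equals $\beta_c\,\sqrt{\deg v}\,h^{(c)}_v$, i.e. $+\beta_{c_0}\sqrt{\deg v}$ in channel $c_0$ and $-\beta_c\sqrt{\deg v}$ in all other channels, where $\beta_c$ collects the projection coefficients $\langle \mathbf{X}(0), g^{(c)}\rangle$ (a row vector when $f>1$). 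These are nonzero for every $c$ exactly when $\mathbf{X}(0)$ avoids the finitely many proper subspaces $\{\langle \mathbf{X}(0),g^{(c)}\rangle = 0\}$, hence for almost all initial conditions. When all $\beta_c \neq 0$, the limiting representation is constant on each class, and after contracting channels against a generic vector of $\mathbb{R}^{f}$ the $C$ classes land at points separated by the linear classifier $\mathrm{argmax}_c\,\mathrm{sign}(\beta_c)\cdot(\text{channel }c)$ — which is precisely linear separation in the time limit. Since $d\ge C$ suffices, $\mathcal{H}_{\text{diag}}^d$ has linear separation power over $\mathcal{G}$.

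I expect the main obstacle to be the middle step: rigorously pinning down that $\ker(\Delta_{\mathcal{F}^{(c)}})$ is one-dimensional with the stated sign pattern (including the bookkeeping between $L_{\mathcal{F}^{(c)}}$ and $\Delta_{\mathcal{F}^{(c)}}$, whose kernels differ by the positive factor $D^{1/2}$), and then verifying that the generic-support property of the projection really yields multi-class linear separability rather than merely distinct class representations. The coordinate splitting and the $\pm 1$ restriction-map construction are the easy parts.
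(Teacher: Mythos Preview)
The paper itself does not prove this proposition; it is quoted from \cite{bodnar2022neural} with the proofs explicitly deferred to that work's appendix, so there is no in-paper argument to compare against. Your proposal is correct and matches the approach taken there: decouple the diagonal sheaf into $d$ independent scalar channels, set channel $c$'s restriction maps to $h^{(c)}_v=\pm1$ encoding ``class $c$ versus the rest,'' so that on a connected graph the harmonic space of each scalar sheaf is one-dimensional and spanned (after the $D^{1/2}$ rescaling) by the signed class indicator; the time-limit projection then produces features whose sign pattern across channels identifies the class, and an argmax-of-linear-scores classifier separates all $C$ classes whenever every $\beta_c\neq 0$, which is a generic condition on $\mathbf{X}(0)$.

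One small slip worth tightening: the limiting representation is not ``constant on each class'' because of the factor $\sqrt{\deg v}$, but your argmax classifier is insensitive to this positive per-node scale (it is common to all $d$ channels at node $v$), so the conclusion is unaffected.
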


    \item \textbf{Orthogonal}: $\mathcal{H}_{\text{orth}}^d := \{ (G, \mathcal{F}) :  \mathcal{F}_{v \unlhd e} \in O(d)$\}.

    This is the class of $O(d)$-bundles, and these kinds of restriction maps are more complex with respect to the diagonal ones. They bring in the advantage that lower-dimensional stalks can be used to achieve linear separation in the
presence of even more classes with respect to the diagonal case:

    \begin{theorem}[\cite{bodnar2022neural}]
        Let $\mathcal{G}$ be the class of connected graphs with $C \leq 2d$ classes. Then, for all $d \in \{2, 4\}$, $\mathcal{H}_{\text{orth}}^{d}$ has linear separation power over $\mathcal{G}$. 
    \end{theorem}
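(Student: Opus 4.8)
The plan is to exhibit, for each connected graph $G$ with $C \le 2d$ classes (and $d \in \{2,4\}$), an explicit $O(d)$-bundle whose harmonic space separates the classes in the diffusion limit of Equation \ref{eq:sheaf_diff_contin}. Recall from the Hodge-theoretic theorem above that in the time limit each feature channel is projected onto $\ker(\Delta_{\mathcal{F}}) \cong H^0(G,\mathcal{F})$, so it suffices to build an $\mathcal{F}$ whose global sections assign to each node a vector that depends only on the node's class and so that distinct classes receive linearly distinguishable values; then for almost all initial conditions the projection of the input features onto $H^0$ retains enough information to read off the class by a linear map. The key observation is that a global section of an $O(d)$-bundle is determined by its value $\mathbf{x}_v$ at a single reference node $v_0$ together with parallel transport: along any edge $e = v_0 \to u$, consistency forces $\mathbf{x}_u = (\mathcal{F}_{u \unlhd e})^{-1}\mathcal{F}_{v_0 \unlhd e}\,\mathbf{x}_{v_0}$, i.e. the transport map $O_{v_0 u} := (\mathcal{F}_{u \unlhd e})^{-1}\mathcal{F}_{v_0 \unlhd e} \in O(d)$ rotates the reference value into the stalk at $u$. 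Fixing a spanning tree of $G$, we are free to prescribe these transport maps along tree edges arbitrarily in $O(d)$; the remaining (non-tree) edges impose cycle-consistency constraints, and the dimension of $H^0$ equals the number of linearly independent solutions.

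Concretely, first I would reduce to the case where the transport maps along a spanning tree are chosen so that every node $u$ in class $c$ receives transport map equal to a fixed rotation $R_c \in O(d)$ (independent of $u$), so that $\mathbf{x}_u = R_c \mathbf{x}_{v_0}$ for every section. This requires that on non-tree edges $e = u \to w$ with $u \in c$, $w \in c'$, the induced constraint $R_{c'}^{-1} R_c \,\mathbf{x}_{v_0} = \mathbf{x}_{v_0}$ (after accounting for the edge's own restriction maps) is satisfiable on a large subspace; the clean way is to choose all restriction maps on non-tree edges to make the cycle holonomy act as $R_{c}R_{c'}^{-1}$ conjugated appropriately, so that the constraint becomes "$\mathbf{x}_{v_0}$ fixed by $R_{c'}^{-1}R_c$" uniformly. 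The crux is then a representation-theoretic/geometric counting argument: we need rotations $R_1,\dots,R_C \in O(d)$ such that the subspace $W := \bigcap_{c \ne c'} \ker(R_{c'}^{-1}R_c - I) \subseteq \mathbb{R}^d$ is nonzero, and such that the induced per-class "label vectors" — obtained by projecting a generic $\mathbf{w} \in W$-complement appropriately, or more precisely the image data $R_c \mathbf{w}$ for $\mathbf{w}$ ranging over the relevant subspace — are in linearly general position across the $C$ classes. For $d=2$: $O(2)$ has the reflections, and two distinct reflections $R_c, R_{c'}$ satisfy $\det(R_{c'}^{-1}R_c) = 1$ with eigenvalues $e^{\pm i\theta}$ where $\theta$ is twice the angle between the mirrors, so choosing the $C \le 4$ reflection axes at angles $\tfrac{\pi}{4}k$ and using that a reflection has a $1$-dimensional $(+1)$-eigenspace, one arranges the four class-vectors $R_c \mathbf{e}_1$ to be the four "diagonal" unit vectors, which any pair among three of them spans $\mathbb{R}^2$ — this is exactly the $C \le 2d = 4$ bound. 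For $d=4$ one uses the exceptional isomorphism-flavored fact that $O(4)$ (via unit quaternions / $SU(2)\times SU(2)$) contains enough commuting-up-to-sign families to push $C$ up to $8$; the point $d \in \{2,4\}$ restriction is precisely because this quaternionic trick and the $d=2$ circle trick are the only two dimensions where the required configuration of rotations with a common partial fixed subspace and generic label vectors can be written down.

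After the algebraic core, the remaining steps are routine: (i) verify that with these transport maps the coboundary $\delta$ has the claimed kernel, hence $\dim H^0(G,\mathcal{F}) \ge \dim W \ge 1$, so diffusion does not collapse everything to zero; (ii) check that the linear map "read off $\mathbf{x}_{v_0}$, then apply the classifier $\mathbf{w} \mapsto (\langle \mathbf{w}, R_c\mathbf{w}\rangle)_c$" — or more simply, a fixed linear functional distinguishing the finitely many class-vectors — separates the classes, which holds for all $\mathbf{x}_{v_0}$ outside a measure-zero set (the union of the hyperplanes on which two class-vectors coincide or a chosen separating functional degenerates), matching the "almost all initial conditions" clause of the definition; and (iii) note that the projection of a generic input $\textbf{X}$ onto $H^0$ lands generically in $W$, so almost-all-initial-conditions is preserved. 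I expect the main obstacle to be step two of the middle paragraph — pinning down the explicit rotation configuration in $O(4)$ that simultaneously (a) shares a nontrivial common fixed subspace pairwise, (b) yields $C=8$ genuinely linearly-separable class signals, and (c) is compatible with a valid choice of $O(4)$-valued restriction maps realizing the prescribed holonomies on every independent cycle of $G$ — since unlike $d=2$ there is no one-parameter picture and one must genuinely use the quaternionic structure; everything else is bookkeeping with the Hodge isomorphism already granted by the theorem quoted above.
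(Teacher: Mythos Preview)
First, note that the thesis does not itself prove this theorem: it is quoted from \cite{bodnar2022neural} with the explicit remark that ``proofs and further details can be found in the appendix of such paper.'' So there is no in-document proof to compare against, and what follows is an assessment of the internal soundness of your proposal.

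There is a genuine gap at the heart of your argument, namely the role you assign to the subspace $W := \bigcap_{c \ne c'} \ker(R_{c'}^{-1}R_c - I)$. You claim that the non-tree edges force global sections to satisfy $\mathbf{x}_{v_0} \in W$, but the restriction maps on non-tree edges are \emph{free parameters} of the sheaf you are constructing. For a non-tree edge $e=(u,w)$ with $u$ in class $c$ and $w$ in class $c'$, you may simply set $\mathcal{F}_{u \unlhd e}=I$ and $\mathcal{F}_{w \unlhd e}=R_{c}R_{c'}^{-1}\in O(d)$; then the section condition $\mathcal{F}_{u \unlhd e}\mathbf{x}_u=\mathcal{F}_{w \unlhd e}\mathbf{x}_w$ becomes $R_c\mathbf{x}_{v_0}=R_cR_{c'}^{-1}R_{c'}\mathbf{x}_{v_0}$, which holds identically. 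The holonomy is trivial, $H^0\cong\mathbb{R}^d$, and no $W$-constraint ever appears. Your decision to ``make the cycle holonomy act as $R_cR_{c'}^{-1}$'' is an unforced choice that manufactures the very obstruction you then struggle with.

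Worse, the $W$-condition is self-defeating: if $\mathbf{w}\in W$ then by definition $R_c\mathbf{w}=R_{c'}\mathbf{w}$ for every pair $c\ne c'$, so the putative class vectors $R_c\mathbf{w}$ are all \emph{identical} and the classes become indistinguishable in the diffusion limit. Your own $d=2$ example exposes this inconsistency: the four reflections across axes at angles $\tfrac{\pi}{4}k$ send $\mathbf{e}_1$ to four \emph{distinct} unit vectors precisely because $\mathbf{e}_1\notin W$; indeed for those four reflections one checks $W=\{0\}$. Under your holonomy choice this would give $H^0=0$, so diffusion annihilates every signal and nothing is separated.

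Finally, once one repairs the construction by trivialising the holonomy, the hypotheses $d\in\{2,4\}$ and $C\le 2d$ no longer play any visible role in your sketch: distinct orthogonal matrices $R_1,\dots,R_C$ yield equal-norm, pairwise-distinct class vectors $R_c\mathbf{x}_{v_0}$, and a convexity argument shows each lies outside the convex hull of the others. That your framework does not see the stated bounds is a sign that you have not located the actual mechanism used in \cite{bodnar2022neural}; the restriction to $d\in\{2,4\}$ there comes from a specific algebraic construction that your quaternionic remarks gesture at but do not pin down.
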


    One example of a diffusion process over a $O(2)$-bundle is in Figure \ref{fig:disentanglement}.
    
\end{enumerate}

\begin{figure}[t]
    \centering
    \includegraphics[width=6.00in]{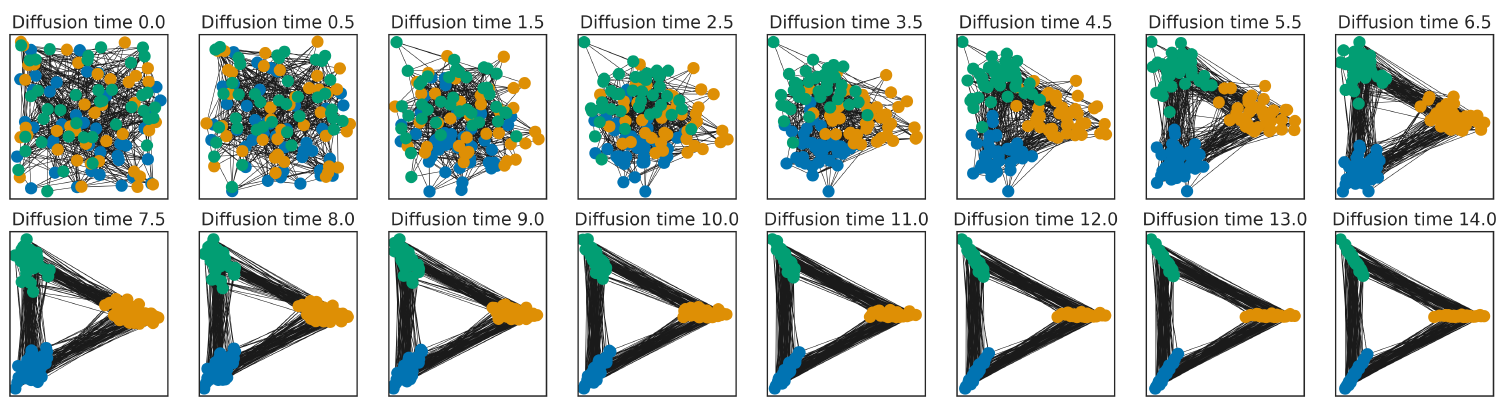}
    \caption{Diffusion process on $O(2)$-bundles progressively separates the classes of the graph (image from \cite{bodnar2022neural}).}
    \label{fig:disentanglement}
\end{figure}

In summary, these results show how different types of sheaves affect the outcome of node classification tasks.

\subsection{Sheaf Convolutions}

The continuous diffusion process expressed in Equation \ref{eq:sheaf_diff_contin} can be discretized via the explicit Euler scheme with unit
step-size:
\begin{equation}
 \textbf{X}(t + 1) = \textbf{X}(t) - \Delta_{\mathcal{F}}\textbf{X}(t) = (\textbf{I}_{nd} - \Delta_{\mathcal{F}})\textbf{X}(t) \label{eq:euler_diffusion}.
\end{equation}

Starting from the discretized version, Hansen and Gebhart \cite{hansen2020sheaf} proposed the first Sheaf Neural Network model, defined as follows:

\begin{equation}
 \textbf{Y} = \sigma((\textbf{I}_{nd} - \Delta_{\mathcal{F}})(\textbf{I}_{n} \otimes \textbf{W}_{1})\textbf{X}\textbf{W}_{2}) \label{eq:snn},
\end{equation}

where $\otimes$ denotes the Kronecker product.
This expression equips the standard discretized diffusion model with a non-linearity $\sigma$ and (assuming  $\textbf{X} \in \mathbb{R}^{nd \times f_1}$) with the weight matrices $\textbf{W}_1 \in \mathbb{R}^{d \times d}$, which multiplies independently stalk features of all nodes in all channels, and $\textbf{W}_2 \in \mathbb{R}^{f_1 \times f_2}$, which instead can modify the number of feature channels in a layer from a quantity $f_1$ to $f_2$.

It is immediate to observe that when $\Delta_{\mathcal{F}}$ is the standard normalized graph Laplacian, and $\mathbf{W}_{1}$ is set to be a scalar, the expression coincides with the GCN model proposed by Kipf and Welling \cite{kipf2016semi}. This means that SCNs can be seen as a \textit{generalization} of GCNs, and they both are a non-linear, parametric and discretized version of the sheaf diffusion process described in Equation \ref{eq:sheaf_diff_contin}. For these reasons this model is generally called \textit{Sheaf Convolutional Network} (SCN).

A natural question that arises at this point is how expressive these non-linear models
are compared to their base diffusion process. In order to provide an answer, Bodnar et al. \cite{bodnar2022neural} carried out an extensive investigation on how SCNs affect the so-called \textit{sheaf
Dirichlet energy}, which sheaf diffusion is known to minimize over time.
The results show that, additionally to sheaf diffusion being more expressive than heath diffusion, SCNs are more expressive than GCNs (in the sense that they are not constrained to decrease the Dirichlet energy, as GCNs instead do) and they have greater control over their asymptotic behavior. Further details and proofs can be found in \cite{bodnar2022neural}.

\subsection{Linear Sheaf Diffusion}

The previous section aimed at discussing the motivations leading to the introduction of a more complex geometric structure on graphs, pointing out the benefits it brings into play specifically for the task of node classification. 
When proceeding to practically implement the model, many questions may arise, for example regarding how to set the correct value for the stalk dimension $d$, or more importantly how to properly define the sheaf itself. Indeed, the ground truth sheaf is generally unknown or unspecified. In the SCN model that Hansen and Gebhart \cite{hansen2020sheaf} proposed as it is expressed in Equation \ref{eq:snn}, they use a \textit{hand-crafted} sheaf with $d = 1$, that is constructed with the assumption of fully knowing the data-generating process (in a synthetic setting).

Bodnar et al. \cite{bodnar2022neural}, instead, suggest a more scalable approach that allows to learn the sheaf end-to-end directly from data, in order to pick the right geometry for solving the specific task at hand and making the model applicable to any real-world graph dataset, even in the absence of a sheaf structure. This is not the only difference they propose with respect to the SCN model though, and their contributions to the model can be summarized in the following points:

\begin{enumerate}
    \item Not only the model learns the sheaf structure from data, but it learns multiple ones: they introduce a sheaf Laplacian $\Delta_{\mathcal{F}(t)}$ of a sheaf $(G, \mathcal{F}(t))$ that evolves over time.  Indeed, in order to manipulate the geometry of the graph and the diffusion process making use of the latest available features, they describe the evolution of the sheaf structure through a learnable function of the data: $(G, \mathcal{F}(t)) = g(G, \textbf{X}(t);\theta)$.
    \item They use stalks with $d \geq 1$ and higher-dimensional maps to fully exploit the generality of sheaves.
    \item The model performs a residual parametrization of
the discretized diffusion process, because it shows to empirically improve the performance.
\end{enumerate}

\paragraph{General formulation and practical implementation} 
Starting from the standard sheaf diffusion Equation \ref{eq:sheaf_diff_contin}, Bodnar et al. \cite{bodnar2022neural} define a more expressive diffusion-type model by enriching the equation with additional elements (two weight matrices $\textbf{W}_1$ and $\textbf{W}_2$ and a possibly nonlinear function $\sigma$), while still maintaining all the desirable properties outlined in Section \ref{motivation}:

\begin{equation}
     \dot{\mathbf{X}}(t) = - \sigma(\Delta_{\mathcal{F}(t)}(\textbf{I}_n\otimes \textbf{W}_{1})\textbf{X}(t)\textbf{W}_{2}) \label{eq:diffusion_bodnar}
\end{equation}

As mentioned before, $\Delta_{\mathcal{F}(t)}$ is the Laplacian of a sheaf that evolves over time. 
For the experiments a time-discretized version of \ref{eq:diffusion_bodnar} is used, that allows to use a new set of edges for each layer:

\begin{equation}
     {\mathbf{X}^{(t + 1)}} = \textbf{X}^{(t)} - \sigma(\Delta_{\mathcal{F}(t)}(\textbf{I}_n\otimes \textbf{W}_{1}^{(t)})\textbf{X}^{(t)}\textbf{W}_{2}^{(t)})  \label{eq:diffusion_bodnar_discrete}.
\end{equation}

In practice, this is further enriched in expressiveness by learning an additional parameter $\epsilon \in [-1,1]^d$ (i.e. a $d$-dimensional vector) that allows the model to adjust the relative magnitude of the features in each stalk dimension: 

\begin{equation}\label{eq:diffusion_bodnar_discrete_eps}
    {\mathbf{X}^{(t + 1)}} = \left( 1+\varepsilon \right) \textbf{X}^{(t)} - \sigma(\Delta_{\mathcal{F}(t)}(\textbf{I}_n\otimes \textbf{W}_{1}^{(t)})\textbf{X}^{(t)}\textbf{W}_{2}^{(t)}).
\end{equation}

Additionally to using a MLP to compute $\textbf{X}(0)$ from raw features and a final linear layer for node classification, Bodnar et al. \cite{bodnar2022neural} also came up with a method to learn sheaf structures \textit{locally}. This is done by parametrizing the $d \times d$ matrices of restriction maps $\mathcal{F}_{v \unlhd e}$ through a matrix valued function $\Phi: \mathbb{R}^{d \times 2} \rightarrow \mathbb{R}^{d \times d}, \, \mathcal{F}_{v \unlhd e := (v,u)} = \Phi(\textbf{x}_{v}, \textbf{x}_{u})$, in which $\Phi(\textbf{x}_{v}, \textbf{x}_{u}) = \sigma(\textbf{W}[\textbf{x}_{v}||\textbf{x}_{u}])$ where $\textbf{W}$ is the same for all couples of neighboring nodes. 
$\Phi$ is generally non-symmetric, in order to be able learn asymmetric transport maps along each edge. Additionally, if such function has enough
capacity and the features are diverse enough, the following result shows that it is possible to learn any sheaf over a graph \cite{bodnar2022neural}, and this motivates the choice of learning a different sheaf at each layer, which should grant the ability to distinguish more nodes after each aggregation step.

\begin{proposition}[\cite{bodnar2022neural}]\label{prop_mlp}
    Let $G = (V, E)$ be a finite graph with features $\textbf{X}$. Then, if $(\textbf{x}_{v}, \textbf{x}_{u}) \neq (\textbf{x}_{w}, \textbf{x}_{z})$ for any $(v, u) \neq (w, z) \in E$ and $\Phi$ is an MLP with sufficient capacity, $\Phi$ can learn any sheaf $(G, \mathcal{F})$. 
\end{proposition}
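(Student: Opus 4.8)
The plan is to reduce the statement to a finite interpolation problem and then invoke the universal approximation property of MLPs. First I would observe that fixing a target sheaf $(G,\mathcal{F})$ with $d$-dimensional stalks amounts to fixing, for every incident pair $v \unlhd e$ with $e = (v,u)$, a matrix $M_{v,e} := \mathcal{F}_{v \unlhd e} \in \mathbb{R}^{d\times d}$; since $G$ is finite, this is a finite list of matrices. The map $\Phi$ is required to satisfy $\Phi(\textbf{x}_v, \textbf{x}_u) = M_{v,e}$ for every such pair, i.e. to take prescribed values on the finite input set $S := \{ (\textbf{x}_v, \textbf{x}_u) : (v,u) \in E \} \subset \mathbb{R}^{2d}$ (viewing each edge as contributing both of its orientations). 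The hypothesis $(\textbf{x}_v, \textbf{x}_u) \neq (\textbf{x}_w, \textbf{x}_z)$ for distinct $(v,u), (w,z) \in E$ is exactly what guarantees that $S$ consists of $|S|$ \emph{distinct} points, so that the assignment $(\textbf{x}_v,\textbf{x}_u) \mapsto M_{v,e}$ is a well-defined function $S \to \mathbb{R}^{d\times d}$.

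Next I would extend this function to a continuous target $F : \mathbb{R}^{2d} \to \mathbb{R}^{d\times d}$ agreeing with it on $S$ — for instance by choosing a radius $\rho$ small enough that the balls $B(s,\rho)$, $s \in S$, are pairwise disjoint, and letting $F$ be a sum of bump functions, one centered at each $s$ and equal to $M_{v,e}$ there, with $F \equiv 0$ outside $\bigcup_{s} B(s,\rho)$ (Tietze's extension theorem would serve equally well). Since $S$ is compact, the universal approximation theorem then yields an MLP $\Phi$ with one hidden layer of sufficiently many units, followed by a linear output layer of width $d^2$ reshaped into a $d\times d$ matrix, that approximates $F$ — in particular the values $M_{v,e}$ — uniformly on $S$ to arbitrary accuracy. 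If an \emph{exact} realization is wanted rather than an $\varepsilon$-approximation, I would instead invoke the stronger memorization results for neural networks, which give exact interpolation of any assignment of outputs to $|S|$ distinct inputs using $\Theta(|S|)$ hidden neurons. Either way, "$\Phi$ can learn any sheaf $(G,\mathcal{F})$" follows, with the requisite number of hidden units (the "sufficient capacity") scaling with $|E|$ and, in the approximate case, with the target precision.

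The main obstacle, and essentially the only place demanding care, is the passage from "approximate on a compact set" to "learns the sheaf": one must be explicit about whether exact realization or $\varepsilon$-approximation of the restriction maps is intended, and — in the exact case — justify it through interpolation/memorization theorems rather than the bare universal approximation statement. A secondary check is compatibility with whatever activation and output nonlinearity $\Phi$ uses in practice; if the restriction maps are constrained to a submanifold (e.g. $O(d)$), the extension $F$ should be chosen to take values there and the output parametrization adapted accordingly, but for the unconstrained statement asked here this does not arise and the argument above suffices.
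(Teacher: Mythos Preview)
The paper does not actually contain a proof of this proposition; it is merely quoted from \cite{bodnar2022neural} with the remark that ``if such function has enough capacity and the features are diverse enough, \dots\ it is possible to learn any sheaf over a graph.'' So there is no in-paper argument to compare against.

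That said, your proposal is correct and is the natural argument. The distinctness hypothesis is precisely what turns ``learn any sheaf'' into a well-posed finite interpolation problem, and then either universal approximation (for $\varepsilon$-realization) or finite-sample memorization results (for exact realization) finish it. Your discussion of the approximate-vs-exact distinction is a genuine point that the bare statement elides, and your remark about constrained restriction maps (diagonal, $O(d)$) is also apt, since in practice the paper parametrizes $\Phi$ so that its output lands in the right matrix class. One cosmetic slip: you write $S \subset \mathbb{R}^{2d}$, but the concatenated node features $(\textbf{x}_v,\textbf{x}_u)$ live in $\mathbb{R}^{2df}$ (or $\mathbb{R}^{2f}$ depending on convention), not $\mathbb{R}^{2d}$ --- $d$ is the stalk dimension, not the feature dimension. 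This does not affect the argument.
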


In the paper \cite{bodnar2022neural} they consider three types of function $\Phi$, according to the constraint imposed on the learnt matrix \textbf{W}:

\begin{itemize}
    \item \textit{Diagonal:} in this case the sheaf Laplacian is a block-diagonal matrix, hence fewer parameters need to be learned and there are fewer operations to perform in sparse matrix multiplication. The $d$ stalk dimensions, though, interact only through the multiplication with $\textbf{W}_1$.
    \item \textit{Orthogonal:} they allow the $d$ dimensions of the stalks to better interact, while still reducing the number of free parameters though the orthogonality constraint. The Laplacian is also easy to normalize. In this setting the model learns a discrete $O(d)$-bundle.
    \item \textit{General:} in this case, artitrary matrices are learned. The higher degree of flexibility comes with drawbacks, though: the risk of overfitting is higher, and the sheaf Laplacian is more difficult to normalize numerically. 
\end{itemize}

\paragraph{Results and further directions}
Neural Sheaf Diffusion (NSD) \cite{bodnar2022neural} models have been tested in both a synthetic and real-world setting. For what concerns real-world datasets, they considered 9 different benchmark datasets that span different levels of the  homophily coefficient $h$, ranging from $h = 0.11$ (very heterophilic) to $h =0.81$ (very homophilic). A summary of the best results is reported in Table \ref{tab:main_results}, and they show that NSD models are among the top three on 8/9 datasets, with $O(d)$-bundle models being the ones that perform best overall, followed by the ones learning simpler diagonal maps. In conclusion, they demonstrated that the sheaf diffusion method can achieve state-of-the-art results in heterophilic settings.

Starting from the NSD framework, various research direction have been recently explored, by providing some modifications to the underlying method or studying different application tasks. They include \textit{Sheaf Attention Networks} (SheafAN) \cite{barbero2022sheaf} and \textit{Neural Sheaf Propagation} (NSP) \cite{suk2022surfing}.

The central idea behind SheafAN \cite{barbero2022sheaf} is equipping the NSD model \cite{bodnar2022neural} with an explicit attention mechanism in the same fashion in which GAT \cite{velivckovic2017graph} does with the standard GCN model \cite{kipf2016semi}. This gives rise to a more expressive attention mechanism, equipped with geometric inductive biases, that consistently outperforms GAT on both synthetic and real-world benchmarks \cite{barbero2022sheaf}. In practice, a standard SheafAN layer is defined as:

\begin{equation}\label{eq:sheafan}
    \textbf{X}^{(t + 1)} = \sigma((\hat{\Lambda}(\textbf{X}^{(t)})\odot \textbf{A}_{\mathcal{F}}-\mathbf{I})(\mathbf{I}_n \otimes \mathbf{W}^{(t)}_{1})\mathbf{X}^{(t)}\mathbf{W}^{(t)}_{2}).
\end{equation}

Also a residual version, Res-SheafAN, is proposed, that is 

\begin{equation}\label{eq:res_sheafan}
    \textbf{X}^{(t + 1)} = 
    \textbf{X}^{(t)} + \sigma((\hat{\Lambda}(\textbf{X}^{(t)})\odot \textbf{A}_{\mathcal{F}}-\mathbf{I})(\mathbf{I}_n \otimes \mathbf{W}^{(t)}_{1})\mathbf{X}^{(t)}\mathbf{W}^{(t)}_{2}),
\end{equation}

and in both of them $\hat{\Lambda} = \Lambda \otimes \textbf{1}_d$ and each entry is computed through an attention function: $\Lambda_{ij} = a(\textbf{x}_{i}, \textbf{x}_{j})$.

For what concerns NSP \cite{suk2022surfing}, instead, they explore a different diffusion process with respect to heat equation, using in this case the PDE associated to the wave equation on the sheaf (Figure \ref{fig:wave}). This choice is motivated by the fact that the wave equation preserves a higher amount of total 
energy of the signal with respect to the heat equation \cite{suk2022surfing}. The time-dependent continuous sheaf diffusion process, differently from Equation \ref{eq:sheaf_diff_contin}, is described by:

\begin{equation}\label{eq:wave}
    \Ddot{\mathbf{X}}(t) = -\Delta_{\mathcal{F}(t)}\mathbf{X}(t).
\end{equation}

In the practical implementation, in order to obtain the layer-wise operation, Equation \ref{eq:wave} is discretized through the leapfrog method and weight matrices are added as well as an activation function $\sigma$, obtaining in the end the following expression:

\begin{equation}\label{eq:wave_discrete}
    {\mathbf{X}}^{(t+1)} = 2\textbf{X}^{(t)} - \textbf{X}^{(t-1)} - \sigma(\Delta_{\mathcal{F}_{(t)}}(\mathbf{I}_n \otimes \mathbf{W}^{(t)}_{1})\mathbf{X}^{(t)}\mathbf{W}^{(t)}_{2}).
\end{equation}

\begin{figure}[ht]
    \centering
    \includegraphics[width=4.50in]{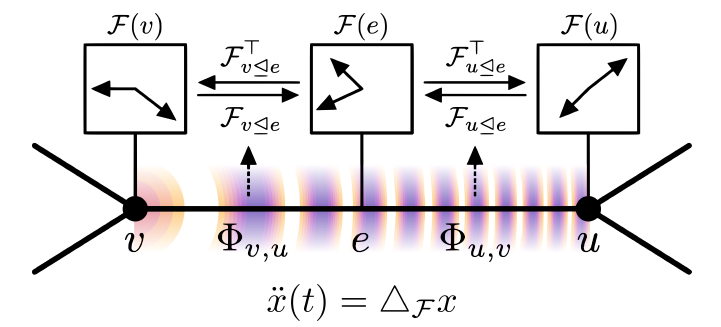}
    \caption{Neural sheaf propagation (NSP) induced by the wave equation on sheaves (image from \cite{suk2022surfing}).}
    \label{fig:wave}
\end{figure}

These models provide a valuable insight on some promising directions of research starting from the NSD \cite{bodnar2022neural} framework. A summary of the best results obtained by the two models described above are reported in Chapter 5 (Experiments), in Table \ref{tab:main_results}.

\clearemptydoublepage

\newpage

\;

\newpage

\chapter{Nonlinear Sheaf Diffusion}
\label{cha:nonlinear}

\section{A Perspective from Opinion Dynamics}\label{opinion_dynamics}

The study of opinion dynamics is a fascinating and challenging area of research that has attracted the attention of many scientists from different fields. Opinion dynamics models attempt to capture the complex and dynamic nature of social interactions that underlie the formation and evolution of opinions in human societies. These models have been used to study a wide range of phenomena, including political polarization \cite{holme2006nonequilibrium}, the spread of rumors \cite{del2016spreading}, the formation of echo chambers \cite{del2016echo}, and the emergence of consensus \cite{degroot1974reaching}.

In recent years, there has been growing interest in using computational tools to model opinion dynamics. Structural effects of the considered network on opinion dynamics began with
the analysis of linear dynamical models \cite{degroot1974reaching,friedkin1990social,taylor1968towards} and have evolved into more
sophisticated mathematical formulations \cite{deffuant2000mixing,dittmer2001consensus,rainer2002opinion}, including features such as \textit{bounded confidence}. In the context of opinion dynamics, bounded confidence refers to a model where individuals update their opinions based on their neighbors' only if the difference between the others' opinion and their own falls within a certain confidence bound, that is only if they are sufficiently similar.

Sheaf theory has emerged as a powerful mathematical framework for studying complex systems with local and global interactions \cite{hansen2021opinion,hansen2019toward,hansen2020laplacians}. In their paper  \textit{Opinion Dynamics on Discourse Sheaves} \cite{hansen2021opinion}, Hansen and Ghrist propose a novel approach to modeling opinion dynamics based on sheaf theory. Some of the concepts and contents in the following sections are inspired by their work, including the theoretical proofs and results. 

\subsection{Opinion Dynamics on Discourse Sheaves}

\providecommand{\introduce}[1]{\textit{#1}}

Consider a social network expressed as a graph, denoted by $G$, where the nodes represent individual agents and the edges represent pairwise communication. In order to capture the dynamics of opinions within this network, we introduce a concept called a \introduce{discourse sheaf}, denoted by $\mathcal{F}$. In this framework, each agent has an \introduce{opinion space} represented by a real vector space, consisting of various topics as its basis. The opinion space is analogous to traditional opinion dynamics models, where each axis represents negative, neutral, or positive opinions on a given topic, with the intensity measured by a scalar value. The opinion space serves as the stalk $\mathcal{F}(v)$ for each vertex $v$, from which each element $x_v$ represents the intensity of opinions or preferences on the \introduce{basis topic}.

All pairs of agents who are connected by an edge $e$, that we may for example denote as $u$ and $v$, engage in discussions regarding a specific set of topics. These topics may not be the same as those generating their respective opinion spaces, but they form the basis of an abstract \introduce{discourse space}, denoted by $\mathcal{F}(e)$, which serves as the stalk over the edge.

During discussions, each agent expresses their opinions on the relevant topics as a linear combination of their existing opinions on their personal basis topics. This expression of opinion is captured by the linear transformations $\mathcal{F}_{u \unlhd e}:\mathcal{F}(u)\to\mathcal{F}(e)$ and $\mathcal{F}_{v \unlhd e}:\mathcal{F}(v)\to\mathcal{F}(e)$. If agents $u$ and $v$ hold opinions $x_{u}\in\mathcal{F}(u)$ and $x_{v}\in\mathcal{F}(v)$ respectively, consensus is achieved when $\mathcal{F}_{u\unlhd e}(x_{u})=\mathcal{F}_{v \unlhd e}(x_{v})$. This condition represents the local consistency imposed by each edge, where the linear constraints ensure consistency between the stalks of the incident vertices. It is important to note that consensus in this context does not imply that agents $u$ and $v$ share the exact same opinions, but rather that their expressions of personally held opinions appear to align.

This framework of discourse sheaves provides a structured approach to analyze opinion dynamics within social networks, incorporating the idea of consensus while accounting for individual expressions of opinions on various topics, and a simple representation of what has just been described is in Figure \ref{fig:discoursesheaf}.

\begin{figure}
\centering
\includegraphics[width=4.75in]{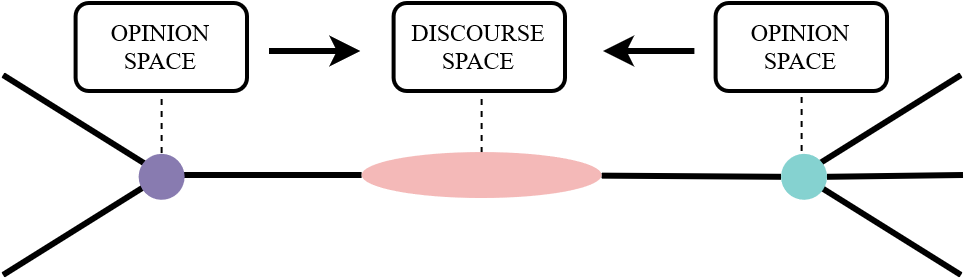}
\caption{Representation of a discourse sheaf: stalks over vertices are individual opinion spaces, stalks over edges represent discourse spaces, and restriction maps are expressions of opinions on the topics of discourse.}
\label{fig:discoursesheaf}

\end{figure}

The concepts discussed in Sections \ref{sec:sheaves_general} and \ref{sec:cellular_sheaves} can be better contextualized within the framework of discourse sheaves. In this context, we consider the following:

\begin{itemize}
\item A 0-cochain $x\in C^0(G,\mathcal{F})$ describes a private distribution of opinions among the agents.
\item A 1-cochain $\xi\in C^1(G,\mathcal{F})$ represents the distribution of expressed opinions resulting from pairwise discussions between agents in the network.
\item The coboundary map $\delta:C^0\to C^1$ captures the variation in opinion expression among agents based on their individual viewpoints.
\item The sheaf Laplacian $L_\mathcal{F}:C^0\to C^0$ quantifies the \textit{discord} within the system. The value of $L_\mathcal{F}(x)$ at each vertex $v$ measures the discrepancy between $x_v$ (the opinion of agent $v$) and the opinions that would lead to harmony between $v$ and its neighboring agents.
\end{itemize}

The most general discourse sheaf model extends the usual consensus problem over graphs by utilizing sheaves programmed with linear transformations. This allows for additional features not commonly found in the literature. In a three-agent system, where agents A, B, and C communicate pairwise, the discourse sheaf model enables agents to hold private opinions on topics that may not be shared among them. The model also allows for discussions on topics unrelated to their basis opinions. Policies can be formed by combining principles through restriction maps, enabling expressions of preferences that capture individual neutral opinions, preferences and dislikes. Restriction maps can be scaled positively or negatively to modify expressed opinions, including the potential for \textit{selective lying} or \textit{deception}. 

Opinion dynamics on a discourse sheaf can be set up in various ways, such as allowing agents to change their opinions over time or facilitating changes in the expression of opinions to foster a more harmonious discourse. This model allows for the co-evolution of both opinions and their expression \cite{hansen2021opinion}.

\subsection{Nonlinear Laplacian and Bounded Confidence}

In their sheaf-based model of opinion dynamics, Hansen and Ghrist \cite{hansen2021opinion} also introduce and analyze a nonlinear type of sheaf Laplacian, which is still defined as an operator on the space of 0-cochains of a sheaf. Specifically, the result of its application can be visualized as a matrix in which the entry corresponding to the $i$th row and $j$th column is given by a nonlinear function of the difference between the opinions of nodes $i$ and $j$.

The authors \cite{hansen2021opinion,hansen2020laplacians} show that the nonlinear Laplacian has several important properties that make it well-suited for modeling particular types of opinion-spreading models, such as \textit{bounded confidence} and \textit{antagonistic dynamics}. 

The nonlinear Laplacian can still be used to define a measure of consensus in the network and study the dynamics of opinions, and it is also more flexible than the traditional Laplacian matrix, which is based on simple linear differences between the opinions of nodes.

\section{Nonlinear Laplacian}\label{sec:nonlin_lap}
Let $(G,\mathcal{F})$ be a cellular sheaf on a graph $G = (V, E)$, with $\mathcal{F}(v)$ and $\mathcal{F}(e)$ being the stalks of a node $v$ and edge $e$, and  $\mathcal{F}_{v \unlhd e}: \mathcal{F}(v) \rightarrow \mathcal{F}(e)$  denoting the restriction map between an incident node-edge pair. A formal definition of the nonlinear sheaf Laplacian, as well as the introduction of bounded confidence opinion dynamics for discourse sheaves were proposed by Hansen and Ghrist in their publications \cite{hansen2019toward,hansen2021opinion,hansen2020laplacians}. 

\begin{definition}[Nonlinear sheaf Laplacian, \cite{hansen2019toward}]\label{def:nonlin_laplac}
    Let $G = (V, E)$ be a graph and let $(\mathcal{F}, G)$ be a cellular sheaf built on it. Furthermore, if $\phi_{e}: \mathcal{F}(e) \rightarrow \mathcal{F}(e) $ is a continuous and not-necessarily-linear map defined edge-wise for each $e \in G$, and $\Phi: C^{1}(G,\mathcal{F}) \rightarrow C^{1}(G,\mathcal{F})$ is the combination of all such maps, the corresponding \textit{nonlinear sheaf Laplacian} \cite{hansen2019toward} is $L_{\mathcal{F}}^{\Phi} = \delta^{T} \circ \Phi \circ \delta$, with $\delta: C^{0}(G,\mathcal{F}) \rightarrow C^{1}(G,\mathcal{F})$ denoting the coboundary map. Given $x \in C^{0}(G,\mathcal{F})$, since the nonlinear map $\Phi$ is applied edge-wise, $L_{\mathcal{F}}^{\Phi}x$ can still be computed locally in the network.  
\end{definition}

The \textit{nonlinear sheaf diffusion} is still described by the same PDE that held for the linear case, that is  Equation \ref{eq:sheaf_diff_contin}, with the only obvious difference that the newly defined nonlinear Laplacian has to be put in place of the standard linear one:

\begin{equation}
 \textbf{X}(0) = \textbf{X}, \, \, \, \,     \dot{\mathbf{X}}(t) = - L_{\mathcal{F}}^{\Phi}\textbf{X}(t).\label{eq:sheaf_diff_nonlin_contin}
\end{equation}


One way to construct a nonlinear Laplacian, but not the only possible one, is by defining a set of edge-wise potential functions $U_{e}: \mathcal{F}(e) \rightarrow \mathbb{R}$, that in turn allow to define a function $\Psi$ on the space of 0-cochains $C^{0}(G,\mathcal{F})$ as
\begin{equation}\label{eq:potential}
   \Psi(x) = \sum_{e}U_{e}(\delta_{e}x) = U(\delta x).
\end{equation}
The gradient of this function at a point $x$ is $\nabla \Psi(x) = \delta^{T} \circ \nabla U \circ \delta $. This is a nonlinear sheaf Laplacian $L_{\mathcal{F}}^{\Phi}$ with $\Phi = \nabla U $ and its corresponding heat equation is precisely gradient descent on $\Psi$.

The reason why using edge potentials is convenient for constructing nonlinear sheaf Laplacians is that they allow for a simplified analysis of the heat equation \cite{hansen2021opinion}: for example, if the potential functions $U_{e}$ are convex, $\Psi$ is a Lyapunov function that ensures stability of the dynamics. Furthermore, if each $U_e$ is radially unbounded and has one single local minimum in 0, the convergence behaviors of linear and nonlinear heat equations are the same \cite{hansen2021opinion}. A detailed study with proofs on this topic was carried out by Hansen and Ghrist \cite{hansen2021opinion}.

\subsection{Bounded Confidence Dynamics}\label{def_bounded}
A specific type of edge potential functions allows to extend the bounded confidence opinion dynamics to discourse sheaves. The central idea of this model is that individuals only have confidence in the opinions of neighbors that
are sufficiently similar to their own, and thus only take these opinions into account
when updating.
Formally, for each edge $e$ of $G$ a threshold $D_{e}$ is set, as well as a differentiable function $\psi_{e}: [0, \infty) \rightarrow \mathbb{R}$ such that $\psi'_{e}(y) = 0$ for $y \geq D_{e}$ and $\psi'_{e}(y) > 0$ for $y < D_{e}$. The edge potential function in this case is given by $U_{e}(y_{e}) = \psi_{e}(\lVert y_{e}\rVert^{2}).$ Consequently, the gradient becomes $\nabla U_{e} (y_{e}) = \psi'_{e}(\lVert y_{e}\rVert^{2})y_{e}$ and the associated nonlinear Laplacian $ L_{\mathcal{F}}^{\nabla U}x$ can be expressed node-wise as 
\begin{equation}\label{eq:nonlin_lap}
   (L_{\mathcal{F}}^{\nabla U}x)_{v} = \sum_{u, v \unlhd e} \mathcal{F}_{v \unlhd e}^{T}\psi'_{e}(\lVert \mathcal{F}_{v \unlhd e}x_{v} - \mathcal{F}_{u \unlhd e}x_{u}\rVert^{2})(\mathcal{F}_{v \unlhd e}x_{v} - \mathcal{F}_{u \unlhd e}x_{u}).
\end{equation}
In comparison with the formula for the standard sheaf Laplacian, there is a nonlinear scaling factor depending on the discrepancy over each edge, that allows to generate bounded opinion dynamics. 

The constraints that the function $\psi$ is required to satisfy to allow for the bounded confidence phenomenon also act positively in theoretically guaranteeing the convergence of the dynamics in the time limit, as it is investigated and proved in detail by Hansen and Ghrist \cite{hansen2021opinion}.

\section{Nonlinearity in Sheaf Neural Networks: Model Definition}\label{sec:model_def}

The primary objective of this project was to implement a Sheaf Neural Network with a nonlinear Laplacian based on the model proposed by Bodnar et al. and discussed in Section \ref{sec:section_snn}, thus also enabling sheaf learning. Because of this, we like referring to the explored model and procedure as \textit{Nonlinear Sheaf Diffusion}.
Broadly speaking, the aim of the work was to investigate the potential benefits of introducing a nonlinear Laplacian in Sheaf Neural Networks for node classification tasks. 

Two key ideas motivated the development of this study:

\begin{enumerate}
    \item Firstly, we were driven by simple curiosity regarding the impact of a nonlinearity in the Laplacian on diffusion dynamics, on signal propagation in discrete-time settings, and on the overall network performance. It was also intriguing to evaluate the nonlinear model using the same datasets as in \cite{bodnar2022neural}, in order to analyze how the nonlinearity in the Laplacian influences the results in relation to the heterophily level of the graph.
    \item Secondly, we aimed to explore the application of the phenomenon of \textit{ignoring neighbors' opinions} that emerges within the framework of bounded confidence dynamics. We wanted to investigate the possibility of performing \textit{edge pruning} for graph-related tasks by leveraging the nonlinear scaling factor in \ref{eq:nonlin_lap}. This factor can implicitly remove or weaken the connectivity between two nodes during diffusion based on the discrepancy of their features. Our question was whether the network could effectively utilize this element to detect and somehow \textit{ignore} useless or noisy edges in the graph when the signal is propagated through sheaf diffusion.
\end{enumerate}

The analysis conducted in this study was primarily experimental rather than theoretical: in order to validate their practical effectiveness, we tested different versions of the model on real-world and synthetic datasets. We focused on experiments because practical implementations in discrete-time settings rarely conform to the conditions and assumptions made in theoretical work. Consequently, the guarantees provided by theoretical analyses cannot always be observed in practice. Furthermore, since Bodnar et al. extensively covered the theoretical aspects of sheaf diffusion convergence properties and guarantees in \cite{bodnar2022neural}, our work shifted towards proving the practical usefulness of the proposed model, despite the complexity and dimensionality overheads it entails.

Although incorporating nonlinearity into the definition of the sheaf Laplacian may appear not so challenging or capable of inducing significant changes by looking at its definition, its practical implementation actually raised several questions due to the numerous decisions involved in handling the intricacies. Therefore, we designed and implemented various model variations before coming up with the finalized and best performing version, and the subsequent sections describe the process leading to their formulation.

\subsection{Main Questions and Implementation Choices} \label{questions}

\begin{figure}[!t]
\centerline{\includegraphics[scale=0.22]{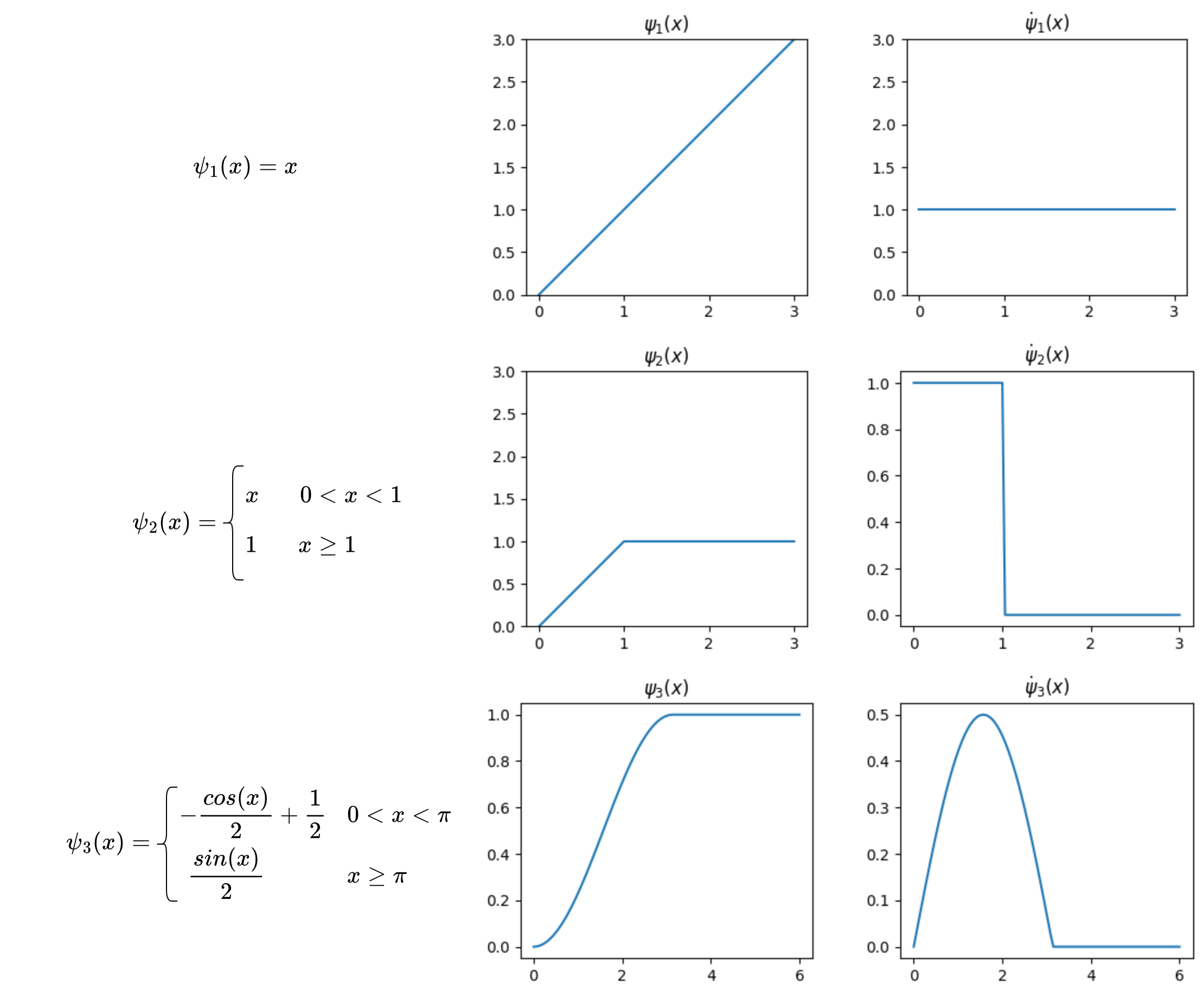}}
 \caption{Three examples for $\psi_{e}$ (referred to as $\psi_{1}$, $\psi_{2}$ and $\psi_{3}$) and its derivative $\psi'_{e}$. The first function is the one implicitly used in the standard linear sheaf Laplacian. The second and third ones, instead, define an edge potential generating bounded confidence dynamics, as both functions align with the requirements listed in \ref{def_bounded}. The threshold value $D_{e}$ is set to 1 and $\pi$, respectively.}
 \label{fig:1tr}
\end{figure}

Starting from the general Equation \ref{eq:sheaf_diff_nonlin_contin} describing nonlinear sheaf diffusion, and considering the practical discrete-time implementation of NSD \cite{bodnar2022neural} as reference (Equations \ref{eq:diffusion_bodnar_discrete}, \ref{eq:diffusion_bodnar_discrete_eps})
two fundamental points had to be properly investigated and analyzed, concerning the \textit{choice for the nonlinear function} and the \textit{normalization criteria}.

\hfill

\textbf{Nonlinear function definition}
The first and most relevant choice to be made is how to define the $\Phi$ function introduced in Definition \ref{def:nonlin_laplac} that corresponds to the nonlinearity of the Laplacian. Although using edge potentials as the ones that lead to a bounded confidence dynamic (in Equation \ref{eq:nonlin_lap}) is convenient for the guarantees they bring in terms of convergence in the time limit, in practice they do not represent the only option that could be explored. Indeed, during our study we adopted the bounded confidence schema first, and then a completely different approach in which $\Phi$ is fully defined as a Multi-Layer Perceptron (MLP \cite{rosenblatt1958perceptron}).

\begin{enumerate}
    \item \textbf{Bounded confidence} Motivated by the curiosity of verifying whether a bounded confidence behavior in the diffusion process could be someway exploited to perform edge pruning with noisy edges in the network, one option is defining the Laplacian as it is described in Subsection \ref{def_bounded}, and specifically as in expression \ref{eq:nonlin_lap}. Although this imposes numerous constraints, the definition still leaves a certain degree of freedom:
    \begin{itemize}

        \item An important issue is determining the amount and type of parameters to learn. We decided to explore the option of defining the overall shape of $\psi_{e}$ a priori and keep it fixed except for the threshold value $D_{e}$, this one to be learnt through backpropagation during training. The number of free parameters also depends on whether a unique threshold $D$ is considered for all edges, or multiple ones: according to the definition in \ref{def_bounded}, in an ideal setting $D_{e}$ may depend on the edge $e$. One way to learn edge-dependent  thresholds could be for example through the use of a parametric map $D: E \rightarrow I \subseteq \mathbb{R}$, such that $D(e) = D_{e}$. We investigated both the use of such parametric map, implementing it as a MLP, and also the case of learning a single threshold value for all the graph.

        \item The requirements listed in \ref{def_bounded} for $\psi_{e}$ are quite generic:  it must be constant for $y \geq D_{e}$, but it is only required to be strictly increasing for $0 < y < D_{e}$. This is why at an initial stage, we considered a set of different possible shapes for such function, and compared their behavior under distinct conditions and settings. Some examples are the $\psi_{2}$ and  $\psi_{3}$ in Figure \ref{fig:1tr}. In this way we could select the shape for $\psi_{e}$ that worked best in practice, that is $\psi_{3}$, and could stick to that for the experiments.

    \end{itemize}
    
    \item \textbf{MLP} In this case the $\Phi$ function in the expression $L_{\mathcal{F}}^{\Phi} = \delta^{T} \circ \Phi \circ \delta$ is defined as a Multi-Layer Perceptron, composed by a certain amount of linear layers (from 1 to 4) followed by an activation function (that we defined as ReLU \cite{brownlee2019gentle}), that is directly applied on the result of the coboundary operator. This grants full freedom in the shape and behavior of the nonlinearity, that is no more constrained to satisfy the strict requirements listed in Subsection \ref{def_bounded}. Although this implementation doesn't necessarily benefit from good theoretical convergence guarantees as it was the case for bounded confidence, this is the model that provided the most interesting results in practice. 
    
\end{enumerate}

\hfill

\textbf{Normalization criteria} 
Because of the nonlinearity of the Laplacian operator, it is not possible to express the map as a matrix multiplication as it is done for NSD \cite{bodnar2022neural} in Equation \ref{eq:diffusion_bodnar_discrete}. This becomes evident when considering Definition \ref{def:nonlin_laplac}: differently from the linear case, there is no matrix associated to $L_{\mathcal{F}}^{\Phi}$ that can be considered independently from $x$ because $\Phi$ acts as a nonlinear function of node features and cannot be resolved into a linear operator. Consequently, it is not straightforward to define a normalized sheaf Laplacian as in the linear case, $\Delta_{\mathcal{F}} = D^{-\frac{1}{2}}L_{\mathcal{F}}D^{-\frac{1}{2}}$, with $D$ being the block diagonal of $L_{\mathcal{F}}$. The Laplacian normalization guarantees stability for the diffusion operator $H_{\Delta_{\mathcal{F}}} = I - \Delta_{\mathcal{F}}$. 
    
    With this kind of normalization being not feasible, two main solutions were explored:

    \begin{enumerate} 
        \item It was useful to directly regulate the diffusion process by maintaining a scaling factor $\alpha$ in the definition of the sheaf diffusion equation:

        \begin{equation}
         \dot{\mathbf{X}}(t) = - \alpha L_{\mathcal{F}}^{\Phi}\textbf{X}(t). \label{eq:sheaf_diff_alpha}
        \end{equation}
        
        A scaling factor is usually used in the definition of heat diffusion processes, and as Hansen and Gebhart do and suggest in \cite{hansen2020sheaf}, it may be a good idea to also keep it in the discrete-time versions to stabilize the dynamics. In this case the diffusion operator becomes $H_{L_{\mathcal{F}}}^{\alpha} = I - \alpha L_{\mathcal{F}}$ and an additional question arises, that is how to determine the best value for $\alpha$, and whether it should be learnt or be fixed and pre-defined.

        \item The secondly investigated option was emulating the normalization procedure of the linear Laplacian matrix through $D^{-\frac{1}{2}}$, that is not directly possible in the nonlinear case, by separately "normalizing" the coboundary operator $\delta$ and its transpose $\delta^{T}$. Indeed, even though a single Laplacian matrix is not present, two matrices are associated to $\delta$ and $\delta^{T}$, as shown also in Figure \ref{fig:coboundary_def}, and any kind of matrix operation can be performed on them.
        The intuition is that, as in the linear case $\Delta_{\mathcal{F}} = D^{-\frac{1}{2}}L_{\mathcal{F}}D^{-\frac{1}{2}} = D^{-\frac{1}{2}}\delta_{\mathcal{F}}^{T} \circ \delta_{\mathcal{F}} D^{-\frac{1}{2}} $, then also in the nonlinear setting something similar may work as well: $\Delta_{\mathcal{F}}^{\Phi} = D^{-\frac{1}{2}}\delta_{\mathcal{F}}^{T} \circ \Phi \circ D^{-\frac{1}{2}} \delta_{\mathcal{F}} $.
    \end{enumerate}

The numerous approaches tackling the implementation questions, that led to the construction of multiple versions of the model, can be summarized as shown in Figure \ref{fig:implementation_choices}.

\begin{figure}[t]
\centerline{\includegraphics[scale=0.25]{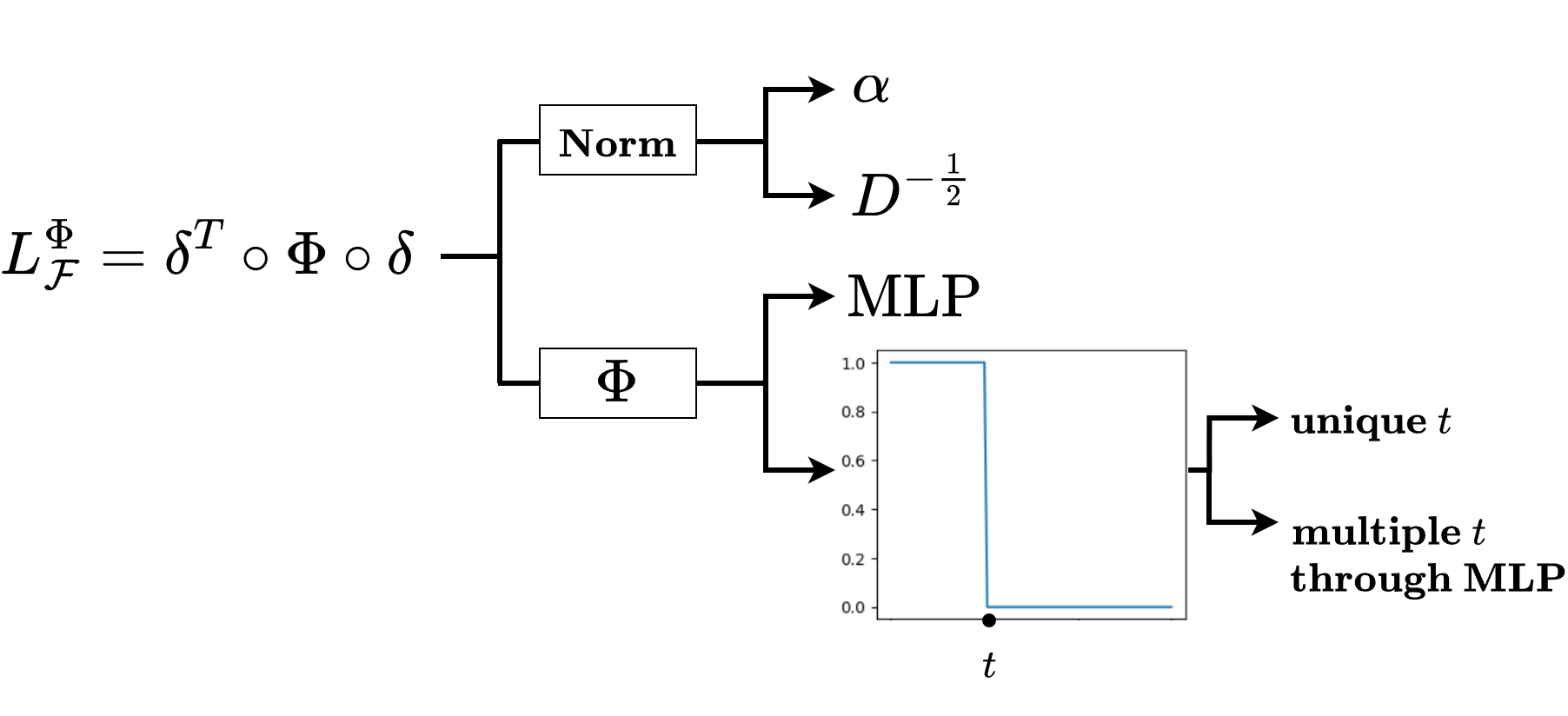}}
 \caption{Schema of the implementation choices that were explored in the process of finding the model with the best properties and characteristic in the nonlinear sheaf diffusion setting.}
 \label{fig:implementation_choices}
\end{figure}

\subsection{Types of Explored Models}

In this section, we will provide a comprehensive and detailed description of the various model variations that were developed and tested. The implementation choices align with the discussion points covered in the preceding section. The architectures will be presented in the precise chronological order of their development. We will delve into an analysis of their functionality, performance, and reasons behind the need for improvements due to identified drawbacks or malfunctions.

Building upon the practical implementation of NSD presented in Equation \ref{eq:diffusion_bodnar_discrete_eps}, we can derive the overarching expression for the Nonlinear Sheaf Diffusion models as follows, of which subsequent subsections will delve into specific cases or explore minor variations:

\begin{equation}
\textbf{X}^{(t+1)}=\left( 1+\varepsilon \right) \textbf{X}^{(t)}- \sigma \left( L_{\mathcal{F}(t)}^{\Phi}\left(\textbf{I}\otimes \textbf{W}_{1}^{(t)}\right) \textbf{X}^{(t)}\textbf{W}_{2}^{(t)}\right),
\end{equation}

which is equal to

\begin{equation}\label{eq:nonlin_expression_general}
\textbf{X}^{(t+1)}=\left( 1+\varepsilon \right) \textbf{X}^{(t)}- \sigma \left( \delta_{\mathcal{F}(t)}^{T}  \Phi \left( \delta_{\mathcal{F}(t)}\left(\textbf{I}\otimes \textbf{W}_{1}^{(t)}\right) \textbf{X}^{(t)}\textbf{W}_{2}^{(t)}\right)\right)
\end{equation}

with $\epsilon \in [-1, 1]^{d}$, learnt and different for each layer.

Additionally, it is important to specify that, even though the nonlinear Laplacian cannot be expressed as a matrix, the coboundary operator $\delta$ and its transpose $\delta^{T}$ can instead be singularly expressed as matrices as it was shown also in Figure \ref{fig:coboundary_def}. Thus, their associated operators in \ref{eq:nonlin_expression_general} are practically  implemented as sparse matrix multiplications. This holds for all model variations.

\begin{enumerate}
    \item
\textbf{Bounded Confidence, $\alpha$-normalization}
The first option to be explored for the definition of the nonlinearity was the bounded confidence model, while for what concerns the normalization criteria, it was first addressed by just utilizing a scaling factor $\alpha$. The chosen normalization criteria is inserted in the sheaf diffusion PDE as in Equation \ref{eq:sheaf_diff_alpha}, and in the discrete-time setting and practical implementation, the general expression above (Equation \ref{eq:nonlin_expression_general}) is updated in a very simple way:

\begin{equation}\label{eq:nonlin_expression_alpha_general}
\textbf{X}^{(t+1)}=\left( 1+\varepsilon \right) \textbf{X}^{(t)}- (1 + \alpha)\sigma \left( \delta_{\mathcal{F}(t)}^{T}  \Phi \left( \delta_{\mathcal{F}(t)}\left(\textbf{I}\otimes \textbf{W}_{1}^{(t)}\right) \textbf{X}^{(t)}\textbf{W}_{2}^{(t)}\right)\right).
\end{equation}

The value of $\alpha$ is chosen to be a learned vector $\alpha \in [-1, 1]^{d}$, and different for each layer, exactly as it holds for $\epsilon$. This set of conditions resulted to be the one granting the best performance in practice. 

For what concerns the nonlinear function $\Phi$, in order to give rise to bounded confidence dynamics in \ref{def_bounded}, it is defined as 

\begin{equation}
\Phi = \nabla U 
\end{equation}

such that 

\begin{equation}
\nabla U_{e} (y_{e}) = \psi'_{e}(\lVert y_{e}\rVert^{2})y_{e}\end{equation}

with $\psi'_{e}$ having the same properties as described in \ref{def_bounded}.

In practice, setting $\mathbf{Y} := \delta_{\mathcal{F}(t)}\left(\textbf{I}\otimes \textbf{W}_{1}^{t}\right) \textbf{X}_{t}\textbf{W}_{2}^{t}$:

\begin{equation}
    \Phi (\mathbf{Y}) = \psi'_{e}(\lVert \mathbf{Y}\rVert^{2})\mathbf{Y}.
\end{equation}

Regarding the shape of $\psi_{e}$ and its corresponding derivative $\psi'_{e}$, two main alternatives were explored: $\psi_{2}$ and $\psi_{3}$, as described in Figure \ref{fig:1tr}. In the first case, the derivative behaves as a simple step function, assuming a constant positive value (1) for inputs below a certain \text{threshold} and then transitioning to 0 for values exceeding the threshold. On the other hand, the second alternative involves a slightly more complex function. Instead of assuming a constant value for inputs below the threshold, the function exhibits an increasing and then decreasing behavior. Intuitively, this should capture the idea that when the result of the coboundary operator is almost equal to 0 (indicating high similarity between neighboring nodes' features), the signal is given a lower weight in the update compared to cases where the features are similar but not as much. The gradual decrease ensures a smoother transition to 0 when reaching the threshold value. In practice, the potential function $\psi_{2}$ demonstrated superior performance with respect to $\psi_{1}$, thus it was chosen as standard shape for $\psi_{e}$ for all subsequent experiments:

\begin{equation}
    \psi_{e}(x) = \psi_{2}(x) = 
    \begin{cases} 
      x & 0 < x < D_{e}  \\
      1 & x \geq D_{e}
   \end{cases}
   \label{eq:potential_bounded}
\end{equation}

\begin{equation}
    \psi'_{e}(x) = \psi'_{2}(x) = 
    \begin{cases} 
      1 & 0 < x < D_{e}  \\
      0 & x \geq D_{e}
   \end{cases}
\end{equation}

For what concerns the threshold value $D_{e}$, it is learnable and it could possibly assume any positive real value in $\mathbb{R}_{\ge 0}$. We investigated and implemented two cases:

\begin{itemize}
    \item A single threshold value $D$ is learnt for all the edges in the graph, but different ones for the different layers: $D_{e} = D \, \, \,  \forall e \in E$.
    \item A parametric map $D: E \rightarrow \mathbb{R}_{\ge 0}$, such that $D(e) = D_{e}$ is learnt, in a way that a specific threshold value is associated to each edge. This map is implemented through a MLP (that in practice is composed by just one layer, so it could also be referred to simply as a "Perceptron"), such that, given $e = (v, u)$, then $D(e) = \text{MLP}(|x_{v} - x_{u}|)$. In the last expression, $x_{v}$ and $x_{u}$ correspond to the node features in the layer right after applying the coboundary operator. Also in this case, different sets of weights for the MLP are learnt for different layers. 
\end{itemize}

The models resulting from this combination of implementation choices were tested on the same real-world datasets considered by Bodnar et al. in the NSD paper \cite{bodnar2022neural}, and their poor performance on datasets such as Chameleon and Squirrel led to look for a solution and to substantially modify part of the architecture.
The results of such experiments, that are reported in Appendix \ref{appendix_exp}, indicate that these models perform well when applied to small datasets; however, their accuracy noticeably decreases when dealing with datasets with a high number of edges. In these cases, the accuracy values are even lower than simpler GNN benchmarks like GCN \cite{kipf2016semi} and GAT \cite{velivckovic2017graph}. This decline in performance can be attributed to the normalization method's limitations in stabilizing the diffusion process for dense datasets. Consequently, alternative normalization techniques were explored, ultimately leading to the multiplication by squared diagonal matrices, as it will be introduced in the next subsection. This normalization approach proved to be the most effective one in multiple and different dataset settings.

\hfill 

\item \textbf{Bounded Confidence, $D^{-\frac{1}{2}}$-normalization}
Because of the poor scalability of the $\alpha$-nor\-ma\-li\-za\-tion method on dense and large datasets, we then investigated the option of performing a nor\-ma\-li\-za\-tion similarly to the symmetric Laplacian normalization performed in NSD \cite{bodnar2022neural}, that is $\Delta = D^{-\frac{1}{2}} L_{\mathcal{F}} D^{-\frac{1}{2}}$.

Following the intuitions expressed in Subsection \ref{questions} regarding the $D^{-\frac{1}{2}}$- normalization, the newly defined time-discrete diffusion model is the following:

\begin{equation}\label{eq:nonlin_expression_d}
\textbf{X}^{(t+1)}=\left( 1+\varepsilon \right) \textbf{X}^{(t)}- \sigma \left(D^{-\frac{1}{2}} \delta_{\mathcal{F}(t)}^{T} \Phi \left(D^{-\frac{1}{2}} \delta_{\mathcal{F}(t)}\left(\textbf{I}\otimes \textbf{W}_{1}^{(t)}\right) \textbf{X}^{(t)}\textbf{W}_{2}^{(t)}\right)\right).
\end{equation}

In order to understand how the normalization is performed in detail for each block of the coboundary matrix $\delta$, and how it strictly relates to how it was performed in the linear case, the reader may want to refer to Figure \ref{fig:diag_normalization}.

When implementing bounded confidence using a potential function $\psi_{e}$ as defined in Equation \ref{eq:potential_bounded}, if the derivative of this function is not 0 (or in other words, if there is some exchange of information between the considered edges), the nonlinear Laplacian becomes equal to the standard linear Laplacian. This occurs because if $\psi'_{e}(x) = 1$, the function $\Phi$ behaves exactly as the identity function. This motivates the definition of the normalization procedure for the coboundary operators: in this case, what emerges from their composition is exactly $\Delta_{\mathcal{F}}$. 

Although both this model and the previous one performed well on real-world datasets (see Tables \ref{tab:main_results} and \ref{tab:alpha}), it was challenging to find a synthetic dataset that clearly demonstrated the superiority of the implemented nonlinearity over other models in exhibiting the expected \textit{edge pruning} phenomenon. 
We created synthetic graph datasets for graph classification, where noisy edges should be disregarded for accurate node classification;
however, contrary to our initial expectations, the bounded confidence model did not outperform other benchmark GNN models or the standard linear sheaf models. 

This led us to the conclusion that it might be beneficial not to constrain the linearity to satisfy all the bounded confidence constraints strictly. Instead, allowing for a higher degree of freedom and expressiveness could potentially yield more intriguing results.
Based on this realization, we made further adjustments to the model, resulting in the following final implementation. Interestingly, this last version exhibited remarkable and innovative properties in a synthetic environment.

\begin{figure}[htp]
\centerline{\includegraphics[scale=0.23]{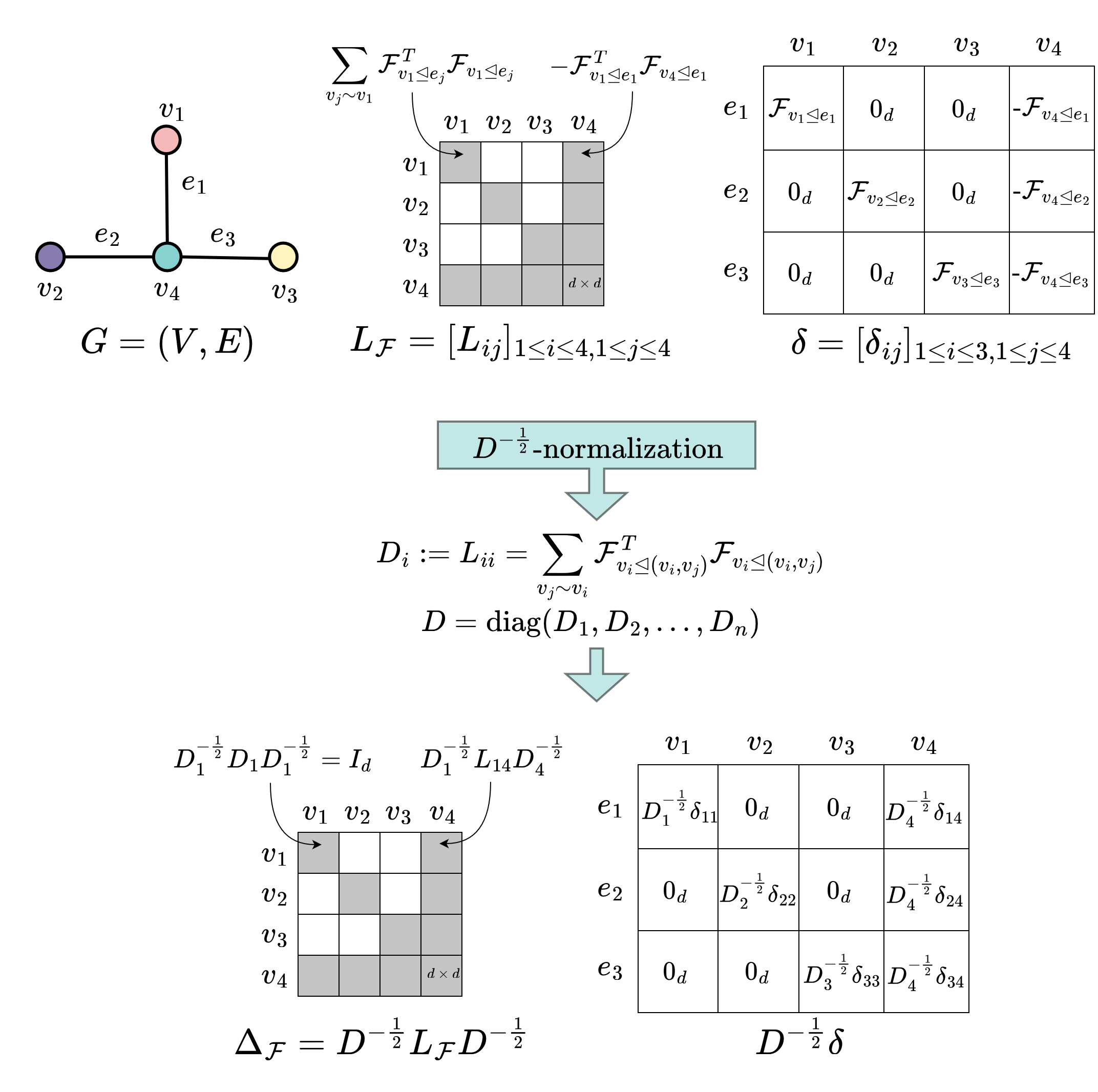}}
 \caption{The image shows how to compute the elements that are necessary for the $D^{-\frac{1}{2}}$-normalization, both in the linear and nonlinear setting. Starting from the Laplacian matrix $L_{\mathcal{F}}$ and from the coboundary operator $\delta$, the flow depicts how to first compute the diagonal matrix $D$. Then, starting from this, it illustrates how to obtain both the normalized linear Laplacian $\Delta_{\mathcal{F}}$ and the  factor $D^{-\frac{1}{2}}\delta$, which is deployed for the normalization of the coboundary operator in the nonlinear Laplacian.}
 \label{fig:diag_normalization}
\end{figure}

\hfill

\item \textbf{Nonlinearity as MLP, $D^{-\frac{1}{2}}$-normalization}

The last and final version of our model uses a MLP for implementing the nonlinearity, granting full freedom for its shape and behavior. It is defined by stacking a certain amount of linear layers followed by a ReLU \cite{brownlee2019gentle} activation function, and it is directly applied on the result of the coboundary operator. For what concerns the normalization procedure instead, the multiplication by $D^{-\frac{1}{2}}$ is still adopted.

The model's equation can be expressed as:

\begin{equation}\label{eq:nonlin_expression_d_mlp}
\textbf{X}^{(t+1)}=\left( 1+\varepsilon \right) \textbf{X}^{(t)}- \sigma \left(D^{-\frac{1}{2}} \delta_{\mathcal{F}(t)}^{T} \textbf{MLP} \left(D^{-\frac{1}{2}} \delta_{\mathcal{F}(t)}\left(\textbf{I}\otimes \textbf{W}_{1}^{(t)}\right) \textbf{X}^{(t)}\textbf{W}_{2}^{(t)}\right)\right).
\end{equation}

This version of the Nonlinear Sheaf Diffusion architecture has undergone testing on real-world datasets, as demonstrated in Table \ref{tab:main_results}. However, the most captivating aspects that distinguish this model as a superior choice compared to other benchmarks were observed evaluating it on synthetic datasets. These interesting findings are discussed in detail in the upcoming section.


\end{enumerate}

\clearemptydoublepage

\newpage

\chapter{Experiments}
\label{cha:experiments}

\section{Synthetic Experiments}\label{sec:synthetic}

As mentioned earlier, the synthetic dataset was initially designed to investigate whether the firstly implemented bounded confidence models would exhibit an \textit{edge pruning} effect, reducing or weakening connectivity between nodes based on their feature discrepancies during sheaf diffusion. Our goal was determining if the network could effectively utilize this element to identify and potentially disregard irrelevant or noisy edges in the graph during signal propagation.

However, the results did not align to the ones expected for these models. Interestingly though, the second implementation of the nonlinearity, with MLP, yielded surprising outcomes on the same datasets: this version did not simply ignore the edges but instead \textit{leveraged} them to enhance its classification capabilities. The following sections provide detailed information about the datasets used in the synthetic experiments.

\subsection{Dataset Idea}
The dataset was created with the idea of posing a 3-class node classification task. It consists of a sequence of graphs with the same nodes and node features, but with a different set of edges connecting them. More in detail, as the sequence progresses, random edges are added to the graphs (and some of the original ones may be removed, instead) in a specific manner. The goal of this construction is to simulate the existence of three distinct communities, such as humans belonging to different ethnicities. Initially the interactions occur within each community exclusively, resulting in three separate connected components in the graph. Later on, new connections are gradually introduced either between different communities or within the communities themselves, or of both types.

The reason behind the definition of new types of connections is to observe how significantly they impact the behavior of the models, that heavily rely on the edges of the graph to perform node classification. This happens because the node features for the different communities are designed in a way that makes it very difficult to classify individuals by solely relying on them, as it happens for linear models and MLPs.

\paragraph{Node features and communities}

As mentioned earlier, the nodes and their corresponding node features remain fixed across the different graphs within the dataset sequence. Specifically, each of the three communities consists of exactly 500 nodes, and for each community the node features are sampled from a different bivariate normal distribution. Figure \ref{fig:features_distribution} provides a detailed illustration of such definition. The underlying idea is that the node features alone should not be sufficient for accurate node classification, and that the models should primarily rely on the connections within the graphs for extracting the majority of information. To achieve this, the distributions are designed to overlap with each other for the most part, hence only very few \textit{outliers} are sampled from the tails of the distributions and not in the overlapping region. Specifically, the means of the three distributions are set to be $(0,0)$, $(1,1)$, and $(0,1)$ respectively. The three distributions share the same covariance matrix, which is a diagonal matrix with value 3 on the diagonal and 0 elsewhere (Subfigure \ref{3class_distributions}). When considering a specific community, sampling the features from a normal distribution implies that values near the mean of the distribution are obtained with higher probability with respect to values in the tails (Subfigure \ref{class1_distribution}).

\begin{figure}[hp]
\begin{subfigure}{0.48\textwidth}
    \includegraphics[width=3.10in]{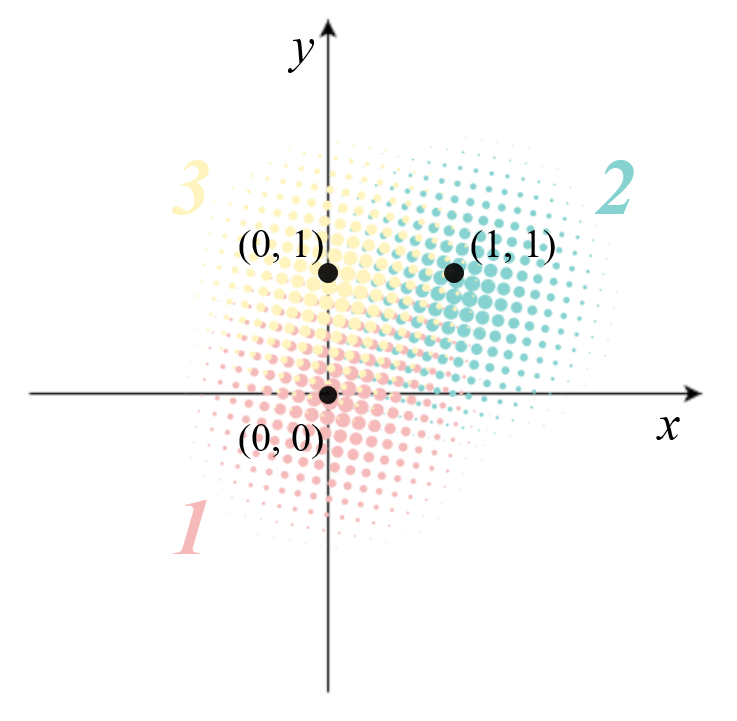}
    \caption{The three distributions are defined in a way to overlap for most of their region, but not completely. Hence, only tailored separation methods should be able to classify the individual nodes to their correct community.}
    \label{3class_distributions}
\end{subfigure}
\hfill
\begin{subfigure}{0.49\textwidth}
    \includegraphics[width=3.20in]{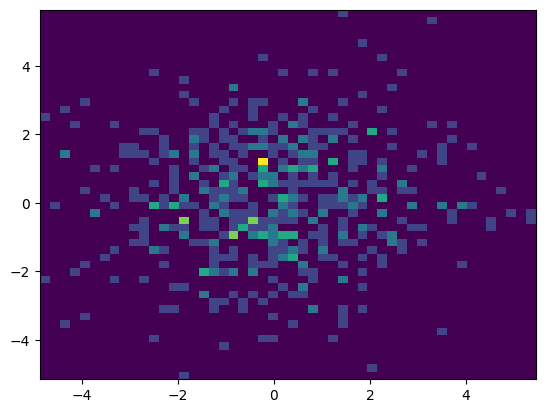}
    \caption{The node features are assigned by sampling from the distribution associated to the class they belong to: the case of the first community is represented here, in which brighter colors imply a higher frequency in sampling that value (that also reflects the shape of the distribution).}
    \label{class1_distribution}
\end{subfigure}
\hfill
\begin{subfigure}{0.49\textwidth}
    \includegraphics[width=3.50in]{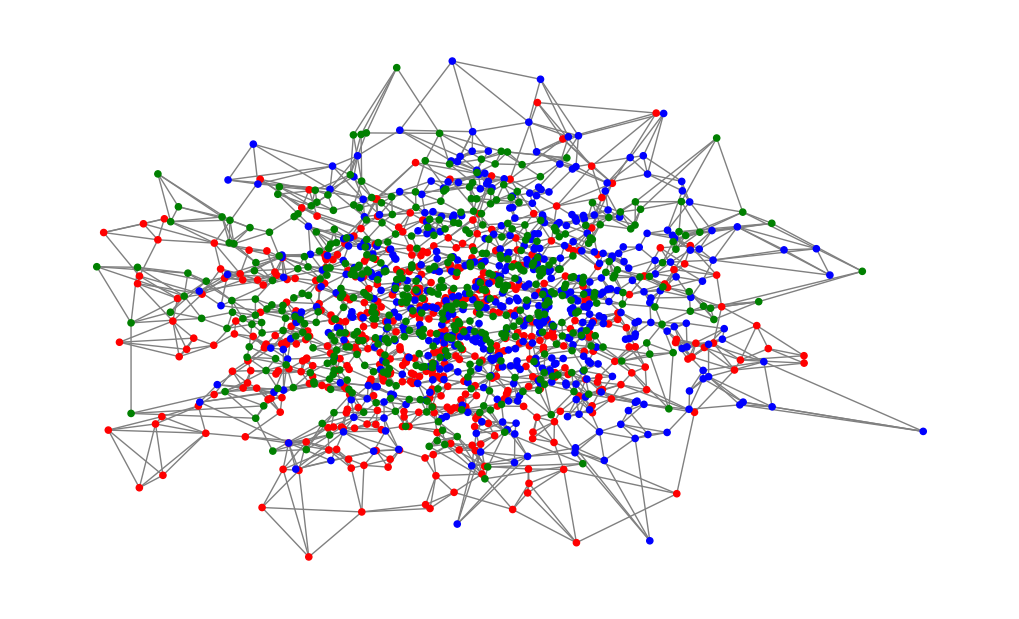}
    \caption{The initial graph, made up of three connected components associated to the three different communities, is plotted by assigning positional values to nodes, that exactly correspond to their features. Although nodes belonging to a specific class are predominant in some areas, they overlap in most of the space where they are distributed. This makes evident why classification is not trivial when random edges are added between different classes, in a way that relying only on the existence of an edge is not sufficient.}
    \label{3class_graph}
\end{subfigure}
\hfill
\begin{subfigure}{0.49\textwidth}
    \includegraphics[width=3.50in]{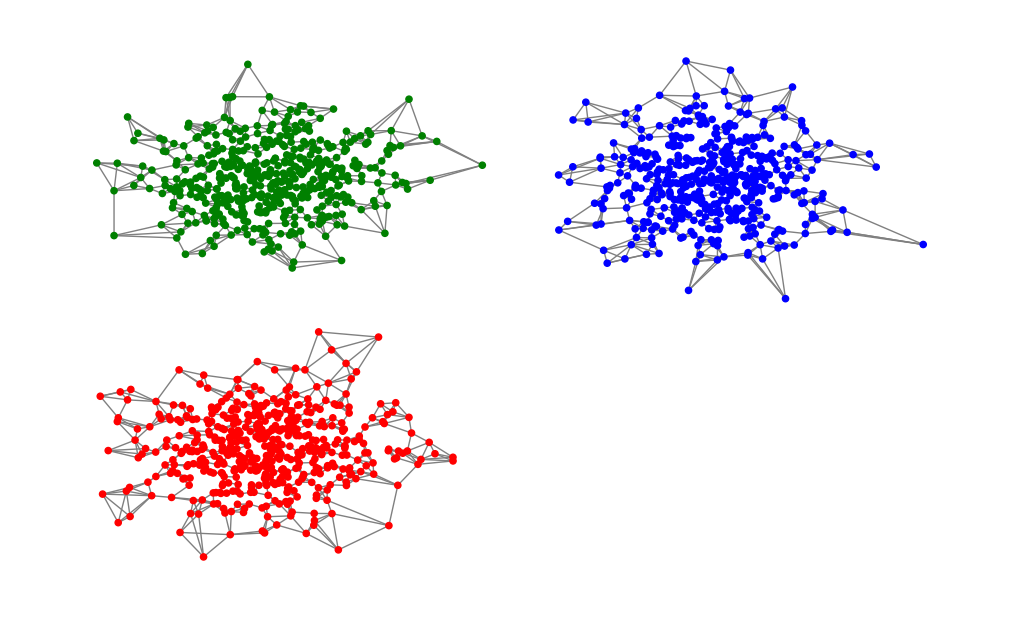}
    \caption{The initial graph, made up of three connected components associated to the three different communities, is plotted by assigning positional values to nodes that correspond to their features, but translated in a way to cancel out the overlap and better visualize the single community graph's layout.}
    \label{3class_graph_departed}
\end{subfigure}
\caption{The node features for the three communities are sampled from three distinct bivariate normal distributions with the same covariance matrix (a diagonal matrix with value 3 on the diagonal) and means equal to (0,0), (0,1) and (1,1) respectively.}
\label{fig:features_distribution}
\end{figure}

\newpage

\subsection{Edge Connections}
Edge connections are initially established using the $k$-NN algorithm within each of the three different communities. This means that at first each node is connected only to the $k$ nodes within its own class that are most similar to it, based on a similarity metric that considers the differences in node features. As a result, the graph is initially divided into three separate connected components, where each node's neighbors are its most similar counterparts.

Moving on to the definition of the dataset sequences, additional edges are randomly inserted into the entire graph, exploring different solutions for the following points:

\begin{itemize}

    \item \textbf{Total number of edges}: one approach to constructing the datasets maintains the initial set of edges generated with $k$-NN while adding the extra random edges (Figure \ref{fig:plots_knn_models_added}). In the other case, the same number of edges added to the network is removed in order to keep the total amount of edges in the graph constant (Figure \ref{fig:plots_knn_models_constant}).

    \item \textbf{Homophily of the edges}: when inserting the additional random edges into the graph, three different scenarios were considered, as it can be seen in both Figure \ref{fig:plots_knn_models_added} and \ref{fig:plots_knn_models_constant}. These involve adding edges either exclusively within a single community (\textit{intra-only random edges}), only between nodes belonging to different communities (\textit{inter-only random edges}), or both within individual communities and between different communities (\textit{inter+intra random edges}).
    
\end{itemize}

\begin{figure}[hp]
\centering
\begin{subfigure}{1\textwidth}
    \includegraphics[width=6.50in]{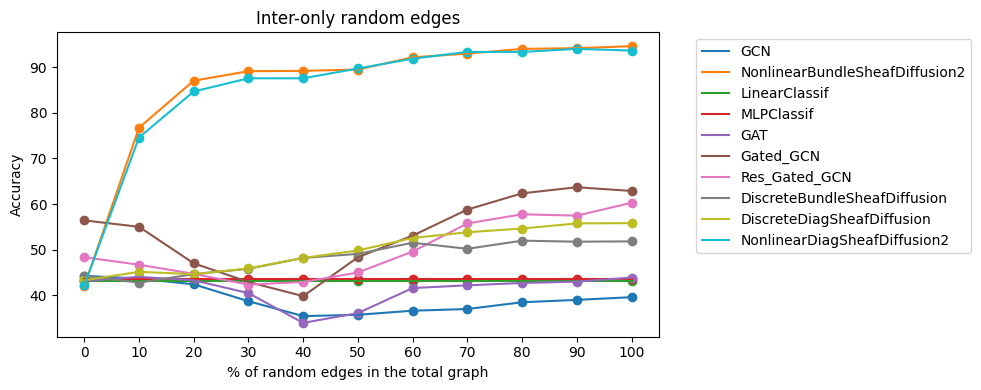}
    \label{constant_inter}
\end{subfigure}
\\
\bigskip
\centering
\begin{subfigure}{1\textwidth}
    \includegraphics[width=6.50in]{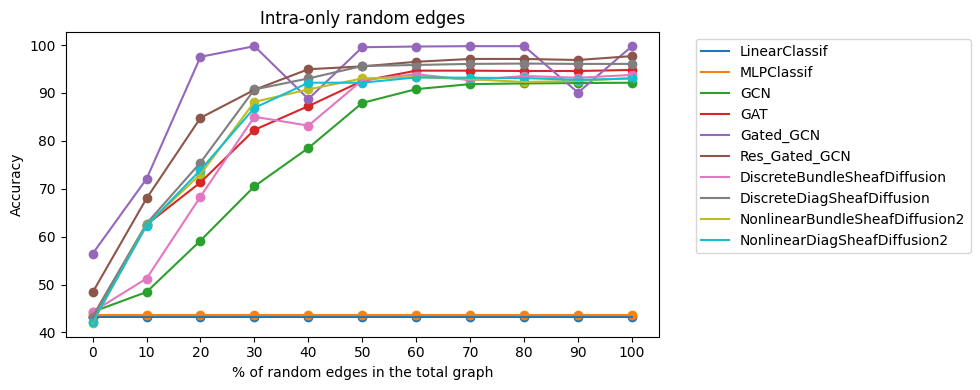}
    \label{constant_intra}
\end{subfigure}
\\
\bigskip
\begin{subfigure}{1\textwidth}
    \includegraphics[width=6.50in]{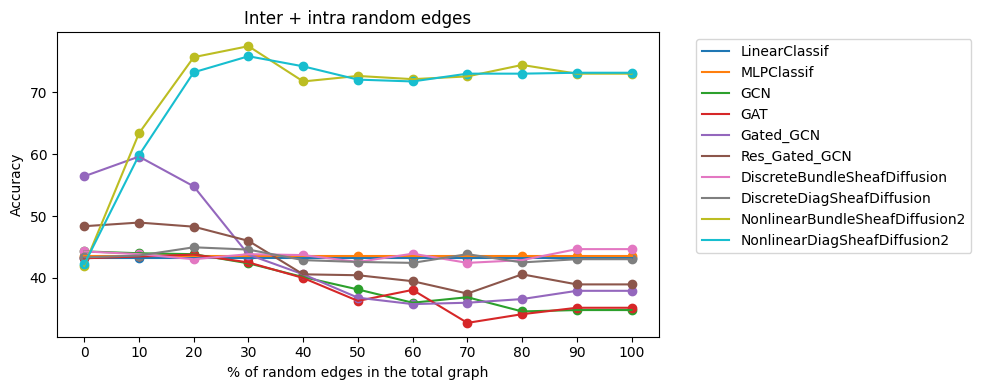}
    \label{constant_inter_intra}
\end{subfigure}
\bigskip
\caption{Accuracy results obtained when random edges are added to the initial graph configuration while removing the same amount of original edges, keeping the total amount of edges in the graph \textbf{constant}. The three plots showcase the outcomes that arise when only inter-class random edges are added (\textbf{upper} plot), only intra-class edges (\textbf{middle} plot) or both inter-class and intra-class edges (\textbf{lower} plot).}
\label{fig:plots_knn_models_constant}
\end{figure}

\begin{figure}[hp]
\centering
\begin{subfigure}{1\textwidth}
    \includegraphics[width=6.50in]{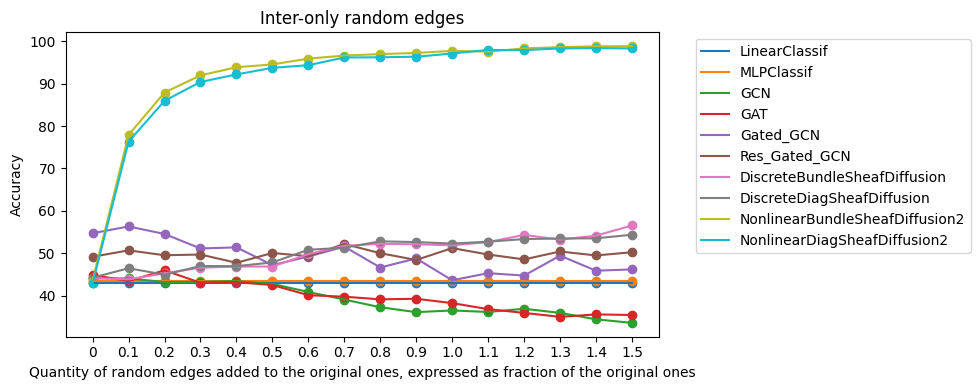}
    \label{added_inter}
\end{subfigure}
\\
\bigskip
\centering
\begin{subfigure}{1\textwidth}
    \includegraphics[width=6.50in]{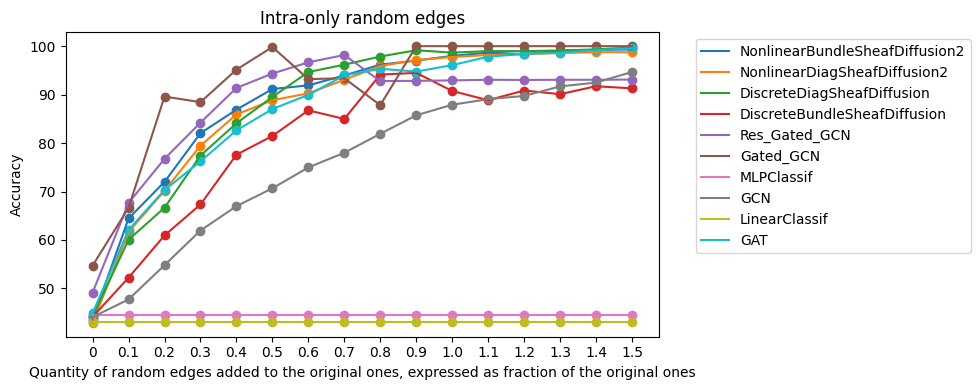}
    \label{added_intra}
\end{subfigure}
\\
\bigskip
\begin{subfigure}{1\textwidth}
    \includegraphics[width=6.50in]{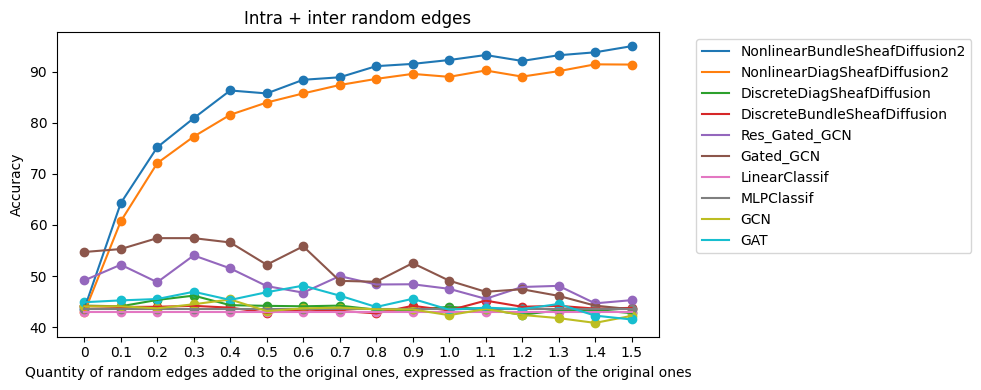}
    \label{added_inter_intra}
\end{subfigure}
\bigskip
\caption{Accuracy results obtained when random edges are added to the initial graph configuration, without removing any, leading to a progressively \textbf{increasing} total amount of edges in the graph. The three plots showcase the outcomes that arise when only inter-class random edges are added (\textbf{upper} plot), only intra-class edges (\textbf{middle} plot) or both inter-class and intra-class edges (\textbf{lower} plot).}
\label{fig:plots_knn_models_added}
\end{figure}

\paragraph{Results}

The results reveal a consistent pattern in both scenarios depicted in Figures \ref{fig:plots_knn_models_added} and \ref{fig:plots_knn_models_constant}. Across all models, it is evident from the plots that relying solely on the initial edge configuration generated through the $k$-NN algorithm makes it challenging to achieve accurate classifications. This difficulty arises because $k$-NN edges lack sufficient information for discrimination, as they only link similar nodes, leading to reduced classification accuracy for nodes with features sampled from the overlapping region of the distributions. 

However, when we introduce additional random edges within individual communities, we observe the emergence of shorter paths between areas of the distributions that are further apart. These can be effectively leveraged by all models, as demonstrated in the plots for the \textit{intra-only random edges} case.

A distinct behavior is observed when random edges are inserted not only within individual communities but also between different communities, or exclusively in the second way. The plots for the \textit{inter-only} and \textit{inter + intra random edges} scenarios indicate that the NLSD model is the only one capable of capitalizing on these edges effectively. As expected, the lowest accuracy values are achieved by all architectures when random edges are both inter- and intra-class, which lacks consistency in edge types, and when the initial $k$-NN edges are removed in parallel, which exacerbates the instability of the model compared to the scenario where all initial edges are retained.

\newpage

\subsection{Further Analysis}
In order to gain further insights into the factors driving the  accuracy results observed in the previous section, we conducted additional experiments. 

\paragraph{Comparison of classification patterns}

Our initial analysis involved plotting the configuration of correctly and incorrectly classified nodes within the three communities of the graph. This analysis considered the scenario in which only random inter-class edges are added while original edges are removed. The underlying idea was to assess how effectively our model could leverage these connections to enhance classification accuracy, as compared to other models that struggled to extract meaningful information from such edges. The results actually show that these models heavily rely on node features alone, and they often fall short of achieving satisfactory classification results.

Figure \ref{fig:final_correct_wrong} presents a comparison between the behavior of the Linear and Nonlinear orthogonal sheaf in this scenario, in the extreme setting in which 100\% of the edges in the graph are random inter-class edges. This case corresponds to the upper diagram in Figure \ref{fig:plots_knn_models_constant}, specifically for a percentage value of 100.
In the first Subfigure (\ref{nonlinear_redblue}), we observe the Nonlinear bundle sheaf case, where the number of incorrectly classified nodes is minimal, as expected from the high accuracy (Figure \ref{fig:plots_knn_models_constant}, upper plot). Additionally, these misclassified nodes are distributed across classes independently of the normal distributions used to sample node features.
On the other hand, the second Subfigure (\ref{linear_redblue}) illustrates the Linear bundle sheaf case, where the classification heavily depends on the node features themselves. The wrongly classified nodes primarily belong to the overlapping region of the distributions, which is the most challenging area for classification when based solely on node features.

\begin{figure}[hp]
\centering
\begin{subfigure}{0.8\textwidth}
    \includegraphics[width=5.30in]{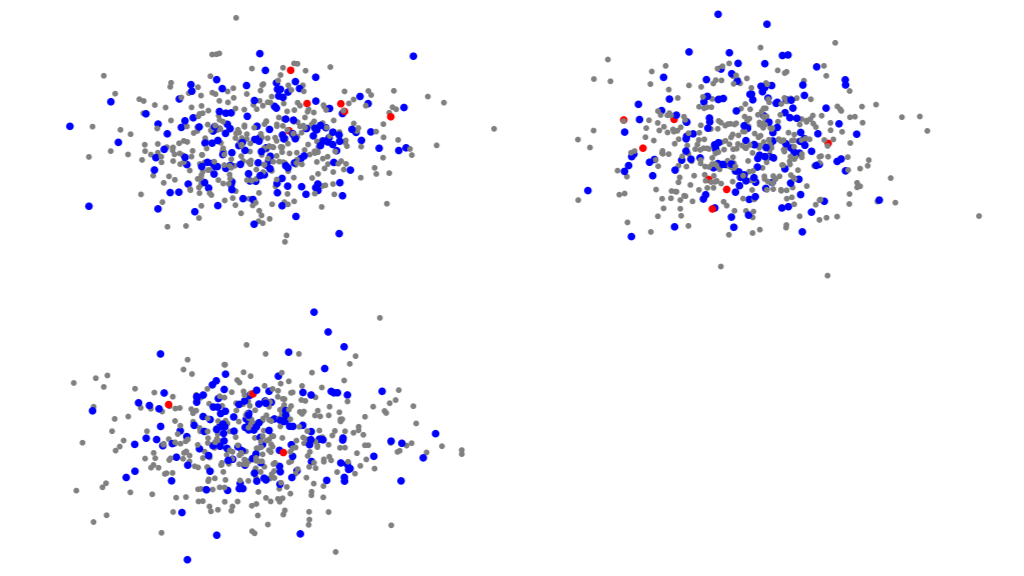}
    \caption{Nonlinear O(d) sheaf}\label{nonlinear_redblue}
\end{subfigure}
\\
\begin{subfigure}{0.8\textwidth}
    \includegraphics[width=5.30in]{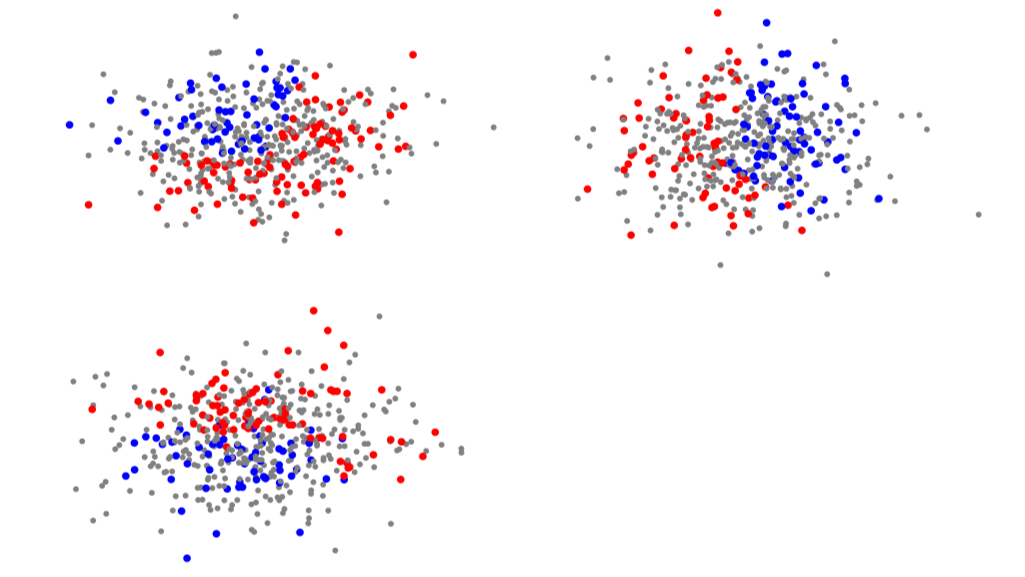}
    \caption{Linear O(d) sheaf}\label{linear_redblue}
\end{subfigure}
\caption{Comparison between Linear and Nonlinear bundle sheaf predictions when 100\% of the edges in the graph are random inter-class edges (Figure 5.2, upper plot). Correctly classified nodes highlighted in \textbf{\textcolor{blue}{blue}} and incorrectly classified nodes are highlighted in \textbf{\textcolor{red}{red}}.}
\label{fig:final_correct_wrong}
\end{figure}

\newpage

\paragraph{Single-layer analysis}

Further studies were conducted in the setting in which each architecture is composed by one single layer. This specific setup was chosen to facilitate a clearer understanding of how the configuration of incident edges to a node influences its classification outcome. With only one layer of message passing performed before making the classification decision, the impact of edge connectivity becomes more readily apparent. By examining the connectivity patterns of nodes leading to particular prediction patterns, we aimed to uncover any underlying relationships between the network structure and the observed classification outcomes.

Notably, even when considering single-layer models, the accuracy plot does not deviate significantly from the scenario involving three layers for each model, as shown in Figure \ref{fig:results_singlelayer}. In fact, our models exhibit a comparatively minor decrease in performance compared to the competitors.

\begin{figure}[htp]
\centering
\centerline{\includegraphics[scale=0.65]{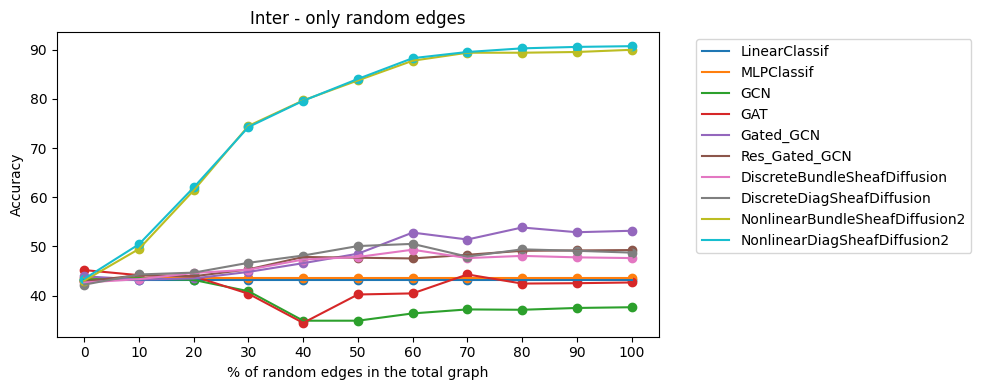}}
 \caption{Plots of accuracies achieved by the models on a dataset sequence constructed using $k$-NN connections as a starting point, and subsequently incorporating random inter-class edges while simultaneously removing the same amount of intra-class edges randomly. The distinctive aspect of this experiment is that each tested architecture is characterized by a \textbf{single layer}. }
 \label{fig:results_singlelayer}
\end{figure}

\paragraph{Comparison of connectivity patterns for \textit{wrong-first} and \textit{wrong-always} nodes}

Multiple analyses were carried out on the incident edges to two particular sets of nodes, which are:

\begin{itemize}
    \item \textit{Wrong-first} nodes, namely the ones that are classified incorrectly when the graph with the original edge set is considered, while they're classified correctly for every non-zero percentage value of random edges added to the graph (and the same amount of original edges being removed). In the upper plot of  Figure \ref{fig:plots_knn_models_constant}, these are nodes whose prediction is wrong for percentage value 0\%, and correct for all other values. Ideally, these should correspond to the nodes whose initial location in the feature distribution is not informative enough to classify them correctly, hence their node features are most likely sampled from a region of the feature space corresponding with an overlap of the distributions. Adding random edges, though, builds useful connections for such nodes, that should be leveraged to correctly learn how to classify them. 

    \item \textit{Wrong-always} nodes: these are the nodes that are wrongly classified for all percentage values of randomly added edges to the original graph (while the same amount is removed from the initial ones). These ideally correspond to nodes that both have node features not informative enough to correctly classify them at the start, and for which the addition of random edges doesn't provide useful incident edges for the classification.  
\end{itemize}

Studying the pattern of incident edges to these types of nodes for an increasing percentage of additional random edges helps in understanding how the alternation of the links in the graph affects the performance of the models. These analyses were carried out by defining one single layer in the network, as this ensures that only the immediate neighbors of each node are leveraged to solve the task. 

The initial analysis, conducted multiple times, involved sampling one node each from the sets of \textit{wrong-always} and \textit{wrong-first} nodes. We qualitatively compared their connectivity patterns by plotting their incident edges in different colors. The results are illustrated in Figures \ref{fig:1layer_linear}, \ref{fig:1layer_nonlinear_1}, and \ref{fig:1layer_nonlinear_3}. In these plots, each node is positioned based on its node feature, potentially translated depending on its community for visual clarity. This avoids overlap and provides a clearer overview of the edge pattern.

As a complementary analysis to validate the first one, the second study calculated the average number of incident edges for \textit{wrong-always} and \textit{wrong-first} nodes for different percentages of random edges in the graph dataset. The corresponding results are presented in Table \ref{tab:wwongalways_wrongfirst}

The considerations derived from the two investigations can be summarized as follows for the case of the Linear and Nonlinear Laplacians:

\begin{itemize}
    \item  \textbf{Linear}:
    after running several experiments, of which an example is shown in Figure \ref{fig:1layer_linear}, we observed that discerning relevant and consistent patterns for the edge configurations of \textit{wrong-first} and \textit{wrong-always} nodes, that would allow to discriminate between them, is not straightforward. This primarily involves the arrangement of links—the number of connections to  other communities and their distribution within those communities. The same reasoning applies to  the average number of edges incident to the two classes of nodes: it is evident from Table \ref{tab:wwongalways_wrongfirst} that there is no correlation between the amount of incident edges to a node and how it is classified by the model, even when considering increasing percentages of added random edges.
    \item \textbf{Nonlinear}: the plots in Figures \ref{fig:1layer_nonlinear_1} and \ref{fig:1layer_nonlinear_3} reveal distinctive patterns in the misclassification of nodes based on certain key observations. \textit{Wrong-first} nodes exhibit a higher volume of incident edges (see also Table \ref{tab:wwongalways_wrongfirst}). Notably, these nodes tend to possess multiple edges linking them to one or two other communities. Interestingly, when connected to only one other community, the edges are usually highly informative as they link to nodes that do not belong to the overlapping area.
Conversely, \textit{wrong-always} nodes display different characteristics. These nodes generally exhibit fewer connections in comparison  as the percentage increases  (see Table \ref{tab:wwongalways_wrongfirst}) and are often linked to 0 or only 1 other community (see Figures   \ref{fig:1layer_nonlinear_1} and \ref{fig:1layer_nonlinear_3}). The edges associated with these nodes prove to be less helpful for discrimination, especially within areas of overlapping distribution, highlighting the complexity of their classification.
\end{itemize}

In essence, the key point is that Nonlinear models showcase performance influenced by the quantity and types of node connections. These models can effectively utilize newly added links, relying on connectivity for classification rather than solely on node features. In contrast, Linear models demonstrate a behavior that remains unaffected by changes in connectivity introduced by random edges. This emphasizes their limited ability to leverage connectivity effectively.

\begin{table}[htbp]
  \centering
  \caption{Average number of incident edges to the set of \textit{wrong-first} nodes, that is the set of nodes that are wrongly classified in the 0\% case, and correctly classified in all other cases, and to the set of \textit{wrong-always} nodes, namely the ones that are wrongly classified for all percentages of random inter-class edges added to the graph.}
  \label{tab:wwongalways_wrongfirst}
  \begin{tabular}{c|cccc}
     & \multicolumn{2}{c}{\textbf{Wrong-first}} & \multicolumn{2}{c}{\textbf{Wrong-always}} \\ \hline
    \textbf{\% random edges} & \textbf{Linear} & \textbf{Nonlinear} & \textbf{Linear} & \textbf{Nonlinear} \\ \hline
    0 \% & 4.2 & 5.1 & 5.2 & 5.0 \\ 
    10 \% & 4.0 & 5.5 & 5.2 & 4.6 \\
    20 \% & 3.8 & 5.8 & 6.4 & 4.4 \\
    40 \% & 5.2 & 5.4 & 5.6 & 3.3 \\
    80 \% & 5.4 & 5.5 & 4.8 & 2.6 \\
    100 \% & 5.6 & 5.5 & 4.8 & 2.5 \\
  \end{tabular}
\end{table}

\begin{figure}[hp]
\centering
\begin{subfigure}{0.49\textwidth}
    \includegraphics[width=3.50in]{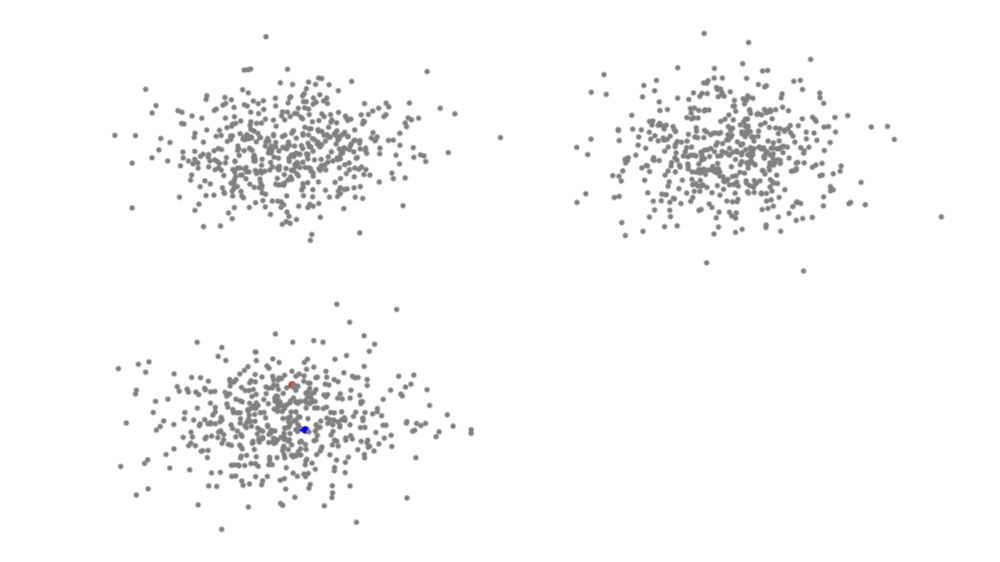}
    \caption{0\% of random edges}
    \label{0_lin}
\end{subfigure}
\centering
\begin{subfigure}{0.49\textwidth}
    \includegraphics[width=3.50in]{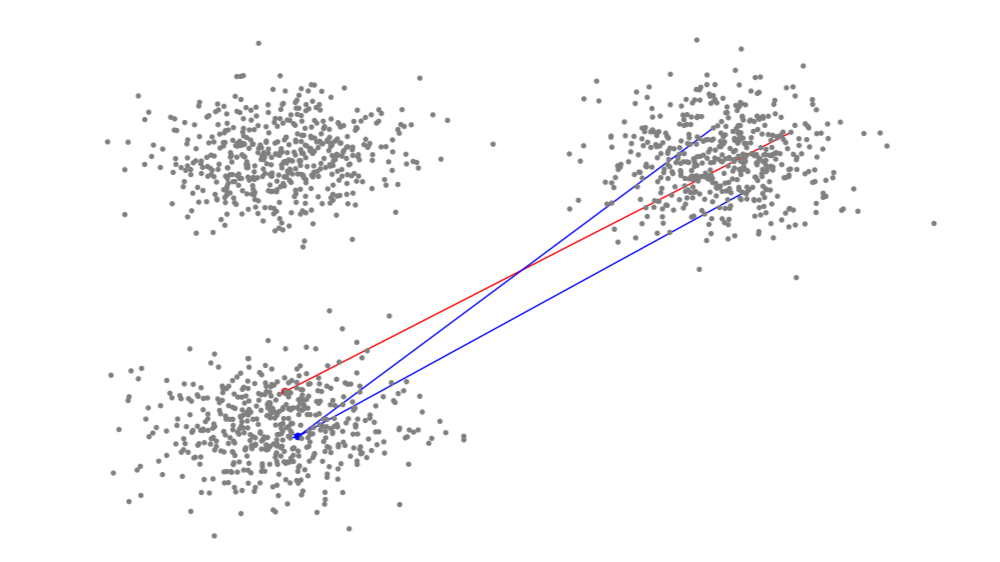}
    \caption{10\% of random edges}
    \label{10_lin}
\end{subfigure}
\\
\bigskip
\begin{subfigure}{0.49\textwidth}
    \includegraphics[width=3.50in]{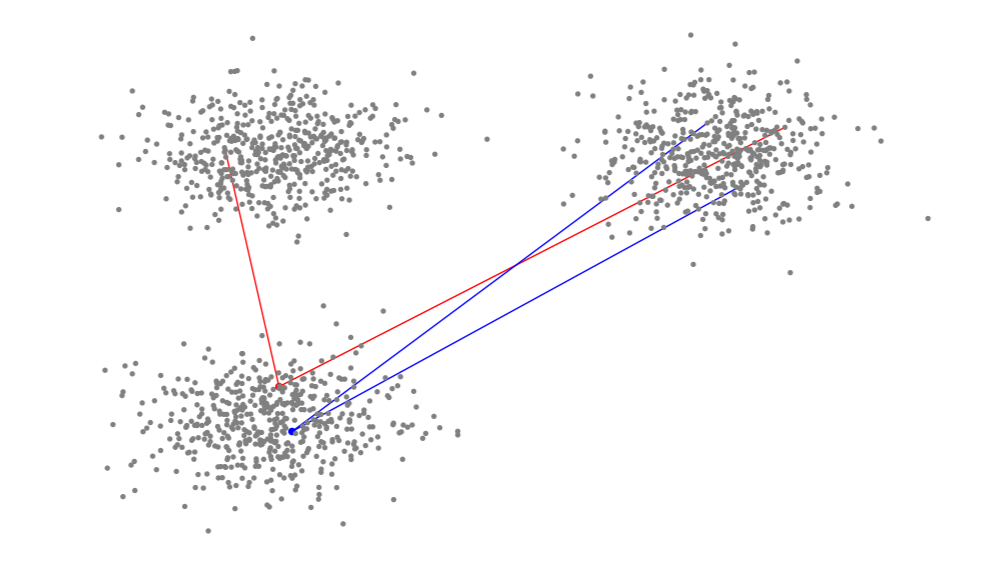}
    \caption{20\% of random edges}
    \label{20_lin}
\end{subfigure}
\begin{subfigure}{0.49\textwidth}
    \includegraphics[width=3.50in]{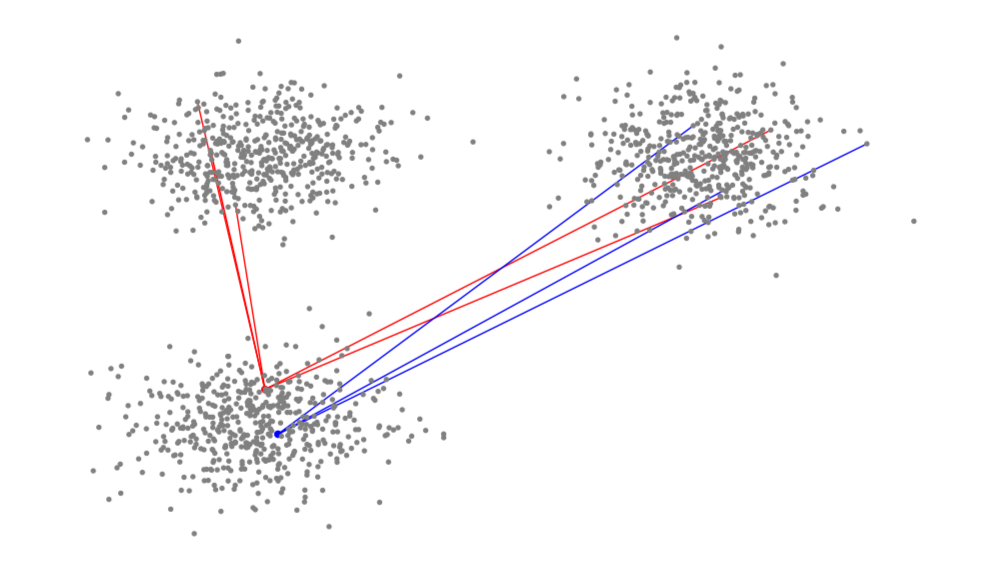}
    \caption{40\% of random edges}
    \label{40_lin}
\end{subfigure}
\\
\bigskip
\centering
\begin{subfigure}{0.49\textwidth}
    \includegraphics[width=3.50in]{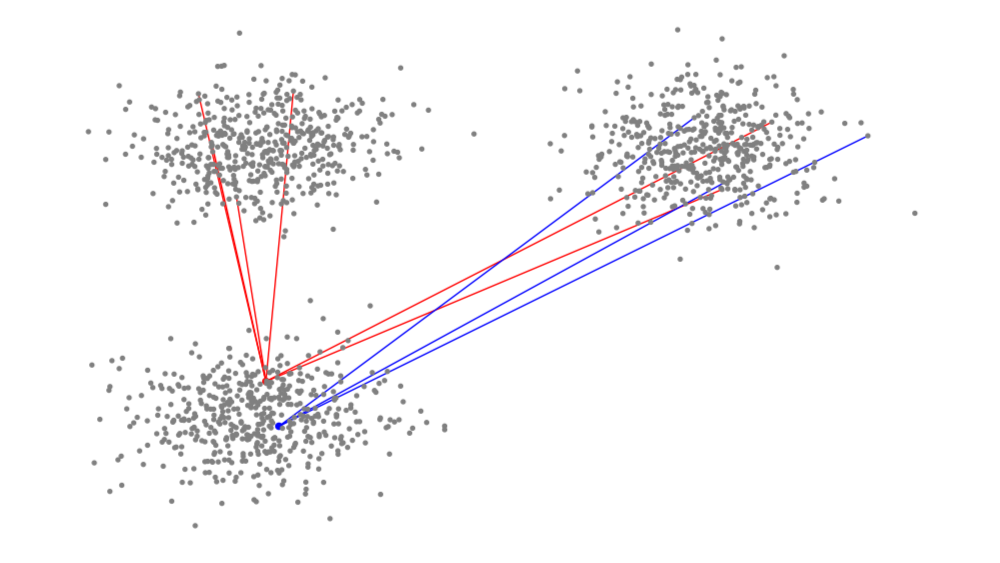}
    \caption{80\% of random edges}
    \label{80_lin}
\end{subfigure}
\begin{subfigure}{0.49\textwidth}
    \includegraphics[width=3.50in]{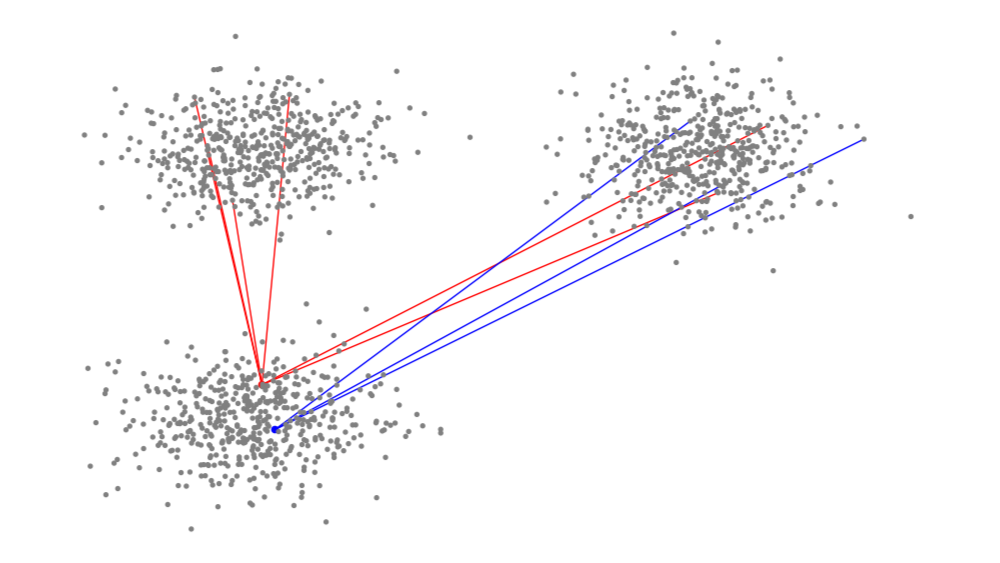}
    \caption{100\% of random edges}
    \label{100_lin}
\end{subfigure}
\bigskip
\caption{A 3-layer \textbf{Linear} SNN with orthogonal restriction maps is tested in the setting in which random inter-class edges are progressively added to the graph, while the same amount of original intra-class edges is removed. One node, marked in \textbf{\textcolor{red}{red}}, is randomly sampled from the set of nodes that are consistently misclassified across all percentages of inter-class edges added to the graph. Another node, marked in \textbf{\textcolor{blue}{blue}}, is sampled from the nodes that are misclassified when no random edges are present but correctly classified in all subsequent cases within the dataset sequence. Most notably, the incident edges of these selected nodes are color-coded to match their respective node color.}
\label{fig:1layer_linear}
\end{figure}

\begin{figure}[hp]
\centering
\begin{subfigure}{0.49\textwidth}
    \includegraphics[width=3.50in]{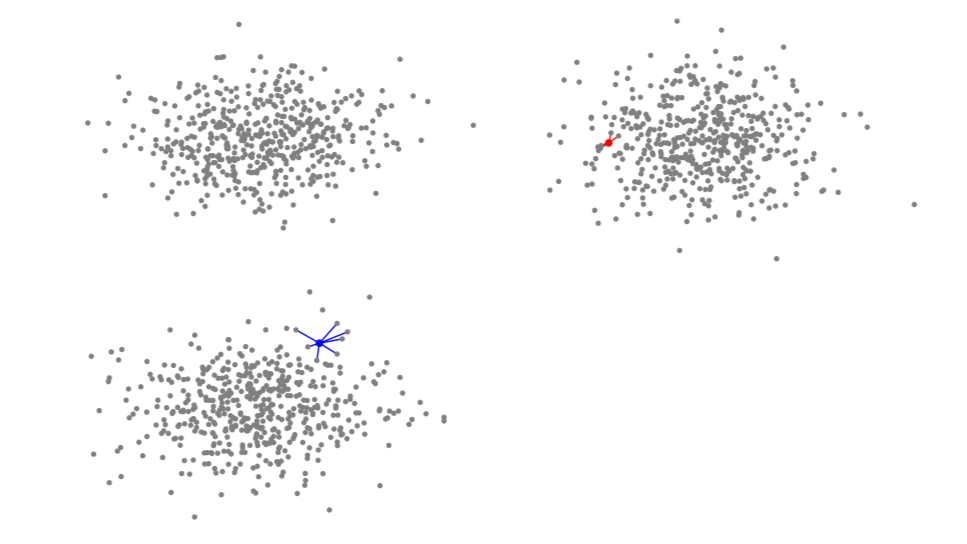}
    \caption{0\% of random edges}
\end{subfigure}
\centering
\begin{subfigure}{0.49\textwidth}
    \includegraphics[width=3.50in]{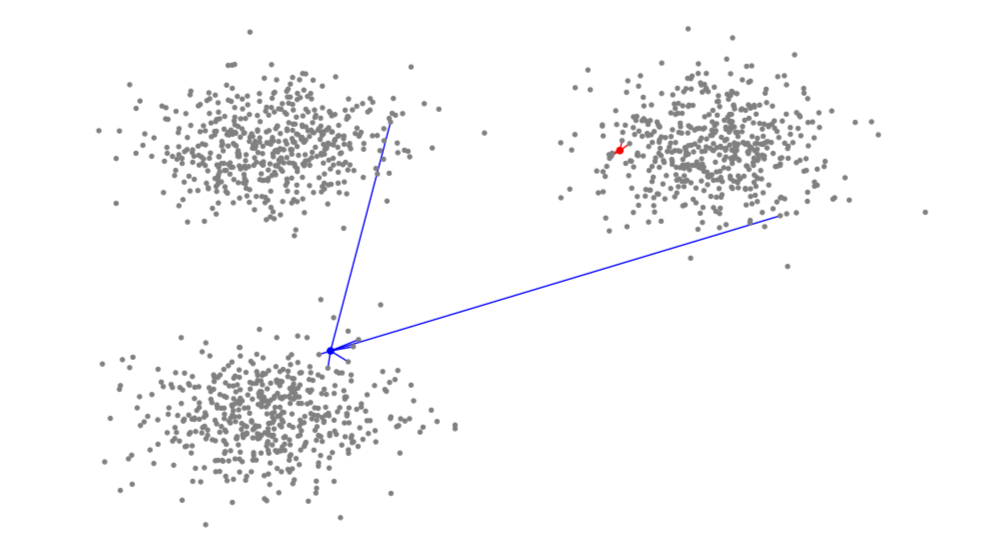}
    \caption{10\% of random edges}
\end{subfigure}
\\
\bigskip
\begin{subfigure}{0.49\textwidth}
    \includegraphics[width=3.50in]{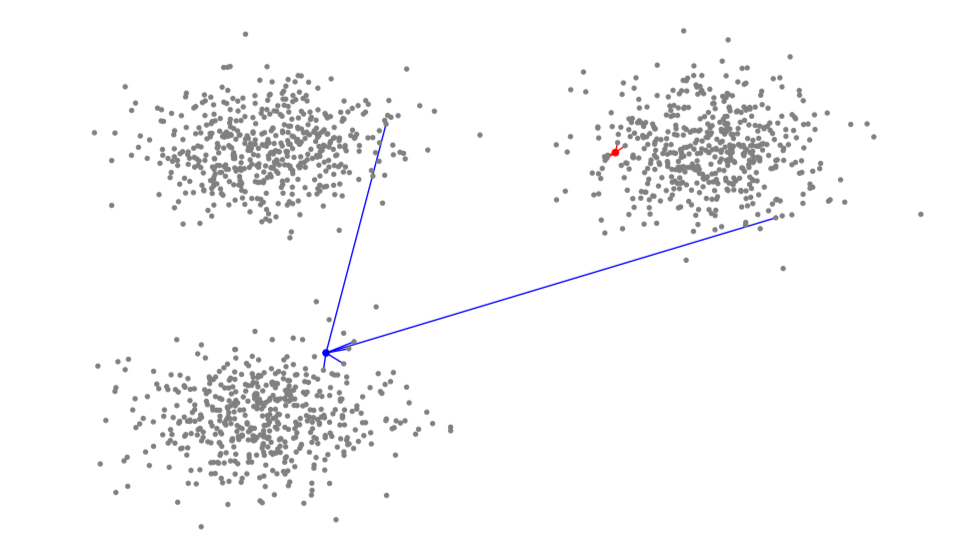}
    \caption{20\% of random edges}
\end{subfigure}
\begin{subfigure}{0.49\textwidth}
    \includegraphics[width=3.50in]{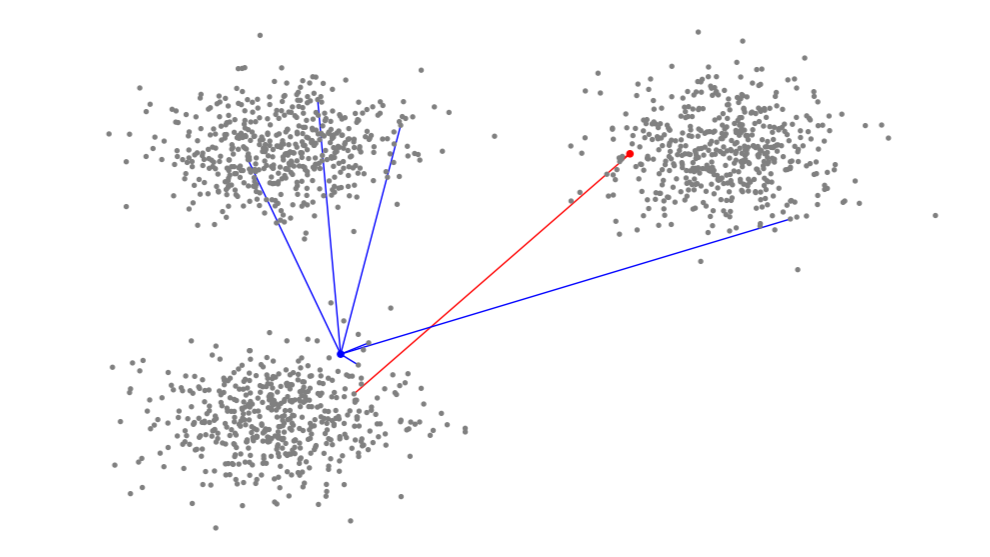}
    \caption{40\% of random edges}
    \label{40_nonlin_1}
\end{subfigure}
\\
\bigskip
\centering
\begin{subfigure}{0.49\textwidth}
    \includegraphics[width=3.50in]{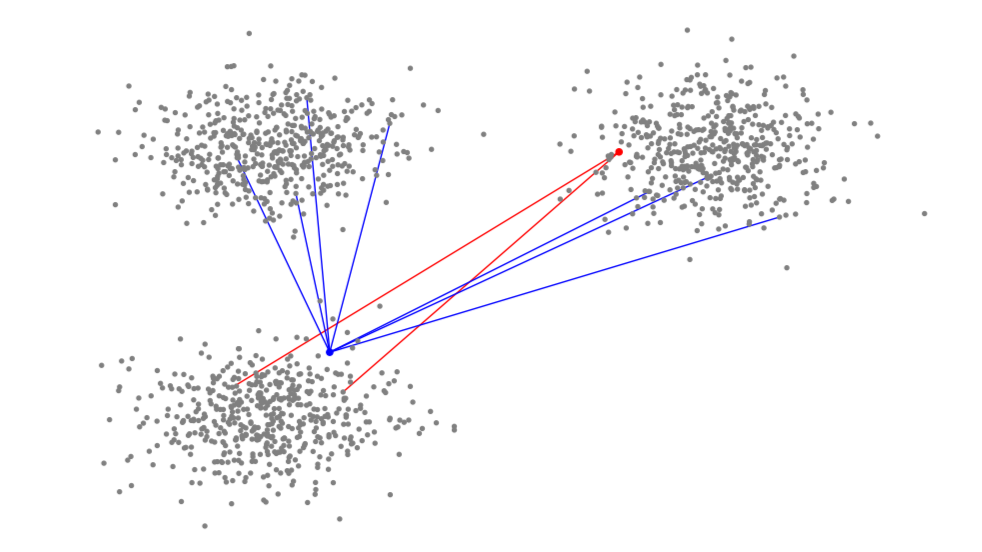}
    \caption{80\% of random edges}
    \label{80_nonlin_1}
\end{subfigure}
\begin{subfigure}{0.49\textwidth}
    \includegraphics[width=3.50in]{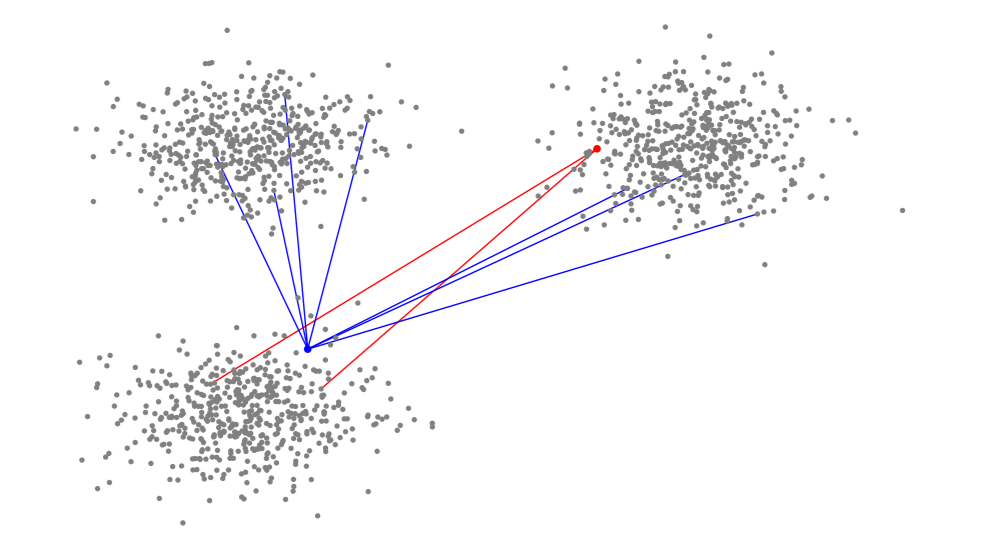}
    \caption{100\% of random edges}
    \label{100_nonlin_1}
\end{subfigure}
\bigskip
\caption{A 3-layer \textbf{Nonlinear} SNN with orthogonal restriction maps is tested in the setting in which random inter-class edges are progressively added to the graph, while the same amount of original intra-class edges is removed. One node, marked in \textbf{\textcolor{red}{red}}, is randomly sampled from the set of nodes that are consistently misclassified across all percentages of inter-class edges added to the graph. Another node, marked in \textbf{\textcolor{blue}{blue}}, is sampled from the nodes that are misclassified when no random edges are present but correctly classified in all subsequent cases within the dataset sequence. Most notably, the incident edges of these selected nodes are color-coded to match their respective node color.}
\label{fig:1layer_nonlinear_1}
\end{figure}

\begin{figure}[hp]
\centering
\begin{subfigure}{0.49\textwidth}
    \includegraphics[width=3.50in]{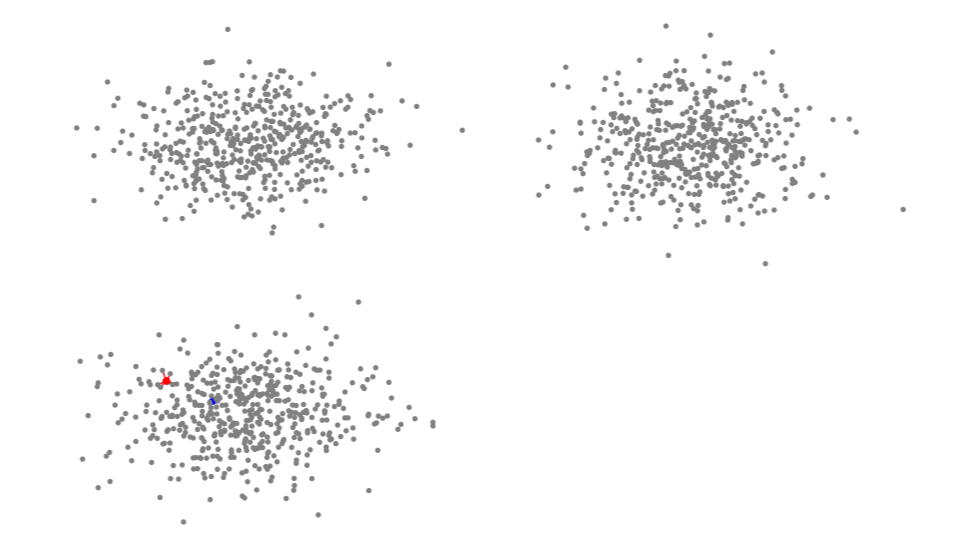}
    \caption{0\% of random edges}
\end{subfigure}
\centering
\begin{subfigure}{0.49\textwidth}
    \includegraphics[width=3.50in]{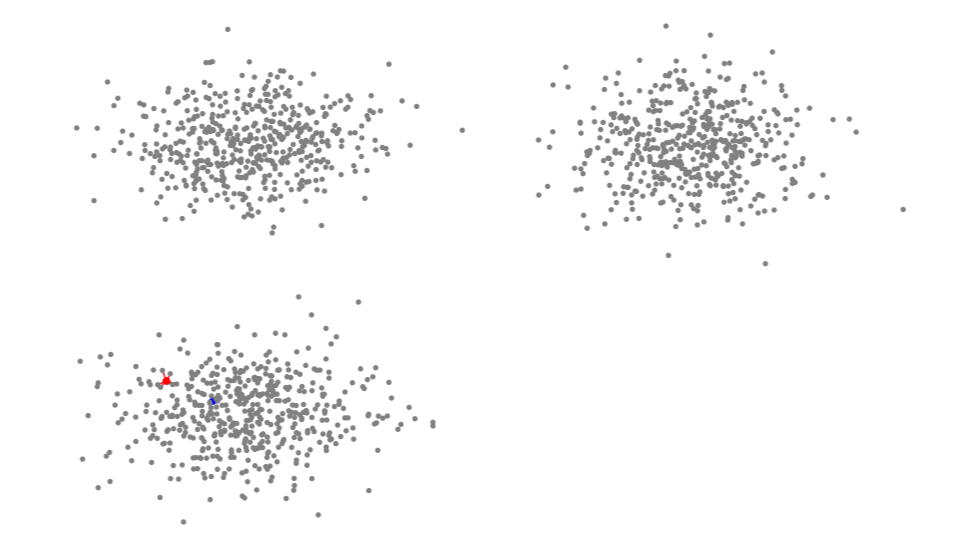}
    \caption{10\% of random edges}
\end{subfigure}
\\
\bigskip
\begin{subfigure}{0.49\textwidth}
    \includegraphics[width=3.50in]{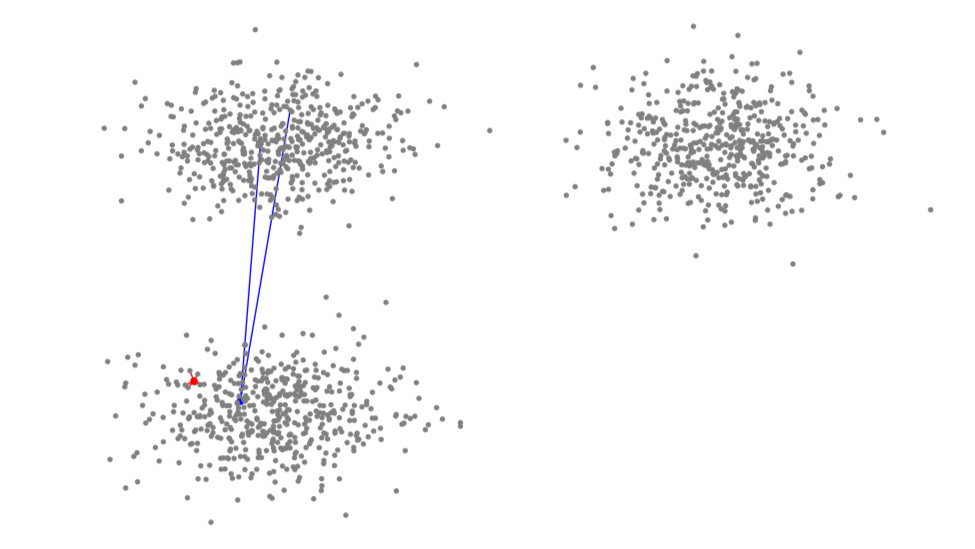}
    \caption{20\% of random edges}
\end{subfigure}
\begin{subfigure}{0.49\textwidth}
    \includegraphics[width=3.50in]{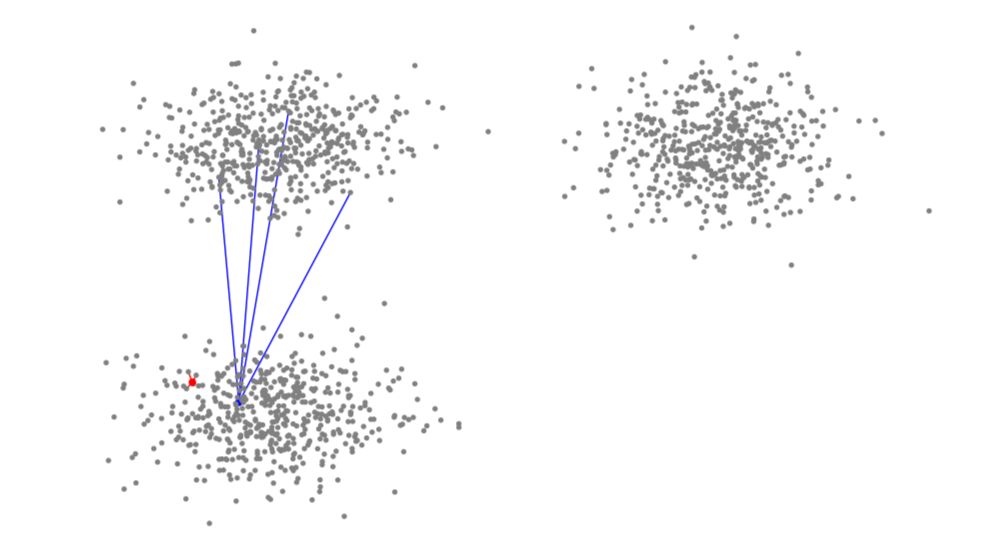}
    \caption{40\% of random edges}
\end{subfigure}
\\
\bigskip
\centering
\begin{subfigure}{0.49\textwidth}
    \includegraphics[width=3.50in]{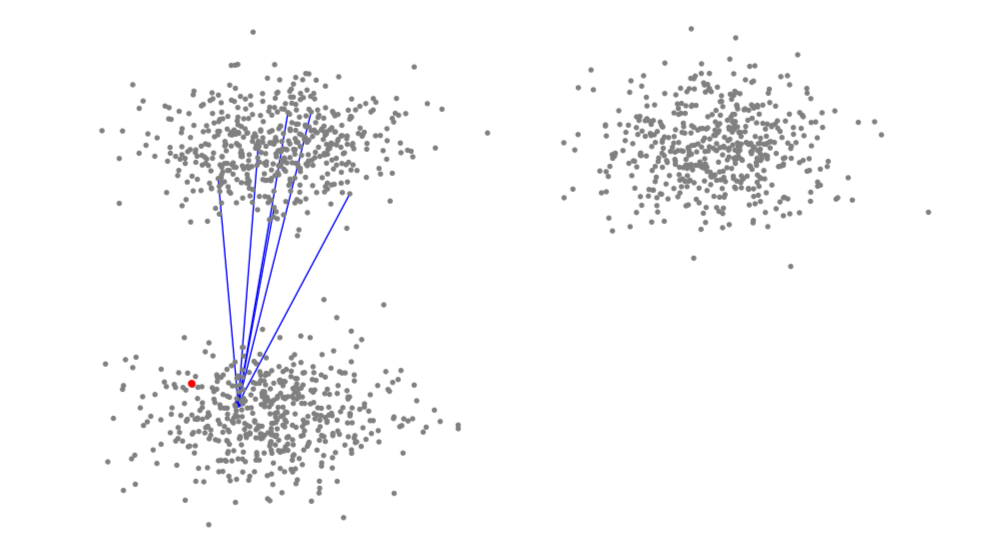}
    \caption{80\% of random edges}
\end{subfigure}
\begin{subfigure}{0.49\textwidth}
    \includegraphics[width=3.50in]{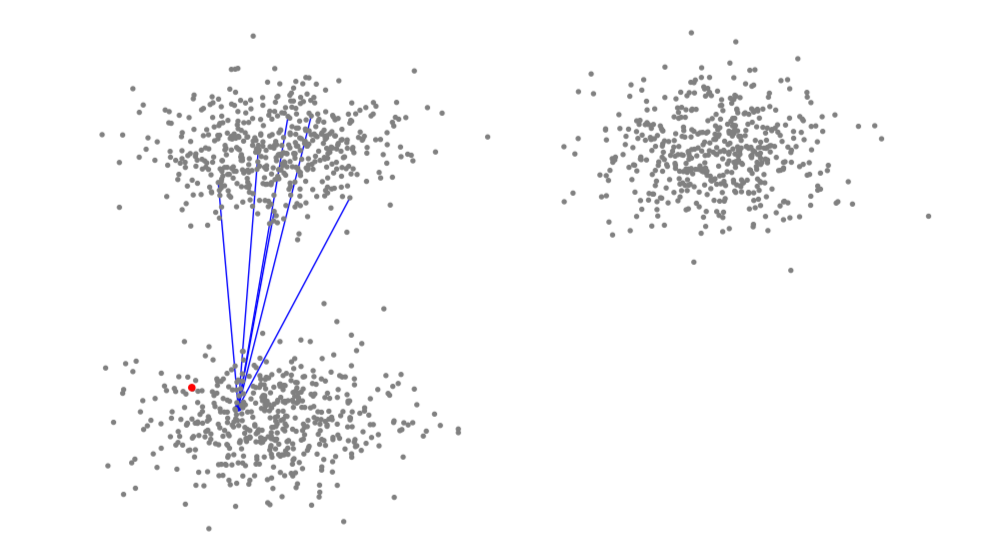}
    \caption{100\% of random edges}
\end{subfigure}
\bigskip
\caption{A 3-layer \textbf{Nonlinear} SNN with orthogonal restriction maps is tested in the setting in which random inter-class edges are progressively added to the graph, while the same amount of original intra-class edges is removed. One node, marked in \textbf{\textcolor{red}{red}}, is randomly sampled from the set of nodes that are consistently misclassified across all percentages of inter-class edges added to the graph. Another node, marked in \textbf{\textcolor{blue}{blue}}, is sampled from the nodes that are misclassified when no random edges are present but correctly classified in all subsequent cases within the dataset sequence. Most notably, the incident edges of these selected nodes are color-coded to match their respective node color.}
\label{fig:1layer_nonlinear_3}
\end{figure}

\newpage

\section{Real-World Experiments}\label{sec:real_world}

We evaluate 6 different variants of the model proposed in this project on several homophilic and heterophilic real-world datasets \cite{pei2020geom,namata2012query,rozemberczki2021multi,sen2008collective,tang2009social}, comparing their
performance against several other models from the graph representation learning literature, and we report the results in Table \ref{tab:main_results}. 
The real-world datasets on which the models were tested are characterized by an edge homophily coefficient $h$ ranging from $h = 0.11$ (very heterophilic) to $h=0.81$ (very homophilic). The results are collected over 10
fixed splits by performing 10-fold Cross Validation, where 48\%, 32\%, and 20\% of nodes per class are used for training, validation and test, respectively. The results shown in \ref{tab:main_results} are selected by taking the test accuracy corresponding to the highest validation accuracy.

Similarly to NSD \cite{bodnar2022neural}, our model comes with a few variations depending on the type of restriction
maps that it learns. The only difference is that we just consider diagonal and orthogonal matrices for our case, and not general-structure matrices. The reason is that despite their flexibility, general matrices are more challenging to normalize, they make the model more difficult to train, and the higher amount of free parameters that they bring into play leads to the risk of overfitting, as it was pointed out by Bodnar et al. in their study \cite{bodnar2022neural}. The normalization issue in particular poses a relevant problem in our framework, as discussed in the previous sections. Furthermore, the experimental results in NSD and its subsequent variants \cite{bodnar2022neural, barbero2022sheaf, suk2022surfing} show that the general one also constitutes the worst performing type of matrix in terms of accuracy.

The benchmark models to which the performance is compared are exactly the same ones considered in the analysis carried out by Bodnar et al. for the NSD \cite{bodnar2022neural} method. Some of them are standard benchmarks widely used for validation of new architectures for most GNN tasks, and they are described and analyzed in Section \ref{sec:sec_gnn}.
Additionally, the best results obtained by NSD \cite{bodnar2022neural} are reported, as well as the ones by NSP \cite{suk2022surfing} and SheafAN \cite{barbero2022sheaf}, two recently proposed variants of the original sheaf diffusion framework that are introduced in Section \ref{sec:section_snn}.

For the experiments on the  real-world datasets we considered the following versions of our model:

\begin{itemize}
    \item \textbf{BC-s-Diag-NLSD}: Bounded Confidence model, with a single threshold value learnt for all pairs of neighboring nodes in the graph, diagonal restriction maps and normalization through diagonal matrix;
    \item \textbf{BC-s-O(d)-NLSD}: Bounded Confidence model, with a single threshold value learnt for all pairs of neighboring nodes in the graph, orthogonal   restriction maps and normalization through diagonal matrix;
    \item \textbf{BC-m-Diag-NLSD}: Bounded Confidence model, with multiple threshold values for different couples of neighboring nodes obtained as outputs of a MLP, diagonal restriction maps and normalization through diagonal matrix;
    \item \textbf{BC-m-O(d)-NLSD}: Bounded Confidence model, with multiple threshold values for different couples of neighboring nodes obtained as outputs of a MLP, orthogonal restriction maps and normalization through diagonal matrix;
    \item \textbf{MLP-Diag-NLSD}: nonlinearity fully defined by a 2-layer MLP, diagonal restriction maps and normalization through diagonal matrix;
    \item \textbf{MLP-O(d)-NLSD}: nonlinearity fully defined by a 2-layer MLP, orthogonal restriction maps and normalization through diagonal matrix.
    
\end{itemize}

The experiments on the real-world benchmark datasets show that almost all versions of the proposed model achieve comparable results with respect to NSD \cite{bodnar2022neural} and its variants SheafAN\cite{barbero2022sheaf} and NSP \cite{suk2022surfing}. This implies that our architectures maintain the original desirable properties brought in by sheaf diffusion, along with providing additional ones that can be useful in specific scenarios, as displayed in Section \ref{sec:synthetic}. We
demonstrate that also our framework can achieve state-of-the-art results in heterophilic settings, and the variants that exhibit the best performance in accuracy are the Bounded Confidence ones, in particular when orthogonal restriction maps are adopted.

Interestingly, the results suggest that the inclusion of a nonlinearity in the definition of the sheaf Laplacian does not have a substantial negative effect on the performance. While this outcome may be expected when adopting an MLP for nonlinearity, at it allows for greater flexibility, it is instead quite surprising for the implementation of bounded confidence. The reason is that the learned threshold values are generally very small in absolute value, always much less than $1$. This suggests that either the outputs of the coboundary operator are consistently very low or that, when they are extremely high, they can be safely disregarded.

\begin{table*}[t]
    \centering
    \caption{Results on node classification datasets sorted by their homophily level. Top three models are coloured by \textbf{\textcolor{red}{First}}, \textbf{\textcolor{blue}{Second}}, \textbf{\textcolor{violet}{Third}}. Our models are marked {\bf NLSD} (Non Linear Sheaf Diffusion), and are split in two categories according to the nonlinearity implementation: {\bf BC-s/m} (Bounded Confidence with single/multiple thresholds) and {\bf MLP} (Multi-Layer Perceptron).}

    \resizebox{1.0\textwidth}{!}{%
    \begin{tabular}{l ccccccccc}
    \toprule 
         &
         \textbf{Texas} &  
         \textbf{Wisconsin} & 
         \textbf{Film} &
         \textbf{Squirrel} &
         \textbf{Chameleon} &
         \textbf{Cornell} &
         \textbf{Citeseer} & 
         \textbf{Pubmed} & 
         \textbf{Cora} \\
         
         Hom level &
         \textbf{0.11} &
         \textbf{0.21} & 
         \textbf{0.22} & 
         \textbf{0.22} & 
         \textbf{0.23} &
         \textbf{0.30} &
         \textbf{0.74} &
         \textbf{0.80} &
         \textbf{0.81} \\ 
         
         \#Nodes &
         183 &
         251 & 
         7,600 &
         5,201 & 
         2,277 &
         183 &
         3,327 &
         18,717 &
         2,708 \\
         
         \#Edges &
         295 &
         466 & 
         26,752 & 
         198,493 & 
         31,421 &
         280 &
         4,676 &
         44,327 &
         5,278 \\
         
         \#Classes &
         5 &
         5 & 
         5 &
         5 &
         5 & 
         5 &
         7 &
         3 &
         6 \\ \midrule

         \textbf{BC-s-Diag-NLSD} &
         $86.49 {\scriptstyle \pm 4.19}$ & 
         $88.24 {\scriptstyle \pm 3.40}$ & 
         $37.32 {\scriptstyle \pm 1.02}$ & 
         $49.26
 {\scriptstyle \pm 2.04}$ & 
         $64.69 {\scriptstyle \pm 1.34}$ & 
         ${86.49} {\scriptstyle \pm 6.40}$ & 
         $75.45 {\scriptstyle \pm 1.45}$ & 
         $89.28 {\scriptstyle \pm 0.36}$ & 
         $87.00 {\scriptstyle \pm 1.23}$ \\

         \textbf{BC-s-O(d)-NLSD} &
         $\second{87.30} {\scriptstyle \pm 4.99}$ & 
         $\first{89.41} {\scriptstyle \pm 2.66}$ & 
         $37.32 {\scriptstyle \pm 1.05}$ & 
         $52.33
 {\scriptstyle \pm 2.05}$ & 
         $65.42 {\scriptstyle \pm 1.30}$ & 
         $\third{86.76} {\scriptstyle \pm 5.98}$ & 
         $75.95 {\scriptstyle \pm 1.61}$ & 
         $89.39 {\scriptstyle \pm 0.43}$ & 
         $86.52 {\scriptstyle \pm 1.37}$ \\ 
         
         \textbf{BC-m-Diag-NLSD} &
         $86.22 {\scriptstyle \pm 3.91}$ & 
         $88.63 {\scriptstyle \pm 3.26}$ & 
         $37.19 {\scriptstyle \pm 0.87}$ & 
         $52.84 {\scriptstyle \pm 1.61}$ & 
         $66.54 {\scriptstyle \pm 1.05}$ & 
         ${86.49} {\scriptstyle \pm 5.54}$ & 
         $75.72 {\scriptstyle \pm 1.11}$ & 
         $89.24 {\scriptstyle \pm 0.33}$ & 
         $86.80 {\scriptstyle \pm 1.33}$ \\ 
         
         \textbf{BC-m-O(d)-NLSD} &
         $\first{87.57} {\scriptstyle \pm 5.43}$ & 
         $\second{89.22} {\scriptstyle \pm 3.54}$ & 
         $37.43 {\scriptstyle \pm 1.19}$ & 
         $\third{54.62} {\scriptstyle \pm 2.82}$ & 
         $66.27 {\scriptstyle \pm 2.27}$ & 
         $\first{87.30} {\scriptstyle \pm 6.74}$ & 
         $76.03 {\scriptstyle \pm 1.56}$ & 
         $89.34 {\scriptstyle \pm 0.38}$ & 
         $86.84 {\scriptstyle \pm 0.88}$ \\  \midrule

         \textbf{MLP-Diag-NLSD} &
         ${86.22} {\scriptstyle \pm 3.91}$ & 
         $\third{89.02} {\scriptstyle \pm 3.19}$ & 
         $\third{37.45} {\scriptstyle \pm 0.94}$ & 
         $47.63
 {\scriptstyle \pm 1.51}$ & 
         $62.57 {\scriptstyle \pm 2.31}$ & 
         $\third{86.76} {\scriptstyle \pm 4.60}$ & 
         $ 75.76 {\scriptstyle \pm 1.62}$ & 
         $89.52 {\scriptstyle \pm 0.32}$ & 
         $86.38 {\scriptstyle \pm 1.20}$ \\

         \textbf{MLP-O(d)-NLSD} &
         ${86.22} {\scriptstyle \pm 4.90}$ & 
         $\third{89.02} {\scriptstyle \pm 3.84}$ & 
         $37.22 {\scriptstyle \pm 1.15}$ & 
         $51.96
 {\scriptstyle \pm 2.65}$ & 
         $65.37 {\scriptstyle \pm 2.73}$ & 
         $\second{87.03} { \scriptstyle \pm 4.49}$ & 
         $76.11 {\scriptstyle \pm 1.81}$ & 
         $\third{89.60} {\scriptstyle \pm 0.29}$ & 
         $86.20 {\scriptstyle \pm 1.24}$ \\ \midrule

         NSP (best) &
         $\third{87.03} {\scriptstyle \pm 5.51}$ &
         $\third{89.02} {\scriptstyle \pm 3.84}$ &
         ${37.12} {\scriptstyle \pm 1.31}$ & 
         ${50.11} {\scriptstyle \pm 2.03}$ & 
         ${62.85} {\scriptstyle \pm 1.98}$ &
         ${76.49} {\scriptstyle \pm 5.28}$ & 
         ${76.85} {\scriptstyle \pm 1.48}$ &
         $89.42 {\scriptstyle \pm 0.33}$ &
         $87.38 {\scriptstyle \pm 1.14}$\\

         SheafAN (best) &
         $- {\scriptstyle}$ &
         $- {\scriptstyle }$ &
         $- {\scriptstyle}$ & 
         $- {\scriptstyle }$ & 
         $\second{70.77} {\scriptstyle \pm 1.42}$ &
         $85.68 {\scriptstyle \pm 4.53}$ & 
         $\first{78.02} {\scriptstyle \pm 1.15}$ &
         $- {\scriptstyle }$ &
         $\first{88.37} {\scriptstyle \pm 1.25}$ \\ 
         
         NSD (best) &
         ${85.95} {\scriptstyle \pm 5.51}$ &
         $\first{89.41} {\scriptstyle \pm 4.74}$ &
         $\first{37.81} {\scriptstyle \pm 1.15}$ & 
         $\first{56.34} {\scriptstyle \pm 1.32}$ & 
         $\third{68.68} {\scriptstyle \pm 1.73}$ &
         ${86.49} {\scriptstyle \pm 7.35}$ & 
         $\third{77.14} {\scriptstyle \pm 1.85}$ &
         ${89.49} {\scriptstyle \pm 0.40}$ &
         $87.30 {\scriptstyle \pm 1.15}$\\

         GGCN &
         ${84.86} {\scriptstyle \pm 4.55}$ &
         $86.86 {\scriptstyle \pm 3.29}$ &
         $\second{37.54} {\scriptstyle \pm 1.56}$ &
         $\second{55.17} {\scriptstyle \pm 1.58}$ & 
         $\first{71.14} {\scriptstyle \pm 1.84}$ &
         ${85.68} {\scriptstyle \pm 6.63}$ &
         $\third{77.14} {\scriptstyle \pm 1.45}$ &
         $89.15 {\scriptstyle \pm 0.37}$ &
         $\second{87.95} {\scriptstyle \pm 1.05}$ \\
         
         H2GCN &
         ${84.86} {\scriptstyle \pm 7.23}$ &
         ${87.65} {\scriptstyle \pm 4.98}$ &
         $35.70 {\scriptstyle \pm 1.00}$ &
         $36.48 {\scriptstyle \pm 1.86}$ & 
         $60.11 {\scriptstyle \pm 2.15}$ &
         $82.70 {\scriptstyle \pm 5.28}$ &
         $77.11 {\scriptstyle \pm 1.57}$ &
         ${89.49} {\scriptstyle \pm 0.38}$ &
         $\third{87.87} {\scriptstyle \pm 1.20}$ \\

         GPRGNN &
         $78.38 {\scriptstyle \pm 4.36}$ &
         $82.94 {\scriptstyle \pm 4.21}$ &
         $34.63 {\scriptstyle \pm 1.22}$ & 
         $31.61 {\scriptstyle \pm 1.24}$ &
         $46.58 {\scriptstyle \pm 1.71}$ &
         $80.27 {\scriptstyle \pm 8.11}$ &
         $77.13 {\scriptstyle \pm 1.67}$ &
         $87.54 {\scriptstyle \pm 0.38}$ &
         $\second{87.95} {\scriptstyle \pm 1.18}$ \\ 
         
         FAGCN &
         $82.43 {\scriptstyle \pm 6.89}$ &
         $82.94 {\scriptstyle \pm 7.95}$ &
         $34.87 {\scriptstyle \pm 1.25}$ &
         $42.59 {\scriptstyle \pm 0.79}$ &
         $55.22 {\scriptstyle \pm 3.19}$ & 
         $79.19 {\scriptstyle \pm 9.79}$ &
         N/A & 
         N/A & 
         N/A \\
         
         MixHop &
         $77.84 {\scriptstyle \pm 7.73}$ &
         $75.88 {\scriptstyle \pm 4.90}$ &
         $32.22 {\scriptstyle \pm 2.34}$ & 
         $43.80 {\scriptstyle \pm 1.48}$ &
         $60.50 {\scriptstyle \pm 2.53}$ &
         $73.51 {\scriptstyle \pm 6.34}$ &
         $76.26 {\scriptstyle \pm 1.33}$ &
         $85.31 {\scriptstyle \pm 0.61}$ &
         $87.61 {\scriptstyle \pm 0.85}$ \\

         GCNII &
         $77.57 {\scriptstyle \pm 3.83}$ &
         $80.39 {\scriptstyle \pm 3.40}$  &
         ${37.44} {\scriptstyle \pm 1.30}$ &
         $38.47 {\scriptstyle \pm 1.58}$ &
         $63.86 {\scriptstyle \pm 3.04}$ &
         $77.86 {\scriptstyle \pm 3.79}$ &
         $\second{77.33} {\scriptstyle \pm 1.48}$ &
         $\first{90.15} {\scriptstyle \pm 0.43}$ &
         $\first{88.37} {\scriptstyle \pm 1.25}$ \\
         
         Geom-GCN &
         $66.76 {\scriptstyle \pm 2.72}$ &
         $64.51 {\scriptstyle \pm 3.66}$ &
         $31.59 {\scriptstyle \pm 1.15}$ & 
         $38.15 {\scriptstyle \pm 0.92}$ & 
         $60.00 {\scriptstyle \pm 2.81}$ &
         $60.54 {\scriptstyle \pm 3.67}$ &
         $\first{78.02} {\scriptstyle \pm 1.15}$ &
         $\second{89.95} {\scriptstyle \pm 0.47}$ &  
         $85.35 {\scriptstyle \pm 1.57}$ \\ 
         
         PairNorm &
         $60.27 {\scriptstyle \pm 4.34}$ &
         $48.43 {\scriptstyle \pm 6.14}$ &
         $27.40 {\scriptstyle \pm 1.24}$ & 
         $50.44 {\scriptstyle \pm 2.04}$ & 
         $62.74 {\scriptstyle \pm 2.82}$ &
         $58.92 {\scriptstyle \pm 3.15}$ &
         $73.59 {\scriptstyle \pm 1.47}$ &
         $87.53 {\scriptstyle \pm 0.44}$ &
         $85.79 {\scriptstyle \pm 1.01}$ \\ 
         
         GraphSAGE &
         $82.43 {\scriptstyle \pm 6.14}$ &
         $81.18 {\scriptstyle \pm 5.56}$ &
         $34.23 {\scriptstyle \pm 0.99}$ & 
         $41.61 {\scriptstyle \pm 0.74}$ & 
         $58.73 {\scriptstyle \pm 1.68}$ &
         $75.95 {\scriptstyle \pm 5.01}$ &
         $76.04 {\scriptstyle \pm 1.30}$ &
         $88.45 {\scriptstyle \pm 0.50}$ &
         $86.90 {\scriptstyle \pm 1.04}$\\
         
         GCN &
         $55.14 {\scriptstyle \pm 5.16}$ &
         $51.76 {\scriptstyle \pm 3.06}$ &
         $27.32 {\scriptstyle \pm 1.10}$ & 
         $53.43 {\scriptstyle \pm 2.01}$ &
         $64.82 {\scriptstyle \pm 2.24}$ &
         $60.54 {\scriptstyle \pm 5.30}$ &
         $76.50 {\scriptstyle \pm 1.36}$ &
         $88.42 {\scriptstyle \pm 0.50}$ &
         $86.98 {\scriptstyle \pm 1.27}$ \\ 
         
         GAT &
         $52.16 {\scriptstyle \pm 6.63}$ &
         $49.41 {\scriptstyle \pm 4.09}$ &
         $27.44 {\scriptstyle \pm 0.89}$ & 
         $40.72 {\scriptstyle \pm 1.55}$ &
         $60.26 {\scriptstyle \pm 2.50}$ &
         $61.89 {\scriptstyle \pm 5.05}$ &
         $76.55 {\scriptstyle \pm 1.23}$ &
         $87.30 {\scriptstyle \pm 1.10}$ & 
         $86.33 {\scriptstyle \pm 0.48}$ \\ 
         
         MLP &
         $80.81 {\scriptstyle \pm 4.75}$ &
         $85.29 {\scriptstyle \pm 3.31}$ &
         $36.53 {\scriptstyle \pm 0.70}$ & 
         $28.77 {\scriptstyle \pm 1.56}$ & 
         $46.21 {\scriptstyle \pm 2.99}$ &
         $81.89 {\scriptstyle \pm 6.40}$ &
         $74.02 {\scriptstyle \pm 1.90}$ &
         $87.16 {\scriptstyle \pm 0.37}$ &
         $75.69 {\scriptstyle \pm 2.00}$ \\ 

         \bottomrule
         
    \end{tabular}
    }
    \vspace{-12pt}
    \label{tab:main_results}
\end{table*}

The initial version of our model that made use of the $\alpha$-normalization schema was also tested on a subset of the real-world benchmark datasets from Table \ref{tab:main_results}, and the results are in Appendix \ref{appendix_exp}.

\clearemptydoublepage

\newpage

\chapter{Conclusion}
\label{cha:conclusion}

This project specifically focused on investigating the benefits of introducing a nonlinear Laplacian in SNNs for graph-related tasks. Our curiosity about the impact of a nonlinearity in the Laplacian on diffusion dynamics, signal propagation, and network performance in discrete-time settings motivated such exploration.
Through experimental analysis, we aimed to validate the practical effectiveness of different versions of the model using real-world and synthetic datasets. 

Additionally, in this thesis, we conducted a comprehensive analysis of the implementation choices, thoroughly examining and justifying the various modifications made before defining a final version of the proposed model. We ensured that each decision was based on sound reasoning and aimed to enhance its performance and applicability.

The experimental results obtained from real-world settings were satisfactory, demonstrating the practical effectiveness of the proposed model. Furthermore, it is worth noting that the synthetic setting yielded particularly intriguing outcomes and this motivated further investigation into the underlying factors that generated this phenomenon. 
We leave future work open for exploring and delving deeper into this interesting behavior, which could lead to valuable insights and potentially uncover practical applications in real-world scenarios where the model can be effectively employed.

      \clearpage

    \endgroup

    %
    \addcontentsline{toc}{chapter}{Bibliography}
    \bibliographystyle{plain}
    \bibliography{biblio}

    \titleformat{\chapter}
        {\normalfont\Huge\bfseries}{Appendix \thechapter}{1em}{}
    \clearpage
    
    \appendix
    
\newpage

\chapter{Experiments}\label{appendix_exp}

\section{Synthetic Experiments: Erdős-Rényi Connections}

An alternative approach with respect to the one in Section \ref{sec:synthetic} that was explored for constructing the initial connections within the three communities in the synthetic datasets is through the Erdős-Rényi model for random graphs generation \cite{erdHos1960evolution}. With this method, each pair of nodes is independently connected with a predetermined probability, resulting in a random graph with varying connectivity patterns. We employed this model to initialize the graphs corresponding to each community, connecting pairs of the 500 nodes with a fixed probability of 0.005.

In contrast to constructing the initial graph using the \textit{k}-NN algorithm (Section \ref{sec:synthetic}), the Erdős-Rényi model adds edges randomly. As a result, these edges can connect nodes within the same community that have dissimilar node features. Such edges facilitate information flow within the community and contribute to improve node classification, as observed previously.

The dataset's sequence of graphs is constructed by retaining the initial set of edges and randomly adding noisy edges between the different communities. The results of the experiments are depicted in Figure \ref{fig:erdos_standard}. As anticipated, the performance of all models, including ours, tends to decrease with respect to the initial level, because both the initial edges and the newly added ones are random and may connect nodes with dissimilar node features. 

\begin{figure}[htp]
\centering
\centerline{\includegraphics[scale=0.9]{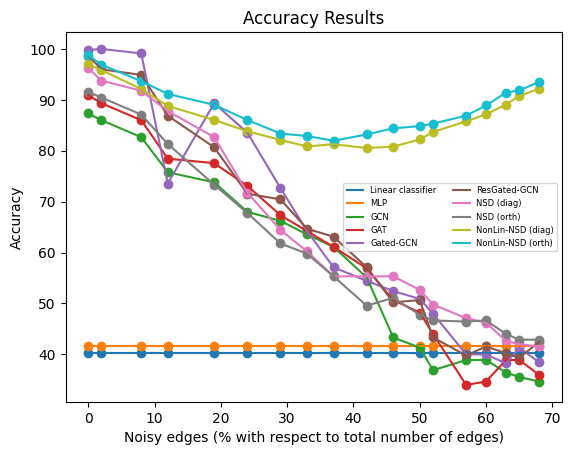}}
 \caption{Plots of the accuracy values on the sequence of datasets obtained from the three community graphs starting with Erdős-Rényi initial connections.}
 \label{fig:erdos_standard}
\end{figure}

While it was relatively easy for our models to differentiate between the two types of edges in the \textit{k}-NN setting, they face some difficulties in this case due to the inter-class edges not exclusively connecting nodes with similar features. However, the interesting thing is that when the percentage of noisy edges in the overall set is above a certain threshold, our models learn to distinguish them from the original edges within the communities, being able to  recognize and exploit them to effectively discriminate between the different communities at inference time.

We also conducted an additional test to examine the relevance of the relative distance in the mean values of the bivariate distributions used for sampling features. We wanted to understand how the choice of these parameters affects the performance of our model compared to others. To do this, we replicated the previous experiment but modified the datasets by redefining the means associated with the three communities as (0,0), (0.1, 0.1), and (0, 0.1). As a consequence, the three resulting distributions became almost completely overlapped. This additional complexity made the task even more challenging, as reflected in the results shown in Figure \ref{fig:erdos_close}.

Although this test only considered a noisy edge percentage of at most 35\% with respect to the total number of edges, it still showed that our model outperformed the other benchmarks as the percentage increased, even if the improvement was less pronounced in this case. This behavior mirrors the pattern observed in Figure \ref{fig:erdos_standard}, but the increased difficulty of the task makes the results more unstable.

\begin{figure}[htp]
\centering
\centerline{\includegraphics[scale=0.9]{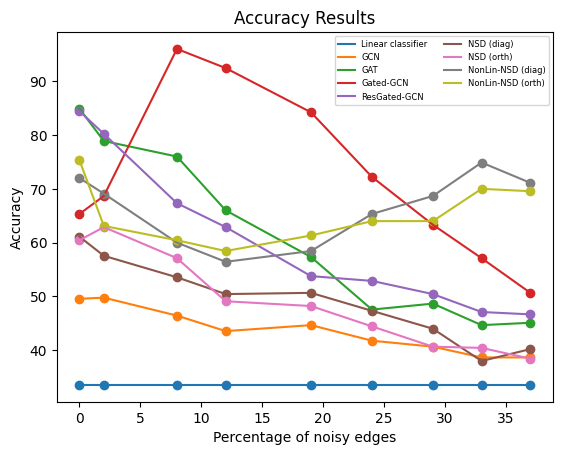}}
 \caption{Plots of the accuracy values on the sequence of datasets obtained from the three community graphs starting with Erdős-Rényi initial connections. In this case the means for the three distributions are set to (0,0), (0.1, 0.1) and (0, 0.1).}
 \label{fig:erdos_close}
\end{figure}

\paragraph{Results} The experiments conducted with the Erdős-Rényi initial connections reveal that our model surpasses the other benchmarks, even when both the original inter-class edges and newly added intra-class edges are defined using the same random procedure. This similarity in their placement does not hinder our model's ability to learn and utilize the "noisy" edges. When there are enough noisy edges to be recognized as a specific type that can be exploited, our model effectively identifies and takes advantage of them, distinguishing itself from the other benchmark models.

\newpage

\section{Real-World Experiments with $\alpha$-Normalization}


This section reports the results obtained by building and testing the initial version of our model, that is the Bounded Confidence dynamics performing $\alpha$-normalization.

The experiments were carried out in the same fashion and on the same datasets described in Section \ref{sec:real_world}, and considering the same benchmark models for comparison. 
The results are reported in Table \ref{tab:alpha}, and they are compared to the accuracy values obtained for the other versions of our model described in the Section devoted to real-world experiments (\ref{sec:real_world}). 
The model variants introduced in this occasion are:

\begin{itemize}
    \item \textbf{BC-s-Diag-$\alpha$NLSD}: Bounded Confidence model, with a single threshold value learnt for all pairs of neighboring nodes in the graph, with diagonal restriction maps, and \textbf{normalization through learned $\alpha$};
    \item \textbf{BC-s-O(d)-$\alpha$NLSD}: Bounded Confidence model, with a single threshold value learnt for all pairs of neighboring nodes in the graph, with orthogonal restriction maps, and \textbf{normalization through learned $\alpha$};
    \item \textbf{BC-m-Diag-$\alpha$NLSD}: Bounded Confidence model, with multiple threshold values for different couples of neighboring nodes defined as output of a MLP, with diagonal restriction maps, and \textbf{normalization through learned $\alpha$};
    \item \textbf{BC-m-O(d)-$\alpha$NLSD}: Bounded Confidence model, with multiple threshold values for different couples of neighboring nodes defined as output of a MLP, with orthogonal restriction maps, and \textbf{normalization through learned $\alpha$}.
    
\end{itemize}

\begin{table*}[th]
    \centering
    \caption{Results on node classification datasets sorted by their homophily level. Top three models are coloured by \textbf{\textcolor{red}{First}}, \textbf{\textcolor{blue}{Second}}, \textbf{\textcolor{violet}{Third}}. Our models are marked {\bf $\alpha$NLSD} (Non Linear Sheaf Diffusion with $\alpha$-normalization), and are split in two categories according to the nonlinearity implementation: {\bf BC-s/m} (Bounded Confidence with single/multiple thresholds) and {\bf MLP} (Multi-Layer Perceptron).}
    \resizebox{1.0\textwidth}{!}{%
    \begin{tabular}{l ccHccccHc}
    \toprule 
         &
         \textbf{Texas} &  
         \textbf{Wisconsin} & 
         \textbf{Film} &
         \textbf{Squirrel} &
         \textbf{Chameleon} &
         \textbf{Cornell} &
         \textbf{Citeseer} & 
         \textbf{Pubmed} & 
         \textbf{Cora} \\
         
         Hom level &
         \textbf{0.11} &
         \textbf{0.21} & 
         \textbf{0.22} & 
         \textbf{0.22} & 
         \textbf{0.23} &
         \textbf{0.30} &
         \textbf{0.74} &
         \textbf{0.80} &
         \textbf{0.81} \\ 
         
         \#Nodes &
         183 &
         251 & 
         7,600 &
         5,201 & 
         2,277 &
         183 &
         3,327 &
         18,717 &
         2,708 \\
         
         \#Edges &
         295 &
         466 & 
         26,752 & 
         198,493 & 
         31,421 &
         280 &
         4,676 &
         44,327 &
         5,278 \\
         
         \#Classes &
         5 &
         5 & 
         5 &
         5 &
         5 & 
         5 &
         7 &
         3 &
         6 \\ \midrule

         \textbf{BC-s-Diag-$\alpha$NLSD} &
         $\first{87.57} {\scriptstyle \pm 4.05}$ &
         $\second{89.22} {\scriptstyle \pm 2.01}$ &
         $- $ & 
         $36.20 {\scriptstyle \pm 3.00}$ & 
         $56.21 {\scriptstyle \pm 2.81}$ &
         $\third{87.01} {\scriptstyle \pm 6.02}$ & 
         $74.84 {\scriptstyle \pm 1.44}$ &
         $- $ &
         $86.04 {\scriptstyle \pm 1.34}$\\

         \textbf{BC-s-O(d)-$\alpha$NLSD} &
         $86.22 {\scriptstyle \pm 5.85}$ &
         $88.43 {\scriptstyle \pm 3.10}$ &
         $- $ & 
         $35.47 {\scriptstyle \pm 2.13}$ & 
         $55.33 {\scriptstyle \pm 3.01}$ &
         $- $ & 
         $75.05 {\scriptstyle \pm 1.68}$ &
         $- $ &
         $84.99 {\scriptstyle \pm 1.66}$\\
         
         \textbf{BC-m-Diag-$\alpha$NLSD} &
         $86.49 {\scriptstyle \pm 3.63}$ &
         $88.63 {\scriptstyle \pm 3.14}$ &
         $- $ & 
         $- $ & 
         $51.36 {\scriptstyle \pm 3.17}$ &
         $77.30 {\scriptstyle \pm 6.30}$ & 
         $74.58 {\scriptstyle \pm 1.46}$ &
         $- $ &
         $84.39 {\scriptstyle \pm 1.66}$\\
         
         \textbf{BC-m-O(d)-$\alpha$NLSD} &
         $86.22 {\scriptstyle \pm 4.90}$ &
         $88.24 {\scriptstyle \pm 3.40}$ &
         $- $ & 
         $- $ & 
         $48.77 {\scriptstyle \pm 2.52}$ & 
         $- $ &
         $74.00 {\scriptstyle \pm 1.73}$ &
         $- $ &
         $84.00 {\scriptstyle \pm 1.71}$\\ \midrule

         \textbf{BC-s-Diag-NLSD} &
         $86.49 {\scriptstyle \pm 4.19}$ & 
         $88.24 {\scriptstyle \pm 3.40}$ & 
         $37.32 {\scriptstyle \pm 1.02}$ & 
         $49.26
 {\scriptstyle \pm 2.04}$ & 
         $64.69 {\scriptstyle \pm 1.34}$ & 
         ${86.49} {\scriptstyle \pm 6.40}$ & 
         $75.45 {\scriptstyle \pm 1.45}$ & 
         $(t)89.28 {\scriptstyle \pm 0.36}$ & 
         $87.00 {\scriptstyle \pm 1.23}$ \\

         \textbf{BC-s-O(d)-NLSD} &
         $\second{87.30} {\scriptstyle \pm 4.99}$ & 
         $\first{89.41} {\scriptstyle \pm 2.66}$ & 
         $37.32 {\scriptstyle \pm 1.05}$ & 
         $52.33
 {\scriptstyle \pm 2.05}$ & 
         $65.42 {\scriptstyle \pm 1.30}$ & 
         ${86.76} {\scriptstyle \pm 5.98}$ & 
         $75.95 {\scriptstyle \pm 1.61}$ & 
         $(t)89.39 {\scriptstyle \pm 0.43}$ & 
         $86.52 {\scriptstyle \pm 1.37}$ \\ 
         
         \textbf{BC-m-Diag-NLSD} &
         $86.22 {\scriptstyle \pm 3.91}$ & 
         $88.63 {\scriptstyle \pm 3.26}$ & 
         $37.19 {\scriptstyle \pm 0.87}$ & 
         $52.84 {\scriptstyle \pm 1.61}$ & 
         $66.54 {\scriptstyle \pm 1.05}$ & 
         ${86.49} {\scriptstyle \pm 5.54}$ & 
         $75.72 {\scriptstyle \pm 1.11}$ & 
         $(t)89.24 {\scriptstyle \pm 0.33}$ & 
         $86.80 {\scriptstyle \pm 1.33}$ \\ 
         
         \textbf{BC-m-O(d)-NLSD} &
         $\first{87.57} {\scriptstyle \pm 5.43}$ & 
         $\second{89.22} {\scriptstyle \pm 3.54}$ & 
         $37.43 {\scriptstyle \pm 1.19}$ & 
         $\third{54.62} {\scriptstyle \pm 2.82}$ & 
         $66.27 {\scriptstyle \pm 2.27}$ & 
         $\first{87.30} {\scriptstyle \pm 6.74}$ & 
         $76.03 {\scriptstyle \pm 1.56}$ & 
         $(t)89.34 {\scriptstyle \pm 0.38}$ & 
         $86.84 {\scriptstyle \pm 0.88}$ \\  \midrule

         \textbf{MLP-Diag-NLSD} &
         ${86.22} {\scriptstyle \pm 3.91}$ & 
         $\third{89.02} {\scriptstyle \pm 3.19}$ & 
         $\third{37.45} {\scriptstyle \pm 0.94}$ & 
         $(t)47.63
 {\scriptstyle \pm 1.51}$ & 
         $62.57 {\scriptstyle \pm 2.31}$ & 
         ${86.76} {\scriptstyle \pm 4.60}$ & 
         $ 75.76 {\scriptstyle \pm 1.62}$ & 
         $89.52 {\scriptstyle \pm 0.32}$ & 
         $86.38 {\scriptstyle \pm 1.20}$ \\

         \textbf{MLP-O(d)-NLSD} &
         ${86.22} {\scriptstyle \pm 4.90}$ & 
         $\third{89.02} {\scriptstyle \pm 3.84}$ & 
         $37.22 {\scriptstyle \pm 1.15}$ & 
         $ (t)51.96
 {\scriptstyle \pm 2.65}$ & 
         $65.37 {\scriptstyle \pm 2.73}$ & 
         $\second{87.03} { \scriptstyle \pm 4.49}$ & 
         $76.11 {\scriptstyle \pm 1.81}$ & 
         $\third{89.60} {\scriptstyle \pm 0.29}$ & 
         $86.20 {\scriptstyle \pm 1.24}$ \\ \midrule

         NSP (best) &
         $\third{87.03} {\scriptstyle \pm 5.51}$ &
         $\third{89.02} {\scriptstyle \pm 3.84}$ &
         ${37.12} {\scriptstyle \pm 1.31}$ & 
         ${50.11} {\scriptstyle \pm 2.03}$ & 
         ${62.85} {\scriptstyle \pm 1.98}$ &
         ${76.49} {\scriptstyle \pm 5.28}$ & 
         ${76.85} {\scriptstyle \pm 1.48}$ &
         $89.42 {\scriptstyle \pm 0.33}$ &
         $87.38 {\scriptstyle \pm 1.14}$\\

         SheafAN (best) &
         $- {\scriptstyle}$ &
         $- {\scriptstyle }$ &
         $- {\scriptstyle}$ & 
         $- {\scriptstyle }$ & 
         $\second{70.77} {\scriptstyle \pm 1.42}$ &
         $85.68 {\scriptstyle \pm 4.53}$ & 
         $\first{78.02} {\scriptstyle \pm 1.15}$ &
         $- {\scriptstyle }$ &
         $\first{88.37} {\scriptstyle \pm 1.25}$ \\ 
         
         NSD (best) &
         ${85.95} {\scriptstyle \pm 5.51}$ &
         $\first{89.41} {\scriptstyle \pm 4.74}$ &
         $\first{37.81} {\scriptstyle \pm 1.15}$ & 
         $\first{56.34} {\scriptstyle \pm 1.32}$ & 
         $\third{68.68} {\scriptstyle \pm 1.73}$ &
         ${86.49} {\scriptstyle \pm 7.35}$ & 
         $\third{77.14} {\scriptstyle \pm 1.85}$ &
         ${89.49} {\scriptstyle \pm 0.40}$ &
         $87.30 {\scriptstyle \pm 1.15}$\\

         GGCN &
         ${84.86} {\scriptstyle \pm 4.55}$ &
         $86.86 {\scriptstyle \pm 3.29}$ &
         $\second{37.54} {\scriptstyle \pm 1.56}$ &
         $\second{55.17} {\scriptstyle \pm 1.58}$ & 
         $\first{71.14} {\scriptstyle \pm 1.84}$ &
         ${85.68} {\scriptstyle \pm 6.63}$ &
         $\third{77.14} {\scriptstyle \pm 1.45}$ &
         $89.15 {\scriptstyle \pm 0.37}$ &
         $\second{87.95} {\scriptstyle \pm 1.05}$ \\
         
         H2GCN &
         ${84.86} {\scriptstyle \pm 7.23}$ &
         ${87.65} {\scriptstyle \pm 4.98}$ &
         $35.70 {\scriptstyle \pm 1.00}$ &
         $36.48 {\scriptstyle \pm 1.86}$ & 
         $60.11 {\scriptstyle \pm 2.15}$ &
         $82.70 {\scriptstyle \pm 5.28}$ &
         $77.11 {\scriptstyle \pm 1.57}$ &
         ${89.49} {\scriptstyle \pm 0.38}$ &
         $\third{87.87} {\scriptstyle \pm 1.20}$ \\

         GPRGNN &
         $78.38 {\scriptstyle \pm 4.36}$ &
         $82.94 {\scriptstyle \pm 4.21}$ &
         $34.63 {\scriptstyle \pm 1.22}$ & 
         $31.61 {\scriptstyle \pm 1.24}$ &
         $46.58 {\scriptstyle \pm 1.71}$ &
         $80.27 {\scriptstyle \pm 8.11}$ &
         $77.13 {\scriptstyle \pm 1.67}$ &
         $87.54 {\scriptstyle \pm 0.38}$ &
         $\second{87.95} {\scriptstyle \pm 1.18}$ \\ 
         
         FAGCN &
         $82.43 {\scriptstyle \pm 6.89}$ &
         $82.94 {\scriptstyle \pm 7.95}$ &
         $34.87 {\scriptstyle \pm 1.25}$ &
         $42.59 {\scriptstyle \pm 0.79}$ &
         $55.22 {\scriptstyle \pm 3.19}$ & 
         $79.19 {\scriptstyle \pm 9.79}$ &
         N/A & 
         N/A & 
         N/A \\
         
         MixHop &
         $77.84 {\scriptstyle \pm 7.73}$ &
         $75.88 {\scriptstyle \pm 4.90}$ &
         $32.22 {\scriptstyle \pm 2.34}$ & 
         $43.80 {\scriptstyle \pm 1.48}$ &
         $60.50 {\scriptstyle \pm 2.53}$ &
         $73.51 {\scriptstyle \pm 6.34}$ &
         $76.26 {\scriptstyle \pm 1.33}$ &
         $85.31 {\scriptstyle \pm 0.61}$ &
         $87.61 {\scriptstyle \pm 0.85}$ \\

         GCNII &
         $77.57 {\scriptstyle \pm 3.83}$ &
         $80.39 {\scriptstyle \pm 3.40}$  &
         ${37.44} {\scriptstyle \pm 1.30}$ &
         $38.47 {\scriptstyle \pm 1.58}$ &
         $63.86 {\scriptstyle \pm 3.04}$ &
         $77.86 {\scriptstyle \pm 3.79}$ &
         $\second{77.33} {\scriptstyle \pm 1.48}$ &
         $\first{90.15} {\scriptstyle \pm 0.43}$ &
         $\first{88.37} {\scriptstyle \pm 1.25}$ \\
         
         Geom-GCN &
         $66.76 {\scriptstyle \pm 2.72}$ &
         $64.51 {\scriptstyle \pm 3.66}$ &
         $31.59 {\scriptstyle \pm 1.15}$ & 
         $38.15 {\scriptstyle \pm 0.92}$ & 
         $60.00 {\scriptstyle \pm 2.81}$ &
         $60.54 {\scriptstyle \pm 3.67}$ &
         $\first{78.02} {\scriptstyle \pm 1.15}$ &
         $\second{89.95} {\scriptstyle \pm 0.47}$ &  
         $85.35 {\scriptstyle \pm 1.57}$ \\ 
         
         PairNorm &
         $60.27 {\scriptstyle \pm 4.34}$ &
         $48.43 {\scriptstyle \pm 6.14}$ &
         $27.40 {\scriptstyle \pm 1.24}$ & 
         $50.44 {\scriptstyle \pm 2.04}$ & 
         $62.74 {\scriptstyle \pm 2.82}$ &
         $58.92 {\scriptstyle \pm 3.15}$ &
         $73.59 {\scriptstyle \pm 1.47}$ &
         $87.53 {\scriptstyle \pm 0.44}$ &
         $85.79 {\scriptstyle \pm 1.01}$ \\ 
         
         GraphSAGE &
         $82.43 {\scriptstyle \pm 6.14}$ &
         $81.18 {\scriptstyle \pm 5.56}$ &
         $34.23 {\scriptstyle \pm 0.99}$ & 
         $41.61 {\scriptstyle \pm 0.74}$ & 
         $58.73 {\scriptstyle \pm 1.68}$ &
         $75.95 {\scriptstyle \pm 5.01}$ &
         $76.04 {\scriptstyle \pm 1.30}$ &
         $88.45 {\scriptstyle \pm 0.50}$ &
         $86.90 {\scriptstyle \pm 1.04}$\\
         
         GCN &
         $55.14 {\scriptstyle \pm 5.16}$ &
         $51.76 {\scriptstyle \pm 3.06}$ &
         $27.32 {\scriptstyle \pm 1.10}$ & 
         $53.43 {\scriptstyle \pm 2.01}$ &
         $64.82 {\scriptstyle \pm 2.24}$ &
         $60.54 {\scriptstyle \pm 5.30}$ &
         $76.50 {\scriptstyle \pm 1.36}$ &
         $88.42 {\scriptstyle \pm 0.50}$ &
         $86.98 {\scriptstyle \pm 1.27}$ \\ 
         
         GAT &
         $52.16 {\scriptstyle \pm 6.63}$ &
         $49.41 {\scriptstyle \pm 4.09}$ &
         $27.44 {\scriptstyle \pm 0.89}$ & 
         $40.72 {\scriptstyle \pm 1.55}$ &
         $60.26 {\scriptstyle \pm 2.50}$ &
         $61.89 {\scriptstyle \pm 5.05}$ &
         $76.55 {\scriptstyle \pm 1.23}$ &
         $87.30 {\scriptstyle \pm 1.10}$ & 
         $86.33 {\scriptstyle \pm 0.48}$ \\ 
         
         MLP &
         $80.81 {\scriptstyle \pm 4.75}$ &
         $85.29 {\scriptstyle \pm 3.31}$ &
         $36.53 {\scriptstyle \pm 0.70}$ & 
         $28.77 {\scriptstyle \pm 1.56}$ & 
         $46.21 {\scriptstyle \pm 2.99}$ &
         $81.89 {\scriptstyle \pm 6.40}$ &
         $74.02 {\scriptstyle \pm 1.90}$ &
         $87.16 {\scriptstyle \pm 0.37}$ &
         $75.69 {\scriptstyle \pm 2.00}$ \\  

         \bottomrule
         
    \end{tabular}
    }
    \vspace{-12pt}
    \label{tab:alpha}
\end{table*}

The results show that these models perform quite satisfactorily in the case of small datasets (e.g. Texas, Winsconsin), but their performance drops significantly when dealing with datasets with a very high amount of edges (e.g. Squirrel, Chameleon). In the latter cases, the accuracy values are even worse with respect to the simplest GNN benchmarks, such as GCN \cite{kipf2016semi} and GAT \cite{velivckovic2017graph}. This is caused by the normalization method being not enough to stabilize the diffusion process when extremely dense datasets are involved. The inability of handling such datasets properly lead to trying out new kinds of normalization, until coming up with the idea of using the squared diagonal matrices, which turned out to be the technique working best in practice for every graph configuration. 

\chapter{Ablation Studies}

\section{Diag-NSD vs BC-s-Diag-$\alpha$NLSD Components}

An ablation study was conducted to assess the importance of different components in the layer-wise operations. This study compares the behavior of the Diagonal NSD model \cite{bodnar2022neural} (with a linear Laplacian) to one of the initial models developed for this project, namely the Diagonal Bounded Confidence model with $\alpha$-normalization and a single learned threshold value for all edges in the graph.

The ablation study focuses on examining the impact of the following components:

 \begin{itemize}
     \item \textbf{Layer-dependent Laplacian}:     when this component is present, a different set of restriction maps is learned for each layer, meaning that a distinct MLP (from Proposition \ref{prop_mlp}) is learned. Otherwise, the same restriction maps are used for all layers.
     \item $\textbf{W}_{2}$: this component refers to the presence or absence of the right weight matrix.
     \item \textbf{Activation function}: this component pertains to the inclusion or exclusion of the activation function in the layer-wise operations, corresponding to $\sigma$ in Equations \ref{eq:nonlin_expression_general} and \ref{eq:diffusion_bodnar_discrete}.
 \end{itemize}
 
The experiments were performed on the Chameleon dataset with the same methodology as the real-world experiments detailed in Table \ref{tab:alpha}. This involved hyperparameter tuning and selecting the test accuracy associated with the best validation accuracy.

\begin{table*}[th]
    \centering
    \caption{Ablation study on Chameleon, with a comparison between a model with linear Laplacian (\textbf{Diag-NSD}) and one with nonlinear Laplacian (\textbf{BC-s-Diag-$\alpha$NLSD}).}
    \resizebox{1.0\textwidth}{!}{%
    \begin{tabular}{ccccc}
    \toprule 
         \textbf{Layer-dependent Laplacian} &  
         $\textbf{W}_{2}$ & 
         \textbf{Activation function $\sigma$} &
         \textbf{Diag-NSD} &
         \textbf{BC-s-Diag-$\alpha$NLSD} \\
         \midrule

         \xmark & \xmark & \xmark & 
         ${68.68} {\scriptstyle \pm 1.73}$ &          $64.69 {\scriptstyle \pm 1.34}$ \\

         \xmark &  &  &          ${67.47} {\scriptstyle \pm 1.69}$ &          $65.79 {\scriptstyle \pm 1.53}$ \\

          & \xmark &  &          ${67.06} {\scriptstyle \pm 1.59}$ &          $64.52 {\scriptstyle \pm 1.72}$ \\

          &  & \xmark &          ${68.68} {\scriptstyle \pm 2.15}$ &          $65.46 {\scriptstyle \pm 1.36}$ \\

         \xmark & \xmark &  &          ${67.06} {\scriptstyle \pm 1.23}$ &          $64.41 {\scriptstyle \pm 1.70}$ \\

          & \xmark & \xmark &          ${67.89} {\scriptstyle \pm 1.08}$ &          $64.50 {\scriptstyle \pm 1.39}$ \\

         \xmark &  & \xmark &          ${67.61} {\scriptstyle \pm 1.67}$ &          $66.03 {\scriptstyle \pm 1.31}$ \\
         
          &  &  &          ${67.87} {\scriptstyle \pm 1.30}$ &          $65.21 {\scriptstyle \pm 1.48}$ \\   
         \midrule
         \bottomrule
         
    \end{tabular}
    }
    \vspace{-12pt}
    \label{tab:ablation}
\end{table*}

\

Surprisingly, the results of the ablation study demonstrated that the presence or absence of the considered  components do not significantly impact the behavior of either model: the findings revealed comparable performance across different configurations.

\end{document}